\def\eqref#1{equation~\ref{#1}}
\def\1{\bm{1}}
\DeclareMathAlphabet{\mathsfit}{\encodingdefault}{\sfdefault}{m}{sl}
\SetMathAlphabet{\mathsfit}{bold}{\encodingdefault}{\sfdefault}{bx}{n}
\newcommand{\mc}[1]{\mathcal{#1}}
\newcommand{\sample}{\tilde{x}}
\newcommand\msmall[1]{\mbox{\scriptsize\ensuremath{#1}}}
\newcommand{\spm}{{\msmall{\pm}}}
\newcommand{\X}{\mathcal{X}}
\newcommand{\D}{\mathcal{D}}
\newcommand{\ub}{\mathit{ub}}
\newcommand{\lb}{\mathit{lb}}
\newcommand{\CL}{\text{CL}}
\newtheorem{theorem}{Theorem}[section]
\newtheorem{lemma}[theorem]{Lemma}
\newtheorem{corollary}[theorem]{Corollary}
\newtheorem{example}[theorem]{Example}
\definecolor{darkgreen}{rgb}{0.0, 0.5, 0.0}
\definecolor{darkred}{rgb}{0.7, 0.0, 0.0}
\newcommand{\miha}[1]{{\color{black}{#1}}}
\newcommand{\add}[1]{{\color{black}{#1}}}
\newcommand{\cameraready}[1]{{\color{black}{#1}}}
\newcommand{\phishing}{URL}
\newcommand{\faults}{FSP}
\newcommand{\wids}{WiDS}
\newcommand{\lcld}{LCLD}
\newcommand{\news}{News}
\newcommand{\heloc}{Heloc}
\newcommand{\tvae}{TVAE}
\newcommand{\gdgm}{P-DGM}
\newcommand{\pac}{PAC}
\newcommand{\sgd}{SGD}
\newcommand{\adam}{Adam}
\newcommand{\rmsprop}{RMSProp}
\title{How Realistic Is Your Synthetic Data? \\Constraining Deep Generative Models for Tabular Data}
    \author{Mihaela C\u{a}t\u{a}lina Stoian\thanks{Equal contribution.}\textsuperscript{$\hspace{0.5em},1$}, Salijona Dyrmishi\textsuperscript{$*,2$}, Maxime Cordy\textsuperscript{$2$},\\ \textbf{Thomas Lukasiewicz}\textsuperscript{$3,1$} , \textbf{Eleonora Giunchiglia}\textsuperscript{$3$} \\
\textsuperscript{$1$}University of Oxford, \textsuperscript{$2$}University of Luxembourg,  \textsuperscript{$3$}Vienna University of Technology}
\begin{document}

\maketitle

\begin{abstract}

 Deep Generative Models (DGMs) have been shown to be powerful tools for generating tabular data, as they have been increasingly able to capture the complex distributions that characterize them.  However, to generate realistic synthetic data, it is often not enough to have a good approximation of their distribution, as it also requires compliance with constraints that encode essential background knowledge on the problem at hand. In this paper, we address this limitation and show how DGMs for tabular data can be transformed into Constrained Deep Generative Models (C-DGMs), whose generated samples are guaranteed to be compliant with the given constraints. This is achieved by automatically parsing the constraints and transforming them into a Constraint Layer (CL) seamlessly integrated with the DGM. Our extensive experimental analysis with various DGMs and tasks reveals that standard DGMs often violate constraints, some exceeding 95\% non-compliance, while their corresponding C-DGMs are never non-compliant. Then, we quantitatively demonstrate that, at training time, C-DGMs are able to exploit the background knowledge expressed by the constraints to outperform their standard counterparts with up to \add{6.5\%} improvement in utility and detection. Further, we show how our $\CL$ does not necessarily need to be integrated at training time, as it can be also used as a guardrail at inference time, still producing some improvements in the overall performance of the models. Finally, we show that our CL does not hinder the sample generation time of the models.

\end{abstract}

\section{Introduction}

Synthetic data generation represents an important area of machine learning (ML) due to its numerous applications in the real world. Indeed, synthetic data have been increasingly used to augment real data to improve the predictive performance of ML models (see, e.g.,~\cite{han2005imbalance}), to remedy data scarcity (see, e.g.,~\cite{Choi2017_scarcity}), and to promote fairness (see, e.g.,~\cite{breugel2021_fairness}), and they are now even used to generate brand-new datasets to ensure privacy in sensitive settings~(see, e.g., \cite{Jordon2019privacy,yoon2020_privacy,lee2021invertible}).

Deep Generative Models (DGMs) have been shown to be powerful tools for generating tabular data,  as they have been progressively more able to capture the complex distributions that characterize such data (see, e.g.,~\cite{kim2023stasy}). However, for generating realistic tabular data,  it is insufficient to just learn a good distribution approximation; it is necessary to create samples that obey a set of constraints 
expressing background knowledge about known characteristics of the features and/or existing relationships among them.
For example, if we are generating data from a clinical trial dataset, then, for every synthetic sample, we surely want the value associated with the \textit{``maximum level of hemoglobin recorded"} column to be greater than or equal to the one associated with the \textit{``minimum level of hemoglobin recorded"} column. Indeed, any sample violating such constraint is not realistic. %
Existing methods, while excellent at capturing complex distributions, are still not able to learn even from this simple background knowledge, and thus do not provide any guarantee of constraint satisfaction.  In addition, currently, no work incorporates background knowledge in DGMs for tabular data except for GOGGLE~\citep{liu2022goggle}, which, however, is only able to inject very simple background knowledge about existing correlations among features.

In this paper, we address this limitation, and we show how to integrate any background knowledge that can be expressed as a set of linear inequalities into different standard DGMs for tabular data. To this end, we introduce a novel approach able to transform different DGMs into corresponding Constrained Deep Generative Models (C-DGMs), i.e., DGMs whose generated samples are guaranteed to be compliant with the set of user-defined constraints. Our method takes as input linear inequality constraints and a DGM, and then automatically parses the constraints and generates a differentiable Constraint Layer (CL) that can be seamlessly integrated with the DGM.  Our method thus returns a novel model, the C-DGM, whose sample space is guaranteed to be compliant with the constraints. 
To evaluate our approach, we conduct an extensive experimental analysis that involves testing Wasserstein GAN (WGAN)~\citep{Arjovsky2017_WGAN}, CTGAN~\citep{xu2019_CTGAN}, TableGAN~\citep{park2018_tableGAN}, \add{TVAE~\citep{xu2019_CTGAN}, and GOGGLE~\citep{liu2022goggle}} on a collection of six tasks for which rich background knowledge is available: our datasets are annotated with up to 31 constraints, each including up to 17 different features. 
To this end, we first show that these models often violate the constraints, with WGAN even generating 100\% of non-compliant samples for one dataset, and \add{five} models generating more than 95\% non-compliant samples for another one (see Table~\ref{tab:cons-sat}). 
Then, we quantitatively demonstrate that the addition of the constraints in the topology of the network improves the performance both in terms of detection (i.e., how well the generated data matches the real data distribution) and in terms of utility (i.e., how well the generated data can replace real data to train ML models), with up to \add{6.5\%} improvement over all datasets. 
This validates the principle that generating compliant samples contributes to improving their quality: a challenging problem for which the state-of-the-art progresses only by small successive improvements \citep{liu2022goggle,kim2023stasy}.
Further, we show how $\CL$ does not necessarily need to be added at training time, but it can simply be incorporated at inference time as a guardrail. This is important when the model is available only as a black-box system without any possibility to act on it. Finally, we show how $\CL$ does not hinder the sample generation time of the models. This paper thus follows the same principles outlined in~\cite{giunchiglia2023manifesto}, where the authors advocate for a more requirements-driven machine learning, where performance is just one of the requirements defining a model has to satisfy (others might include safety, fairness, robustness etc.).

\textbf{Contributions:} (i) We show that standard DGMs often generate synthetic data that are not aligned with the available background knowledge. (ii) We develop a method able to take as input any set of constraints expressed as linear inequalities and automatically create a differentiable constraint layer that can be seamlessly integrated with DGMs. (iii) We prove that the samples generated with C-DGMs are guaranteed to be compliant with the given constraints. This is the first method that is able to incorporate complex background knowledge into DGMs and guarantee that it is always satisfied. (iv) We show that C-DGMs outperform their standard counterparts in terms of both utility and detection. 
(v) We display how \CL{} can be used as a guardrail at inference time.
(vi) We quantitatively demonstrate that \CL{} has negligible impact on the samples' generation time.

\section{Problem Statement}

Let $p_X$ be an unknown distribution over $X \in \mathbb{R}^D$, and let $\mc{D}$ be the training dataset consisting of $N$ i.i.d. samples drawn from $p_X$. The goal of standard generative modeling is to learn from $\mc{D}$ the parameters $\theta$ of a generative model such that the model distribution $p_{\theta}$ approximates $p_X$.

In {\sl constrained generative modeling}, we assume to have access to a set of constraints expressing some background knowledge about the sample space of $p_X$, i.e., stating which samples are admissible and which are not. The goal is to learn the parameters $\theta$ of a generative model such that (i) the model distribution $p_{\theta}$ approximates $p_X$, and (ii) the sample space of $p_{\theta}$ is compliant with the constraints. 

Let $\Pi$ 
be a finite set of 
constraints expressing such background knowledge, where each constraint is a linear inequality over a set of {\sl variables} $\X = \{x_k \mid k=1, \ldots, D\}$, each variable uniquely corresponding to a feature of the dataset, and for this reason in the following we do not distinguish between variables and their corresponding features. Each constraint has the following form:
\begin{equation}\label{eq:constrge}
\sum_k w_kx_k + b \unrhd 0, 
\end{equation}
with $w_k \in \mathbb{R}$, $b \in \mathbb{R}$,  $\unrhd \in \{\ge,>\}$, and $x_k$ representing a continuous feature. Indeed, any  set of linear inequalities using $\unrhd \in \{<,\le,=,\ge,>\}$ can be easily converted into an equivalent one in which $\unrhd \in \{\ge,>\}$. In (\ref{eq:constrge}), when $\unrhd$ is $>$,  we say that the linear inequality is {\sl strict}. 
\begin{example}
    $\Pi = \{x_1 - x_2\ge 0, x_2 - 5 > 0\}$ expresses that for every generated sample $\sample$, the 1$^{st}$ feature must be greater than or equal to the 2$^{nd}$, and that its 2$^{nd}$ feature must be greater than $5$.
\end{example} 
\add{Linear constraints represent an important class of constraints---as underlined by the existence of entire fields dedicated to its study (see e.g., linear programming)---and they possess many favourable properties, such as: (i) they are logically complete, as inconsistencies can be determined by computing linear combinations of the constraints themselves~\citep{farkas}, (ii) it is possible to compile them in a backtrack free representation that allows for computing satisfying samples in linear time in the size of the compiled representation~\citep{dechter1999}, and (iii) they always define a convex space.}

A {\sl sample} generated by a DGM is an assignment to the variables in $\X$. A sample $\sample$ {\sl satisfies}: 
\begin{enumerate}
    \item the constraint  (\ref{eq:constrge}) 
    if $\sum_k w_k\sample_k + b \unrhd 0$, where $\sample_k$ is the value associated with the feature $x_k$, 
    \item a set $\Pi$ of constraints if it satisfies all the constraints in $\Pi$.
\end{enumerate}
Contrarily, if $\sample$ does not satisfy a constraint (resp., $\Pi$) then  $\sample$ {\sl violates} the constraint (resp., $\Pi$). A set $\Pi$ of constraints is {\sl satisfiable} if there exists a sample satisfying $\Pi$. 
A DGM model $m$ is {\sl compliant} with $\Pi$ if all its generated samples satisfy $\Pi$. 

Given a DGM $m$ generating samples that possibly do not satisfy $\Pi$, our goal is to create a C-DGM noted C-$m$ such that: (i) C-$m$ is compliant with $\Pi$, and (ii) for each sample $\sample$ generated by $m$, C-$m$ generates $\sample'$ such that $\sample'$ satisfies $\Pi$ and that, intuitively, is optimal in the sense that it minimally differs from $\sample$ \add{(while taking into account the user preferences on which features should be changed).}

In the following, given a constraint $\phi$ of form (\ref{eq:constrge}),
a variable $x_k$ {\sl appears positively} (resp., {\sl negatively}) in $\phi$ if $w_k\! >\! 0$ (resp., $w_k\! < \!0$). A~variable {\sl appears in $\phi$} if it appears positively or negatively in $\phi$.

\section{Constrained Deep Generative Models}

To achieve our goal, we build a differentiable {\sl constraint layer} (CL) that (i) can be seamlessly integrated with multiple DGMs, (ii) guarantees the satisfaction of the constraints, and (iii) guarantees a possibly optimal output that minimally changes the initial DGM predictions. \add{Given a generic DGM, CL can be added at training time right after the layer generating the samples. CL allows the gradients to backpropagate through it and thus the network can learn to exploit the background knowledge it encodes. 
To better ground the discussion,}  consider Figure~\ref{fig:overview}, where we can see an overview of how
\begin{wrapfigure}{r}{0.46\textwidth}
    \vspace{-0.1cm}
    \centering
    \includegraphics[trim={21cm 17.5cm 19.5cm 5cm},clip,width=\linewidth]{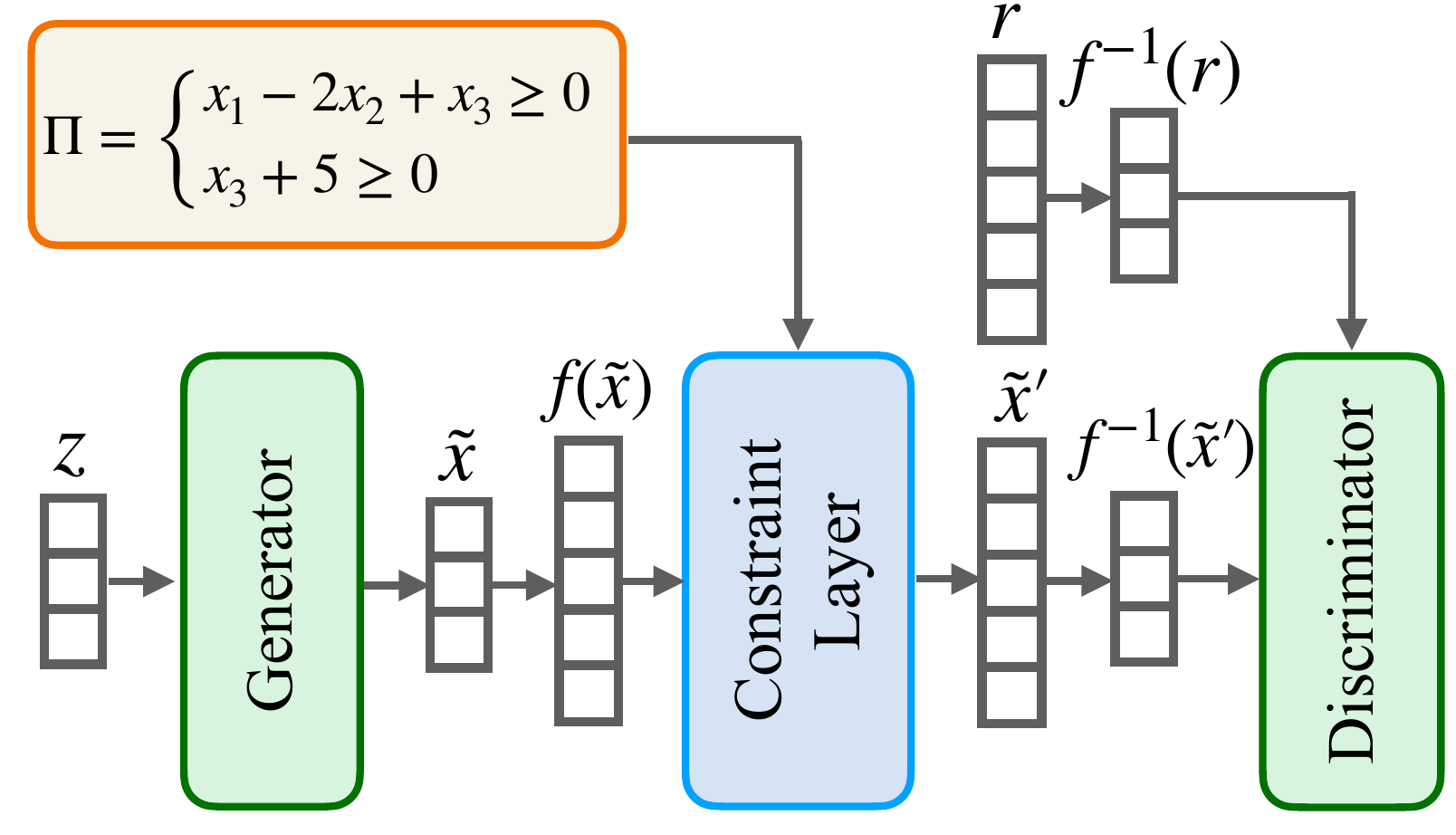}
    \caption{Overview on how to integrate $\CL$ into a GAN-based model.}
    \vspace{-0.2cm}
    \label{fig:overview}
\end{wrapfigure}
     to integrate our layer with a GAN-based model. In the Figure,  the noise vector $z$ is fed into the generator, which in turn outputs a sample $\sample$. Often, $\sample$ needs to be transformed using a pre-defined mapping $f$ before using it, as the output space of the generator may not coincide with the feature space of the real data. 
 Then, $f(\sample)$ is passed to $\CL$, which returns the corrected $\sample'$---now compliant with the constraints $\Pi$---and then   $f^{-1}(\sample')$ is passed to the discriminator together with $f^{-1}(r)$, where $r \in \D$ is a real data point for our dataset. In the following, we focus on the construction of $\CL$, and, for ease of presentation and without loss of generality, we assume that the output space of the generator coincides with the feature space of the real dataset. Thus, we assume that the generator generates a sample $\sample$, which then gets fed directly to $\CL$, which in turn returns $\CL(\sample)$, the output of the C-DGM.

Given a sample $\sample$, the basic idea of the constraint layer is to incrementally compute the value $\sample_i$ of the $i$th feature in the sample, minimally changing it whenever $\sample_i$ is not compatible with the values of the already analyzed features and the given constraints. To this end,  
let $\lambda: \X \mapsto [1, D]$ be a {\sl variable ordering function} injectively mapping each variable in $\X$ to an integer in $[1, D]$. Such ordering is user-defined and defines the order of computation of the features. 
To ease the notation,
and without loss of generality, 
from now on we denote with $x_i$ the variable with order $i$ (and thus $\lambda(x_i) = i$). 
Then, given $\lambda$, we associate to each variable $x_i$ a set of constraints $\Pi_i$ inductively defined as follows: 
\begin{enumerate}[leftmargin=*]
    \item if $i = D$, then $\Pi_i = \Pi$, and 
    \item if $i < D$ and $j=i+1$, then 
$$
\Pi_i = \Pi_j\setminus (\Pi^-_j \cup \Pi^+_j)  \cup \{red_j(\phi^1, \phi^2) \mid \phi^1 \in \Pi^-_j, \phi^2 \in \Pi^+_j\},
$$ 
where (i) $\Pi^-_j$ (resp., $\Pi_j^+$) is the subset of $\Pi_j$ where $x_j$ appears negatively (resp., positively) and (ii) $red_j(\phi^1,\phi^2)$
    is the {\sl reduction of} $\phi^1$ {\sl and} $\phi^2$ {\sl over} $x_j$ defined, assuming 
    $\phi^1 = \sum_k w^1_kx_k + b^1 \unrhd^1 0$ and 
    $\phi^2 =\sum_k w^2_kx_k + b^2 \unrhd^2 0$ with $\unrhd^1,\unrhd^2 \in \{\ge,>\}$, as
    
        $$
    \sum_{k\neq j} (w^1_k |w^2_j| +w^2_k |w^1_j|) x_k + b^1 |w^2_j| + b^2 |w^1_j| \unrhd 0,
    $$
    in which $\unrhd$ is $\unrhd^1$ if $\unrhd^1=\unrhd^2$, and $\unrhd$ is $>$ otherwise.

\end{enumerate}

\begin{example} Continuing with the example, $\Pi = \{x_1 - x_2\ge 0, x_2 - 5 > 0\}$, 
$\Pi_2 = \Pi$, and
$\Pi_1 = \{red_2(x_1-x_2 \ge 0, x_2 -5 >0)\} = \{x_1 - 5 > 0\}$.
\end{example} 

Consider now a generic model $m$ and a sample $\sample$ generated by $m$.
To define $\CL(\sample)$, we first compute the upper bounds (resp., lower bounds) associated with  $\sample_i$ given the constraints and the computed values $\CL(\sample)_j$ associated with the features $x_j$ with $j < i$.
Indeed, each constraint $\phi$ of the form (\ref{eq:constrge}) in $\Pi_i^+$ (resp.,  $\Pi_i^-$) defines a lower (resp., upper) bound on $x_i$ given by
the expression $\varepsilon^\phi_i=
-\sum_{k\neq i} (w_k/w_i) x_k - b/w_i
$, as $\phi$ is equivalent to $x_i - \varepsilon^\phi_i \unrhd 0$ (resp., $-x_i +\varepsilon^\phi_i\unrhd 0$).

Then, for each $i=1,\ldots,D$, the values associated with the {\sl least upper bound} ($\ub_i$) and  
{\sl greatest lower bound} ($\lb_i$) for the feature $i$ given the values of $\CL({\sample})_1,\ldots,\CL({\sample})_{i-1}$
are defined to be:\footnote{
We assume the function $\min(V)$ over a finite set $V$ of values in $\mathbb{R}$ to be defined as $\min(\emptyset) = +\infty$, and $\min(\{v\}\cup V') = v$ if $v \leq \min(V')$ and $\min(V')$ otherwise. Analogously for the function $\max(V)$.} %
\begin{equation}\label{eq:bounds}
\ub_i = \min(\bigcup_{\phi \in \Pi^-_i}\varepsilon_i^\phi(\CL({\sample}))) \qquad \qquad
\lb_i = \max(\bigcup_{\phi \in \Pi^+_i} \varepsilon_i^\phi(\CL({\sample}))),
\end{equation}
where $\varepsilon_i^\phi(\CL(\sample))$ corresponds to $\varepsilon_i^\phi$ instantiated with the values of $\CL(\sample)$.
Clearly, the fact that $\Pi$ is satisfiable and the definition of $\ub_i$ and $\lb_i$ in (\ref{eq:bounds}) which incorporates the values  of $\CL(\sample)_j$ for $j<i$, ensures $\lb_i \leq \ub_i$ for any $i = 1,\ldots, D$ (as formally stated below). However, assuming, e.g., that for a sample $\sample$, $\sample_i \le \lb_i$, it may not be possible to set the value of the feature $x_i$ to be exactly $\lb_i$, because there exists a strict linear inequality $x_i > \lb_i$ in $\Pi_i^+$. 
Indeed, in these cases, there does not exist a value $v$ for $x_i$ with $v > \lb_i$ and minimum $|v - \sample_i|$, and the only way out is to select $v = \lb_i + \epsilon$, where $\epsilon > 0$ is an arbitrarily chosen small value. 
\begin{wrapfigure}{l}{0.35\textwidth}
    \centering
    \includegraphics[trim={24cm 15cm 24cm 8.2cm},clip,width=\linewidth]{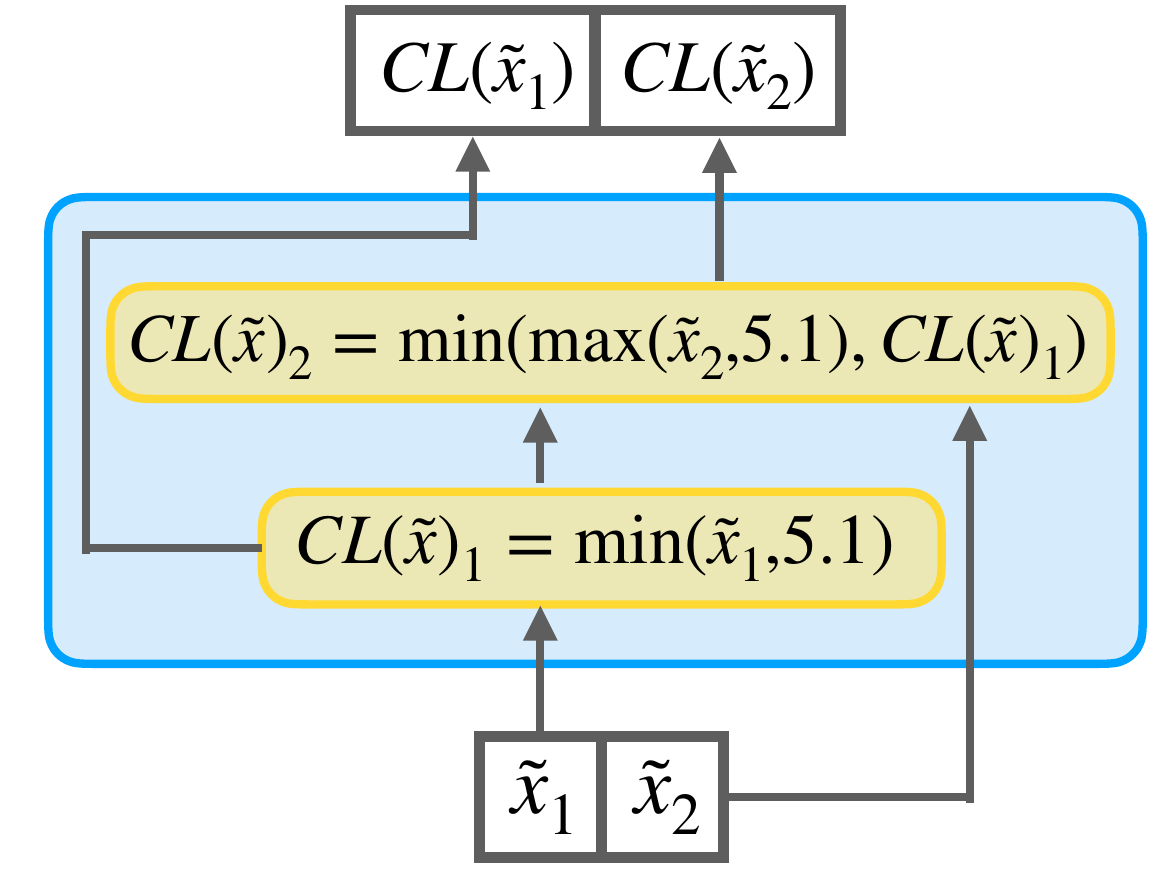}
    \caption{$\CL$ in Example~\ref{ex:value_change}.}
    \vspace{-0.6cm}
    \label{fig:cl_example}
\end{wrapfigure}
For this reason, 
for every $i=1, \ldots, D$,   $\CL(\sample)_i$ is defined as
\begin{equation}\label{eq:constr_layer}
   \CL(\sample)_i = \textstyle\min^i(\max^i(\sample_i, \lb_i), \ub_i),
\end{equation}
where $\max^i(\sample_i, \lb_i)$ is equal to $v=\max(\sample_i, \lb_i)$
if there does not exist a strict inequality $\phi$ in  $\Pi_i^+$
 such that
$v = \varepsilon_i^\phi(\CL({\sample}))$, and 
is equal to $v+\epsilon$ with $\epsilon>0$ small enough to ensure
$\lb_i + \epsilon \le \ub_i$, otherwise.
The analogous intuition applies to $\min^i$.

\begin{example}\label{ex:value_change}
    Continuing with the example, $\Pi = \{x_1 - x_2\ge 0, x_2 - 5 > 0\}$, 
$\Pi_2 = \Pi$, 
$\Pi_1 = \{x_1 - 5 > 0\}$. If $\sample_1 = 7$ and $\sample_2 = 3$, then $\CL(\sample)_1 = 7$ and
$\CL(\sample)_2 = 5.1$, while if $\sample_1 = \sample_2 = 3$, then $\CL(\sample)_1 = \CL(\sample)_2 = 5.1$, such values computed with $\epsilon = 0.1$.
\end{example}

Notice that for every sample it is always possible to compute, at least in theory, a value for $\epsilon$ such that $\CL(\sample)$ satisfies the constraints in $\Pi$. In practice, in the implementation, we fix $\epsilon$ to be the minimum representable positive float.

\begin{theorem}\label{th:satisfaction}
Let $m$ be a deep generative model. Let $\Pi$ be a satisfiable and finite set of constraints over the sample space. Let $\lambda$ be a variable ordering and $\CL$ be the constraint layer built from $\lambda$ and $\Pi$. Then, the model C-$m$ obtained by incorporating $\CL$ in $m$ is compliant with $\Pi$.
\end{theorem}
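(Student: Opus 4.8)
The plan is to prove that for every sample $\sample$ produced by $m$, the corrected output $\CL(\sample)$ satisfies every constraint of $\Pi$; this is exactly what compliance of C-$m$ requires, since compliance means all generated samples satisfy $\Pi$. I would argue by induction on the feature index $i=1,\ldots,D$, following the order of computation fixed by $\lambda$, establishing the invariant
\[
P(i):\quad (\CL(\sample)_1,\ldots,\CL(\sample)_i)\text{ satisfies every constraint in }\Pi_i .
\]
Since $\Pi_D=\Pi$, the instance $P(D)$ is precisely the claim.

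Before the induction I would record two structural facts. First, by induction on the definition of the $\Pi_i$, every constraint in $\Pi_i$ mentions only the variables $x_1,\ldots,x_i$: the base $\Pi_D=\Pi$ is immediate, and the step removes from $\Pi_j$ all constraints in which $x_j$ appears ($\Pi_j^-\cup\Pi_j^+$) and replaces them by the reductions $red_j(\phi^1,\phi^2)$, which by construction no longer contain $x_j$. Hence, when $\CL(\sample)_i$ is computed, the bounds $\lb_i$ and $\ub_i$ of (\ref{eq:bounds}) depend only on the already-fixed values $\CL(\sample)_1,\ldots,\CL(\sample)_{i-1}$, so the construction is well defined. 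Second, and this is the crux, I would verify by direct algebra that $red_j(\phi^1,\phi^2)$ is exactly the Fourier--Motzkin elimination of $x_j$ between the upper bound induced by $\phi^1\in\Pi_j^-$ and the lower bound induced by $\phi^2\in\Pi_j^+$: an assignment satisfies $red_j(\phi^1,\phi^2)$ iff $\varepsilon_j^{\phi^2}\le\varepsilon_j^{\phi^1}$ there, with the inequality strict exactly when $\unrhd$ is $>$, matching the rule that $\unrhd$ becomes $>$ unless both inputs are non-strict.

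With these in hand the inductive step runs as follows. Assuming $P(i-1)$, the partial assignment satisfies $\Pi_{i-1}$, which contains (a) the constraints of $\Pi_i$ not mentioning $x_i$, which carry over unchanged into $\Pi_{i-1}$ and are therefore already satisfied, and (b) every reduction $red_i(\phi^1,\phi^2)$ with $\phi^1\in\Pi_i^-,\ \phi^2\in\Pi_i^+$. By the crux fact, (b) evaluated at the fixed values gives $\varepsilon_i^{\phi^2}(\CL(\sample))\le\varepsilon_i^{\phi^1}(\CL(\sample))$ for all such pairs, and taking the $\max$ over $\phi^2$ and the $\min$ over $\phi^1$ yields $\lb_i\le\ub_i$. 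The interval is thus nonempty, and clamping $\sample_i$ into it via (\ref{eq:constr_layer}) produces a value with $\lb_i\le\CL(\sample)_i\le\ub_i$, which, by the definitions of $\lb_i,\ub_i$ as greatest lower and least upper bounds, satisfies every constraint of $\Pi_i^+\cup\Pi_i^-$; with (a) this gives $P(i)$. The base case $P(1)$ is the same argument, except that $\lb_1,\ub_1$ are genuine constants: every constraint of $\Pi_1$ is a nonnegative combination of constraints of $\Pi$, hence a logical consequence of $\Pi$, so $\Pi_1$ is satisfiable and $\lb_1\le\ub_1$ (any purely numeric constraints in $\Pi_1$ then holding automatically); empty bound sets are covered by the $\min(\emptyset)=+\infty$, $\max(\emptyset)=-\infty$ convention.

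I expect the main obstacle to be the bookkeeping for strict inequalities rather than the linear algebra. Two points need care: (i) when the binding bound is strict, $\lb_i$ (resp.\ $\ub_i$) must be excluded, which is what the $\epsilon$-perturbation in $\max^i$ and $\min^i$ achieves; here I would show that the strictness flag propagated by the $\unrhd$-rule of $red$ forces $\lb_i<\ub_i$ strictly in precisely those cases, so that an $\epsilon>0$ with $\lb_i+\epsilon\le\ub_i$ exists; and (ii) I must check that the outer $\min^i$ cannot push the value back below $\lb_i$, which follows from $\lb_i\le\ub_i$ together with the $\min$/$\max$ conventions. These verifications are routine once the Fourier--Motzkin correspondence and the nonemptiness $\lb_i\le\ub_i$ are established.
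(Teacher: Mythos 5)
Your proof is correct and rests on exactly the same two pillars as the paper's own proof: (i) the reduction $red_i(\phi^1,\phi^2)$ is precisely the Fourier--Motzkin elimination of $x_i$, so its membership in $\Pi_{i-1}$ --- already satisfied by the previously computed coordinates --- forces $\lb_i \le \ub_i$, with strict inequality exactly when a strict constraint is binding (which is what lets the $\epsilon$-perturbation fit below $\ub_i$); and (ii) the clamping in (\ref{eq:constr_layer}) then satisfies every constraint of $\Pi_i^+ \cup \Pi_i^-$, while constraints of $\Pi_i$ not mentioning $x_i$ are inherited from $\Pi_{i-1}$. The only difference is organizational --- you run a forward induction on the feature index with a prefix invariant $P(i)$, whereas the paper inducts on the number of variables occurring in $\Pi$ and applies its hypothesis to $\Pi_{k-1}$ regarded as a smaller constraint system --- but this is the same argument under different bookkeeping.
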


See proof in Appendix~\ref{app:proof_satisfaction}, which contains also the proof of the following corollary.

\begin{corollary}\label{corollary}
In the hypotheses of the theorem, for every sample $\sample$ produced by $m$ and for every $i = 1, \ldots, D$, it is always the case that $\ub_i \ge \lb_i$, and thus also
\begin{equation}\label{eq:comm}
  \CL(\sample)_i = \textstyle\min^i(\max^i(\sample_i, \lb_i), \ub_i) = \textstyle\max^i(\min^i(\sample, \ub_i), \lb_i).  
\end{equation}
\end{corollary}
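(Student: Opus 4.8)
The plan is to prove the two assertions in sequence: first the interval non-emptiness $\ub_i \ge \lb_i$, and then, using it, the commutation identity (\ref{eq:comm}). The conceptual engine behind the first assertion is that the reduction operator is exactly Fourier--Motzkin elimination of a single variable, so that $\Pi_i$ is the projection of $\Pi$ onto $\{x_1,\dots,x_i\}$ obtained by eliminating $x_D,\dots,x_{i+1}$ in that order. I would first isolate the algebraic heart of the matter: for $\phi^+\in\Pi^+_i$ and $\phi^-\in\Pi^-_i$, the reduced constraint $red_i(\phi^-,\phi^+)\in\Pi_{i-1}$, which no longer mentions $x_i$, is precisely the positive combination $|w^+_i|\,\phi^- + |w^-_i|\,\phi^+$ (the $x_i$ terms cancel because $|w^+_i|\,w^-_i + |w^-_i|\,w^+_i = 0$), so it is a logical consequence of $\Pi_i$. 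Dividing it by the positive quantity $|w^+_i|\,|w^-_i|$ and regrouping reproduces exactly the difference $\varepsilon_i^{\phi^-}-\varepsilon_i^{\phi^+}$; hence $red_i(\phi^-,\phi^+)$ evaluated on any assignment to $x_1,\dots,x_{i-1}$ holds if and only if $\varepsilon_i^{\phi^+}\le\varepsilon_i^{\phi^-}$ there, with the inequality strict precisely when $\phi^+$ or $\phi^-$ is, which matches the rule combining $\unrhd^1,\unrhd^2$ in the definition of $red_i$.

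With this identity in hand, I would prove by induction on $i=1,\dots,D$ the invariant that the prefix $(\CL(\sample)_1,\dots,\CL(\sample)_i)$ satisfies every constraint of $\Pi_i$ (each of which involves only $x_1,\dots,x_i$). For the base case, since every constraint of $\Pi_1$ is a positive combination of constraints of $\Pi$, any solution of the satisfiable set $\Pi$ restricts to a solution of the one-variable set $\Pi_1$, whose feasible set is therefore a nonempty interval; this forces $\lb_1\le\ub_1$ and lets (\ref{eq:constr_layer}) pick a value of $x_1$ inside it. For the inductive step, the invariant at $i-1$ gives that the prefix satisfies every $red_i(\phi^-,\phi^+)\in\Pi_{i-1}$, which by the identity above means $\varepsilon_i^{\phi^+}(\CL(\sample))\le\varepsilon_i^{\phi^-}(\CL(\sample))$ for all such pairs; by (\ref{eq:bounds}) and the conventions $\max(\emptyset)=-\infty$, $\min(\emptyset)=+\infty$ this is exactly $\lb_i\le\ub_i$. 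It then remains to verify that the $\CL(\sample)_i$ chosen by (\ref{eq:constr_layer}) satisfies all of $\Pi_i$: constraints not mentioning $x_i$ already lie in $\Pi_{i-1}$ and are inherited, while those in $\Pi^+_i\cup\Pi^-_i$ hold because $\CL(\sample)_i\in[\lb_i,\ub_i]$ with the $\epsilon$-adjustments of $\min^i,\max^i$ accounting exactly for the strict bounds (here $\epsilon$ is small enough that $\lb_i+\epsilon\le\ub_i$, which is consistent because a degenerate interval carrying a strict bound is already ruled out by satisfiability, forcing $\lb_i<\ub_i$ there). Taking $i=D$ closes the induction; the intermediate conclusion $\lb_i\le\ub_i$ for every $i$ is the corollary's first claim.

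For the commutation (\ref{eq:comm}), I would first record the order-theoretic clamp identity: whenever $\lb\le\ub$, one has $\min(\max(s,\lb),\ub)=\max(\min(s,\ub),\lb)$, proved by the split $s\le\lb$, $\lb\le s\le\ub$, $s\ge\ub$, in each of which both sides equal the clamp of $s$ to $[\lb,\ub]$. Applying this with $\lb=\lb_i$, $\ub=\ub_i$ (legitimate by the first claim) settles the plain operators. To lift it to $\min^i,\max^i$, I would note that the $\epsilon$-adjustment alters a result only when the clamp lands exactly on a strict bound, and then re-run the same three-case split refined by which strict bound (if any) is active; using $\lb_i+\epsilon\le\ub_i$ to keep the bumped value feasible, both evaluation orders return the same point, namely the feasible value of $x_i$ closest to $\sample_i$, which gives (\ref{eq:comm}).

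The main obstacle is the first claim: everything rests on recognizing $red_i$ as Fourier--Motzkin elimination and on the exact correspondence between $red_i(\phi^-,\phi^+)$ and the comparison $\varepsilon_i^{\phi^+}\le\varepsilon_i^{\phi^-}$, together with the strict/non-strict and $\epsilon$ bookkeeping needed to carry the inductive invariant through; this is precisely the content it shares with the proof of Theorem~\ref{th:satisfaction}. Once $\lb_i\le\ub_i$ is in place, the commutation reduces to the standard clamping identity, and only the $\epsilon$-adjustment requires a short extra case check.
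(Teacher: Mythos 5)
Your proposal is correct and takes essentially the same route as the paper: the paper proves $\ub_i \ge \lb_i$ as the ``first step'' of the inductive proof of Theorem~\ref{th:satisfaction} in Appendix~\ref{app:proof_satisfaction}, resting on exactly your key identity that $red_i(\phi^1,\phi^2)$ is equivalent to $\varepsilon_i^{\phi^1} - \varepsilon_i^{\phi^2} \unrhd 0$ (with strictness inherited from $\phi^1,\phi^2$) combined with the inductive hypothesis that $\CL(\sample)$ satisfies $\Pi_{i-1}$, and then obtains \eqref{eq:comm} from $\lb_i \le \ub_i$ just as you do. The only differences are presentational --- the paper argues the bound by contradiction where you derive it directly, and you make explicit the Fourier--Motzkin derivation of the reduction identity and the three-case clamp argument with its $\epsilon$ bookkeeping, both of which the paper asserts without detail.
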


In addition to satisfying the constraints, our goal was to guarantee $\CL(\sample)$ optimality, capturing the intuition that it has to minimally differ from $\sample$, i.e., that it is not possible to reduce the distance $|\CL(\sample)_i - \sample_i|$ for each $i = 1,\ldots,D$, while satisfying the constraints. Such intuition is formalized by saying
that $\CL(\sample)$ is {\sl optimal (with respect to $\Pi$)} if 
\begin{enumerate}
\item  $\CL(\sample)$ satisfies $\Pi$, and 
\item there does not exist another sample $\sample'$ different from $\CL(\sample)$ which satisfies $\Pi$ and such that for each $i = 1, \ldots, D$, 
$|\sample'_i - \sample_i| \leq |\CL(\sample)_i - \sample_i|$.
\end{enumerate}
Indeed, if a sample $\sample$ satisfies the constraints, we expect $\CL(\sample)$ to return $\sample$ and thus be optimal. \add{Notice that more than one value might satisfy the above definition of optimality. Indeed, given a sample $\sample$, the proposed definition has the advantageous properties that (i) if $\sample$ satisfies the constraints then $\CL(\sample) = \sample$ is the unique optimal solution, and (ii) if $\sample$ does not satisfy the constraints then it is not possible to get ``closer" to $\sample$ along all the  dimensions while satisfying the constraints. This rather relaxed definition of optimality allows us to consider different methods to compute $\CL(\sample)$, each corresponding to a preference about
the set of features which we want to minimally change.
In our case, we want to minimally change those features whose distribution is better approximated by the unconstrained DGM (see  Appendix~\ref{app:orderings} for the definition of two different policies to compute $\CL(\sample)$).
Indeed, though in a different setting, \cite{giunchiglia2022road} showed that computing  $\CL(\sample)$ without taking into account the confidence in the predictions consistently leads to worse performance.} 

\begin{theorem}\label{th:refl}
    Let $m$ be a deep generative model. Let $\Pi$ be a satisfiable and finite set of constraints over the sample space. Let $\lambda$ be a variable ordering and $\CL$ be the constraint layer built from $\lambda$ and $\Pi$. Let $\sample$ be a sample produced by $m$. 
    If $\sample$ satisfies $\Pi$, then $\CL(\sample) = \sample$ and $\CL(\sample)$ is optimal.
\end{theorem}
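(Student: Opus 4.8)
The plan is to prove the two claims in sequence: first the pointwise identity $\CL(\sample)_i = \sample_i$ for every $i$, from which $\CL(\sample) = \sample$ follows, and then the optimality of $\CL(\sample)$, which becomes almost immediate once the identity is in hand. The whole argument rests on two inductions, together with the observation that each reduction $red_j(\phi^1,\phi^2)$ is a \emph{non-negative} linear combination of $\phi^1$ and $\phi^2$ and is therefore satisfied whenever both of its arguments are.

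First I would establish the auxiliary fact that $\sample$ satisfies $\Pi_i$ for every $i = 1,\ldots,D$, by downward induction on $i$. The base case $i = D$ is exactly the hypothesis $\sample \models \Pi = \Pi_D$. For the inductive step, fix $j = i+1$ and a reduced constraint $red_j(\phi^1,\phi^2)$ with $\phi^1 \in \Pi_j^-$ and $\phi^2 \in \Pi_j^+$. Its left-hand side is $|w_j^2|$ times that of $\phi^1$ plus $|w_j^1|$ times that of $\phi^2$, with $|w_j^1|, |w_j^2| > 0$ and the coefficient of $x_j$ cancelling. Hence, since $\sample$ satisfies both $\phi^1$ and $\phi^2$ by the inductive hypothesis, $\sample$ satisfies the reduction; the strict case follows because a strictly positive weight multiplies a strict inequality. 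All remaining constraints of $\Pi_i$ are inherited unchanged from $\Pi_j$, so $\sample \models \Pi_i$.

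Next I would run an upward induction on $i = 1,\ldots,D$ to show $\CL(\sample)_i = \sample_i$. Assume $\CL(\sample)_k = \sample_k$ for all $k < i$; then for every $\phi \in \Pi_i$ the bound expression satisfies $\varepsilon_i^\phi(\CL(\sample)) = \varepsilon_i^\phi(\sample)$, since $\varepsilon_i^\phi$ only involves variables of index strictly below $i$. Because $\sample \models \Pi_i$ (from the first step), each $\phi \in \Pi_i^+$ enforces $\sample_i \ge \varepsilon_i^\phi(\sample)$ and each $\phi \in \Pi_i^-$ enforces $\sample_i \le \varepsilon_i^\phi(\sample)$ (strictly, when $\phi$ is strict); taking the maximum over $\Pi_i^+$ and the minimum over $\Pi_i^-$ yields $\lb_i \le \sample_i \le \ub_i$, consistent with Corollary~\ref{corollary}. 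Consequently $\max(\sample_i,\lb_i) = \sample_i = \min(\sample_i,\ub_i)$. It remains to rule out the $\epsilon$-correction inside $\max^i$ and $\min^i$: such a correction triggers only if some strict $\phi \in \Pi_i^+$ has $\sample_i = \varepsilon_i^\phi(\sample)$ (or symmetrically for $\Pi_i^-$), but that would make a strict inequality an equality under $\sample$, contradicting $\sample \models \Pi_i$. Hence $\CL(\sample)_i = \min^i(\max^i(\sample_i,\lb_i),\ub_i) = \sample_i$, closing the induction and proving $\CL(\sample) = \sample$.

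Finally, optimality is immediate. Condition~(1) holds because $\CL(\sample) = \sample$ satisfies $\Pi$ by hypothesis. For condition~(2), since $|\CL(\sample)_i - \sample_i| = 0$ for all $i$, any competitor $\sample'$ with $|\sample'_i - \sample_i| \le |\CL(\sample)_i - \sample_i| = 0$ for every $i$ must coincide with $\sample = \CL(\sample)$, so no distinct improving sample exists. I expect the main obstacle to be the bookkeeping in the upward induction, specifically the careful handling of strict inequalities and the $\epsilon$-correction: one must argue that a satisfied constraint set never leaves a strict bound exactly tight at $\sample_i$, so that the $\min^i/\max^i$ operators collapse to ordinary $\min/\max$ and return $\sample_i$ unchanged.
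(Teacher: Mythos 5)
Your proof is correct and is essentially the paper's own argument: the paper proceeds by contradiction on the lowest index $i$ with $\CL(\sample)_i \neq \sample_i$ and appeals to exactly the fact you establish by downward induction---that every constraint in $\Pi_i^+ \cup \Pi_i^-$ is entailed by $\Pi$ and hence satisfied by $\sample$---to conclude $\lb_i \le \sample_i \le \ub_i$, so your upward induction is just the contrapositive packaging of the same step. If anything, yours is more careful: the paper's dichotomy ``either $\sample_i < \lb_i$ or $\sample_i > \ub_i$'' silently omits the case where the $\epsilon$-correction in $\max^i/\min^i$ fires because $\sample_i$ coincides with a strict bound (which you rule out explicitly), and the paper asserts the entailment of the reduced constraints and the (trivial) optimality claim without proof.
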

The proof is in Appendix~\ref{app:proof_refl}.
Excluding the lucky cases in which the sample $\sample$ already satisfies the constraints, we will show that $\CL(\sample)$ is ensured to be optimal, e.g., when $\Pi$ does not contain strict inequalities. In the general case, however, $\CL(\sample)$ may not be optimal since an optimal value for $\CL(\sample)$ may not exist. This is due to the presence in $\Pi$
of strict linear inequalities (see, e.g, in Example \ref{ex:value_change}). However, for such cases we can prove that $\CL(\sample)$ can take values as close as needed to the original sample $\sample$ while satisfying the constraints. In order to make these notions precise, we introduce $\CL^\ge(\sample)$ as the value computed by $\CL$ for the constrained generative modeling problem in which $\Pi$ is replaced with $\Pi^\ge$. In $\Pi^\ge$, each strict linear inequality $\sum_k w_kx_k + b > 0$ in $\Pi$ is replaced with the corresponding non-strict one, i.e., $\sum_k w_kx_k + b \ge 0$.
By definition, the samples satisfying $\Pi$ also satisfy $\Pi^\ge$, and $\CL^\ge(\sample)$ is optimal when considering $\Pi^\ge$ but may not be optimal when considering $\Pi$ (as $\CL^\ge(\sample)$ violates some strict linear inequality in $\Pi$).  In such cases, $\CL^\ge(\sample)$ can be considered as the limit optimal solution when considering $\Pi$ and we prove that $\CL(\sample)$ can get arbitrarily close to $\CL^\ge(\sample)$ while satisfying the constraints in $\Pi$.

\begin{theorem}\label{th:min_change}
Let $m$ be a deep generative model. Let $\Pi$ be a satisfiable and finite set of constraints over the sample space. Let $\lambda$ be a variable ordering and $\CL$ be the constraint layer built from $\lambda$ and $\Pi$. Let $\sample$ be a sample produced by $m$. 
\begin{enumerate}
    \item $\CL(\sample)$ is optimal if $\CL(\sample) = \CL^\ge(\sample)$, and
    \item $\CL(\sample)$ tends to $\CL^\ge(\sample)$ as the $\epsilon$ values used to compute $\CL(\sample)$ tend to 0, otherwise.
\end{enumerate}
\end{theorem}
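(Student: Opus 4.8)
The plan is to treat the two items separately, organising everything around one structural observation: the bound expressions $\lb_i$ and $\ub_i$ depend on the constraint set only through the coefficient vectors of the reduced inequalities, and these coefficients are produced identically by the reductions $red_j$ for $\Pi$ and for $\Pi^\ge$ (the two systems differ only in which resulting inequalities are marked strict). Consequently, for each $i$ the sets $\Pi_i^+$ and $(\Pi^\ge_i)^+$ (and likewise the ``$-$'' sets) yield the very same family of affine expressions $\varepsilon_i^\phi$, so $\CL$ and $\CL^\ge$ compute from identical bound formulas and differ \emph{only} through the $\epsilon$-corrections inserted by $\max^i$ and $\min^i$.

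For item 1 the key sub-result is that $\CL^\ge(\sample)$ is optimal with respect to $\Pi^\ge$, i.e.\ the strict-free case. I would prove this directly by induction on $i=1,\ldots,D$. Since $\Pi^\ge$ has no strict inequalities, each $\max^i,\min^i$ collapses to plain $\max,\min$, so $\CL^\ge(\sample)_i$ is exactly the clipping of $\sample_i$ onto the closed interval $[\lb_i,\ub_i]$, which is nonempty by Corollary~\ref{corollary}; clipping onto a nonempty closed interval returns the \emph{unique} nearest point to $\sample_i$. Now take any $\sample'$ satisfying $\Pi^\ge$ with $|\sample'_i-\sample_i|\le|\CL^\ge(\sample)_i-\sample_i|$ for all $i$. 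Because each $red_j(\phi^1,\phi^2)$ is a nonnegative linear combination of $\phi^1,\phi^2$, every solution of $\Pi^\ge$ also satisfies all reduced sets $\Pi^\ge_i$; hence, assuming inductively $\sample'_j=\CL^\ge(\sample)_j$ for $j<i$, the value $\sample'_i$ lies in the same interval $[\lb_i,\ub_i]$ used to define $\CL^\ge(\sample)_i$. Uniqueness of the nearest point forces $\sample'_i=\CL^\ge(\sample)_i$, closing the induction and giving $\sample'=\CL^\ge(\sample)$, so no distinct dominating sample exists. To finish item 1 I would note that every solution of $\Pi$ solves $\Pi^\ge$, that $\CL(\sample)$ satisfies $\Pi$ by Theorem~\ref{th:satisfaction}, and that under the hypothesis $\CL(\sample)=\CL^\ge(\sample)$ any $\Pi$-feasible sample dominating $\CL(\sample)$ would be a $\Pi^\ge$-feasible sample dominating $\CL^\ge(\sample)$, contradicting the optimality just shown.

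For item 2 I would regard $\CL(\sample)$ as a function of the finite vector of $\epsilon$ values used in the corrections and prove, again by induction on $i$, that $\CL(\sample)_i\to\CL^\ge(\sample)_i$ as that vector tends to $\vzero$. Two continuity observations drive both base and inductive steps: first, each bound $\lb_i,\ub_i$ is a fixed affine expression in the previously computed coordinates $\CL(\sample)_1,\ldots,\CL(\sample)_{i-1}$, hence continuous in them, so the inductive hypothesis makes the bounds under $\epsilon$ converge to those under $\CL^\ge$; second, $\min^i(\max^i(\sample_i,\lb_i),\ub_i)$ differs from the plain clipping $\min(\max(\sample_i,\lb_i),\ub_i)$ by at most one $\epsilon$-shift (triggered only when $\sample_i$ lands exactly on a strict bound), and plain clipping onto $[\lb_i,\ub_i]$ is jointly continuous in $(\lb_i,\ub_i)$ whenever $\lb_i\le\ub_i$, which holds by Corollary~\ref{corollary}. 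Combining the two, $\CL(\sample)_i$ converges to the clipping of $\sample_i$ onto the limiting interval, namely $\CL^\ge(\sample)_i$.

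The step I expect to be the main obstacle is controlling the propagation of the $\epsilon$-perturbations in item 2: the bounds at stage $i$ are built from the already-perturbed coordinates $\CL(\sample)_1,\ldots,\CL(\sample)_{i-1}$, so the $\epsilon$-errors accumulate down the ordering and must be shown to vanish simultaneously in the joint limit. Making this rigorous requires checking that the feasibility margin $\lb_i\le\ub_i$, together with the strict-inequality safeguard $\lb_i+\epsilon\le\ub_i$, is preserved for all sufficiently small $\epsilon$, so that no clipping discontinuity is ever triggered in the limit; once this is secured, the continuity of the affine bound maps and of the clipping operator delivers the coordinate-by-coordinate convergence.
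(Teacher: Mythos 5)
Your proposal is correct and takes essentially the same route as the paper: the paper likewise first proves, as a standalone lemma, that $\CL^\ge(\sample)$ is optimal with respect to $\Pi^\ge$ by induction along the variable ordering, then derives item 1 from the fact that the samples satisfying $\Pi$ are a subset of those satisfying $\Pi^\ge$, and proves item 2 by treating the $\epsilon$ values as variables and using that $\CL(\sample)_i$ is a composition of continuous functions of them, which at $\epsilon = 0$ evaluates exactly to $\CL^\ge(\sample)_i$. Your two deviations---replacing the paper's explicit case analysis of $\Pi^\ge_1$ with the unique-nearest-point property of clipping onto a nonempty closed interval, and tracking the $\epsilon$-error propagation coordinate-by-coordinate rather than invoking continuity of the whole composition at once---are cosmetic refinements of the same argument rather than a different proof.
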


According to the theorem, $\CL(\sample)$ is guaranteed to be optimal when no $\epsilon$ values are introduced for computing $\CL(\sample)$, and if an $\epsilon$ value has to be introduced because of a strict linear inequality, $\CL(\sample)$ can still be as close as needed to the optimal solution $\CL^\ge(\sample)$ of the relaxed problem with $\Pi^\ge$.

\section{Experimental Analysis}
\label{sec:experiments}

We quantitatively evaluate different aspects of our approach to support the claims of our paper: 
\begin{enumerate}[leftmargin=*,noitemsep,topsep=0pt]
    \item \textbf{Background knowledge alignment:} {\sl How often do constraint violations occur?} Section~\ref{subsec:exp_violations} shows that standard DGMs often violate the constraints expressing background knowledge.
    \item \textbf{Synthetic data quality:}  {\sl Does background knowledge improve the synthetic data quality?} Section~\ref{sec:exp_quality} shows that C-DGMs outperform DGMs in terms of two metrics: utility and detection. 
    \item \textbf{Post-processing ability:} {\sl Can \CL{} act as a guardrail at inference time?} Section~\ref{sec:exp_post_processing} shows that \CL{} can be used as a guardrail while slightly improving the overall performance of the DGMs.  
    \item \textbf{Generation time:} {\sl Does background knowledge injection affect generation time?} Section~\ref{sec:exp_time} shows that the sample generation time for C-DGMs is comparable with that of DGMs.
\end{enumerate}

\subsection{Experimental Analysis Settings}
\label{sec:exp_settings}

\textbf{Models:} We \add{experimented with five DGM} models, namely WGAN~\citep{Arjovsky2017_WGAN}, TableGAN~\citep{park2018_tableGAN}, CTGAN~\citep{xu2019_CTGAN}, \add{TVAE~\citep{xu2019_CTGAN}, and GOGGLE~\citep{liu2022goggle}}. Each model was augmented by our $\CL$, resulting in C-WGAN, C-TableGAN, C-CTGAN, \add{C-TVAE, and C-GOGGLE} models. 
Refer to Appendix \ref{app:models} for details on the models.

\textbf{Datasets:} To evaluate the models, we selected real-world datasets for which a clear description of the feature relationships either was available or has been derived from our domain expertise. We limited our selection to datasets having at least 3 known constraints.  As a result, we found 6 such datasets, 4 for binary classification tasks, 1 for multi-class classification, and 1 for regression. The datasets vary not only in size (2K to 1M rows), but also in feature number (24 to 109) and feature type (continuous, categorical, or mixed). The constraints between features are equally varied in number (from 4 to 31) as well as number of features appearing in each constraint (from 1 to 17). Additional details about the datasets and constraints statistics can be found in Appendix \ref{app:datasets} and \ref{app:constraints}, respectively. 

\textbf{Quality Evaluation:} We quantitatively evaluate two characteristics of the quality of generated samples: (i) {\sl utility}, i.e., whether our synthetic data can be used as an alternative fake dataset to train other models, and (ii) {\sl detection}, i.e., whether a classifier can be trained to tell apart the synthetic samples from the real data. In order to evaluate the utility of the generated samples, we follow the {\sl Train on Synthetic, Test on Real} (TSTR) framework~\citep{Esteban2017tstr,Jordon2019privacy} which is a paradigm used in different papers (see, e.g.,~\cite{kim2023stasy}). Following this framework, we train multiple models (e.g., Random Forest, XGBoost etc.), validate them with original training data, and test them on real test data. To evaluate such models, we report the average F1-score, AUROC, and weighted F1-score for the classification datasets, and explained variance and mean absolute error for regression.  
On the other hand, to evaluate in terms of detection, we follow the popular evaluation method for tabular data generation (see, e.g.,~\cite{liu2022goggle}) and train six models (e.g., Decision Tree, AdaBoost etc.) to distinguish between the real and synthetic datasets. 
Again, we report average F1-score, AUROC and weighted F1-score.
All the metrics are reported over 5 runs. 
For more details on the evaluation procedure, refer to Appendix~\ref{app:eval_protocol}. Our full code is provided on GitHub.\footnote{The code is available at \url{https://github.com/mihaela-stoian/ConstrainedDGM}.}

\begin{table}[t]
 \centering
\caption{Constraint violation rate for each model and dataset.}
\begin{tabular}{@{}lllllll@{}}
\toprule
Model/Dataset & \phishing{}            & \wids{}          & \lcld{}        & \heloc{}           & \faults{}         & \news{}           \\ \midrule
WGAN          & 11.1$\pm$1.6 & 98.2$\pm$0.2 & 100.0$\pm$0.0 & 57.0$\pm$13.0 & 70.7$\pm$8.3 & 45.6$\pm$9.6 \\
TableGAN      & 4.9$\pm$1.4  & 96.4$\pm$2.4 & 6.1$\pm$0.9  & 45.6$\pm$16.3 & 71.6$\pm$8.7 & 72.6$\pm$5.3 \\
CTGAN         & 3.1$\pm$2.6          &   99.9$\pm$0.0         &    11.8$\pm$2.7       &   41.6$\pm$12.1    & 74.3$\pm$5.2   &    54.3$\pm$10.1           \\
\add{\tvae} & \add{3.0$\pm$0.7} & \add{99.9$\pm$0.0} & \add{3.9$\pm$0.5} & \add{55.5$\pm$1.4} & \add{66.4$\pm$3.0} & \add{50.3$\pm$3.9}\\
\add{GOGGLE} & \add{5.9$\pm$6.6} & \add{78.2$\pm$11.6} & \add{13.1$\pm$2.9} & \add{47.3$\pm$7.0} & \add{63.7$\pm$17.6} & \add{44.8$\pm$7.2}\\
\midrule
All C-models  & \textbf{0.0$\pm$0.0}   & \textbf{0.0$\pm$0.0}   & \textbf{0.0$\pm$0.0}   & \textbf{0.0$\pm$0.0}    & \textbf{0.0 $\pm$0.0}   & \textbf{0.0$\pm$0.0}   \\ \bottomrule
\end{tabular}
\label{tab:cons-sat}
\end{table}

\subsection{Background Knowledge Alignment}\label{subsec:exp_violations}
{\sl How often do constraint violations occur?}  To assess the ability of standard DGMs to create samples that are aligned with the available background knowledge, we measure the {\sl constraints' violation rate} (CVR) 
which, given a 
set of synthetic samples $\mathcal{S}$ and a set of constraints $\Pi$, represents the percentage of samples in $\mathcal{S}$ violating $\Pi$. Table \ref{tab:cons-sat} shows the CVR for each tested model and dataset. In particular, the first \add{five} rows report the CVR for the standard DGMs, while the last row 
reports the CVR for all their respective constrained versions (i.e., C-WGAN, C-TableGAN, etc.) as it is always equal to zero for all models, datasets and runs. This is expected, as our C-DGMs offer theoretical guarantees to always generate samples compliant with the constraints.
Let us now focus on the results obtained for the standard DGMs. Firstly, no standard DGM guarantees to produce only samples satisfying the constraints.
Further, in more than half of the reported experiments, we have $\text{CVR} > 50\%$ (i.e., more than half of the generated samples violate the constraints), with peaks of $\text{CVR}=100\%$ for WGAN tested on \lcld{} and $\text{CVR}>95\%$ for \add{four out of five DGMs} tested on \wids. 
Additionally, to give a better overview of the phenomenon, in Appendix \ref{app:cons_sat} we present the results according to two other metrics: (i) {\sl constraints violation coverage} (CVC), which, given $\mathcal{S}$ and $\Pi$, represents the percentage of constraints in $\Pi$ that have been violated at least once by any of the samples in $\mathcal{S}$, and (ii) {\sl samplewise constraints violation coverage} (sCVC), which represents the average over the samples in $\mathcal{S}$ of the percentage of the constraints violated by each sample.
\begin{figure}[t]
    \centering
    \begin{subfigure}{0.32\textwidth}
        \centering
        \includegraphics[trim={0.7cm 0.7cm 0.6cm 0.4cm},clip,width=\textwidth]{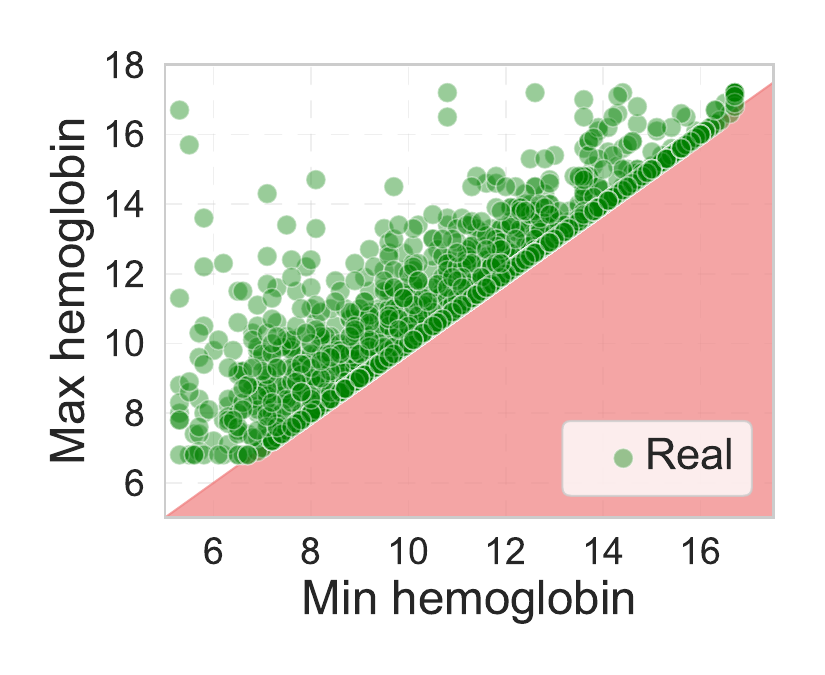}
    \end{subfigure}
    \hfill
    \begin{subfigure}{0.32\textwidth}
        \centering
        \includegraphics[trim={0.7cm 0.7cm 0.6cm 0.4cm},clip,width=\textwidth]{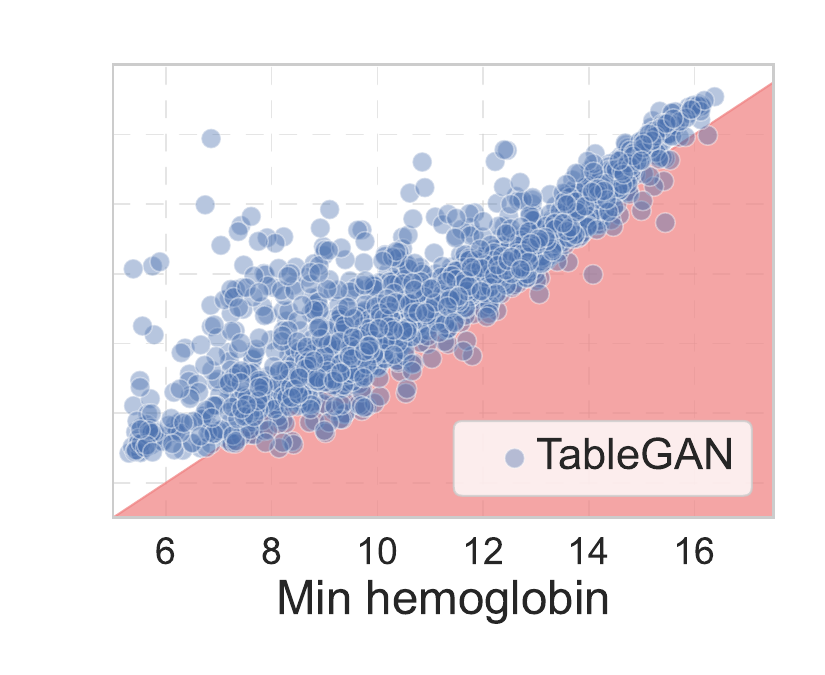}
    \end{subfigure}
    \hfill
    \begin{subfigure}{0.32\textwidth}
        \centering
        \includegraphics[trim={0.7cm 0.7cm 0.6cm 0.4cm},clip,width=\textwidth]{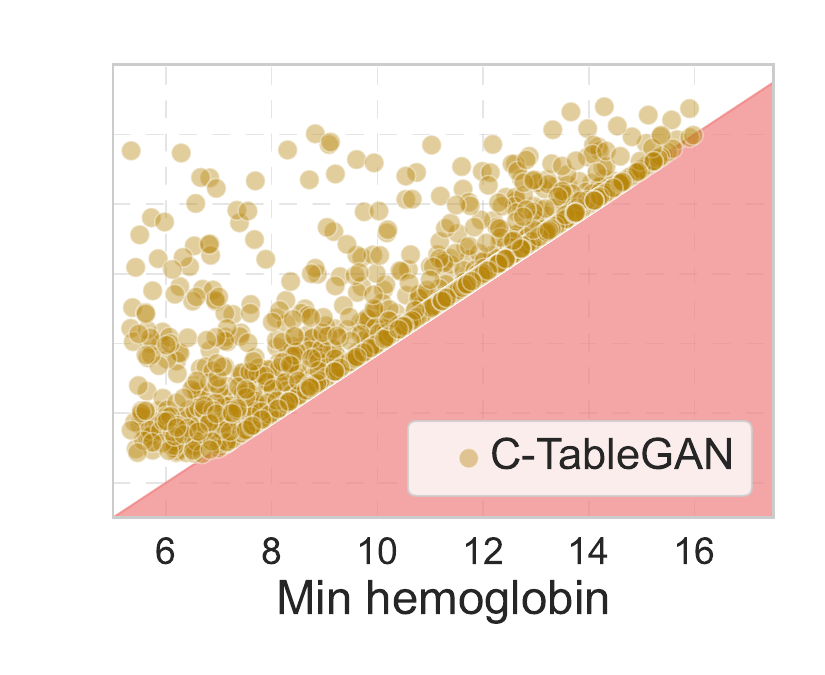}
    \end{subfigure}
    \caption{Real data and samples generated by TableGAN and C-TableGAN for \wids.}
    \label{fig:hemo_ex}
    \vspace{-0.25cm}
\end{figure}
Finally, to visually demonstrate the difference between DGMs and the C-DGMs, we select three constraints where only two variables appear, and we create two-dimensional scatter-plots of the (i) real data, (ii) the samples generated by the DGMs and (iii) the ones generated by the C-DGMs, with the variables appearing in the constraint on the axes. In Figure~\ref{fig:hemo_ex} we present only the plots for TableGAN, the other figures can be found in Appendix~\ref{app:cons_sat}. In Figure~\ref{fig:hemo_ex}, we consider the constraint $\text{\sl MaxHemoglobinLevel} \ge \text{\sl MinHemoglobinLevel}$ from the \wids{} dataset, and we highlight in red the region violating the constraint. 
The Figure confirms our quantitative analysis, as TableGAN often violates the constraint, while C-TableGAN never does. Further, we can see how the distribution of the samples generated by C-TableGAN more closely matches the one of the real data.

\subsection{Synthetic Data Quality}\label{sec:exp_quality}
\ 
\begin{wraptable}{r}{0.60\linewidth}
\vspace{-0.45cm}
\setlength{\tabcolsep}{4pt}
\centering
\caption{Results for every C-DGM and their respective standard version. The best results are in bold.}
\label{tab:overview_results}
\vspace{-0.15cm}
\begin{tabular}{l ccc c@{} ccc}
    \toprule
        & \multicolumn{3}{c}{\bf{Utility} ($\uparrow$)} & &\multicolumn{3}{c}{\bf{Detection} ($\downarrow$)} \\
        \cmidrule{2-4}\cmidrule{6-8}
         & F1 & {\sl w}F1 & AUC && F1 & {\sl w}F1 & AUC \\
\midrule
WGAN & 0.463 & 0.488 & 0.730 && 0.945 & 0.943 & 0.954\\
C-WGAN & \textbf{0.483} & \textbf{0.502} & \textbf{0.745} && \textbf{0.915} & \textbf{0.912} & \textbf{0.934}\\
\cmidrule{1-1}
TableGAN & 0.330 & 0.400 & 0.704 && 0.908 & 0.907 & 0.926\\
C-TableGAN & \textbf{0.375} & \textbf{0.432} & \textbf{0.714} && \textbf{0.898} & \textbf{0.895} & \textbf{0.917}\\
\cmidrule{1-1}
CTGAN & \cameraready{\textbf{0.517}} & \cameraready{0.532} & \cameraready{0.771} && \cameraready{0.902} & \cameraready{0.901} & \cameraready{0.920} \\
C-CTGAN & \cameraready{0.516} & \cameraready{\textbf{0.537}} & \cameraready{\textbf{0.773}} & & \cameraready{\textbf{0.894}} & \cameraready{\textbf{0.891}} & \cameraready{\textbf{0.919}}\\ 
\cmidrule{1-1}
\add{TVAE} & \add{0.497} & \add{0.527} & \add{0.767} & & \add{0.869} & \add{0.868} & \add{\textbf{0.892}} \\
\add{C-TVAE} & \add{\textbf{0.507}} & \add{\textbf{0.537}} & \add{\textbf{0.773}} & & \add{\textbf{0.868}} & \add{\textbf{0.867}} & \add{0.898} \\
\cmidrule(r){1-1}
\add{GOGGLE} & \add{0.344} & \add{0.373} & \add{0.624} && \add{0.926} & \add{0.926} & \add{0.943} \\
\add{C-GOGGLE} & \add{\textbf{0.409}} & \add{\textbf{0.427}} & \add{\textbf{0.667}} && \add{\textbf{0.925}} & \add{\textbf{0.916}} & \add{\textbf{0.937}} \\
\bottomrule
\end{tabular}
\vspace{-0.15cm}
\end{wraptable}
{\sl Does background knowledge improve the synthetic data quality?}
To validate
our hypothesis, 
we  
compare
the performance of the standard DGMs with their respective C-DGMs in terms of utility 
and detection, as specified in Section~\ref{sec:exp_settings}. 
The results of our experiments are summarised in Table~\ref{tab:overview_results},
where we report the average utility performance (left) and the average detection performance (right) over all datasets with respect to: F1-score (F1), weighted F1-score ({\sl w}F1), and Area Under the ROC Curve (AUC). As we can see from Table ~\ref{tab:overview_results}, \add{except for \cameraready{two cases (F1 utility of C-CTGAN and AUROC detection of C-TVAE)}}, our C-DGMs outperform their standard counterparts both in terms of utility and detection according to all metrics almost always, and never have worse performance.
 Often, such difference in performance is non-negligible, as in the case of \cameraready{GOGGLE}  where we register a \cameraready{6.5\%} difference in terms of utility when calculated in terms of F1-score.

Notice that for utility results we had to leave out of the average \news{} regression dataset, as there we use metrics with different scales (i.e., Explained Variance and Mean Absolute Error). 
The utility results obtained on \news{} are reported in the Appendix~\ref{appendix:cdgm_utility} in 
{Table~\ref{tab:news_utility}.
For \news{}, in all C-DGMs we see an improvement or no change in the utility performance according to at least one of the two metrics. %
We report the utility and detection for individual datasets in Appendix \ref{appendix:cdgm_utility} and \ref{appendix:cdgm_detection}. 
The hyperparameters used in each experiment are reported in Appendix~\ref{sec:appendix_hp}, Table~\ref{tab:best_hyperparam_configs}.

\subsection{Post-processing Ability}\label{sec:exp_post_processing}
\ 
\begin{wraptable}{r}{0.60\linewidth}
\centering
\vspace{-0.45cm}
\caption{Results for every P-DGM and their respective standard version. The best results are in bold.}
\label{tab:post-proc}
\vspace{-0.15cm}
\setlength{\tabcolsep}{4pt}
\begin{tabular}{l ccc c@{} ccc}
    \toprule
        & \multicolumn{3}{c}{\bf{Utility} ($\uparrow$)} & &\multicolumn{3}{c}{\bf{Detection} ($\downarrow$)} \\
        \cmidrule{2-4}\cmidrule{6-8}
         & F1 & {\sl w}F1 & AUC && F1 & {\sl w}F1 & AUC \\
\midrule
WGAN & \textbf{0.463} & 0.488 & 0.730 && 0.945 & 0.943 & 0.954\\
P-WGAN & 0.462 & \textbf{0.489} & \textbf{0.732} && \textbf{0.930} & \textbf{0.929} & \textbf{0.946} \\
\cmidrule{1-1}
TableGAN & \textbf{0.330} & \textbf{0.400} & 0.704 && 0.908 & 0.907 & 0.926\\
P-TableGAN & 0.328 & 0.399 & \textbf{0.707} && \textbf{0.901} & \textbf{0.898} & \textbf{0.922}\\
\cmidrule{1-1}
CTGAN & \cameraready{\textbf{0.517}} & \cameraready{\textbf{0.532}} & \cameraready{\textbf{0.771}} && \cameraready{0.902} & \cameraready{0.901} & \cameraready{\textbf{0.920}} \\
P-CTGAN & \cameraready{0.512} & \cameraready{0.528} & \cameraready{0.770} && \cameraready{\textbf{0.897}} & \cameraready{\textbf{0.900}} & \cameraready{0.926} \\
\cmidrule{1-1}
\add{TVAE} & \add{\textbf{0.497}} & \add{\textbf{0.527}} & \add{\textbf{0.767}} & & \add{\textbf{0.869}} & \add{\textbf{0.868}} & \add{\textbf{0.892}} \\
\add{P-TVAE} & \add{0.495} & \add{0.524} & \add{\textbf{0.767}} & & \add{0.875}& \add{0.876} & \add{0.902} \\
\cmidrule{1-1}
\add{GOGGLE} & \add{0.344} & \add{0.373} & \add{0.624} && \add{0.926} & \add{0.926} & \add{0.943} \\
\add{P-GOGGLE} & \add{\textbf{0.348}} & \add{\textbf{0.374}} & \add{\textbf{0.626}} && \add{\textbf{0.925}} & \add{\textbf{0.924}} & \add{\textbf{0.942}} \\
\bottomrule
\end{tabular}
\end{wraptable}
{\sl Can \CL{} act as a guardrail at inference time?}
In many practical scenarios, 
users might  not
 want to modify and retrain their model to add a constraint layer. For example, this might be the case for 
 companies that already have their (possibly black-box) models  and/or already generated data that  simply 
needs to be aligned with the available background knowledge. 
In this Subsection, we thus show that $\CL$ can be added at inference time as a guardrail on any model without hindering the quality of the generated data. The results of our analysis are shown in Table~\ref{tab:post-proc}, where P-DGM stands for a standard DGM with $\CL$ added as a post-processing step at inference time. The results demonstrate that P-DGMs outperform their standard counterparts \cameraready{17} times out of 30 (in \cameraready{1} other case having equal performance), thus showing the potential of using $\CL$ as a post-processor. 
As expected, the difference in results is often very little, as the $\CL$ layer is only applied at inference time. On the contrary, if we compare the P-DGMs models with the C-DGMs from Table ~\ref{tab:overview_results}, we can see that the C-DGMs almost always perform better (there are only two exceptions where the difference in performance is as little as 0.001). This is again expected, as C-DGMs are injected with the background knowledge at training time, thus making them able to exploit it.

\subsection{Impact on Samples Generation Time}\label{sec:exp_time}
\ 
\begin{wraptable}{r}{0.60\linewidth}
\vspace{-0.45cm}
\caption{Sample generation time in seconds. \hspace*{\fill}}
\vspace{-0.15cm}
\centering
\setlength{\tabcolsep}{4pt}
\begin{tabular}{@{}lllllll@{}}
\toprule
           & \phishing{}  & \wids{} & \lcld{} & \heloc{} & \faults{} & \news{} \\ \midrule
WGAN       & 0.02 & 0.03 & 0.01 & 0.00  & 0.00   & 0.01 \\
C-WGAN     & 0.02 & 0.04 & 0.01 & 0.01  & 0.01   & 0.02 \\ \cmidrule(r){1-1}
TableGAN   & 0.18 & 3.21 & 0.17 & 0.17  & 0.18   & 0.20 \\
C-TableGAN & 0.19 & 3.19 & 0.18 & 0.18  & 0.18   & 0.19 \\ \cmidrule(r){1-1}
CTGAN      & 0.13 & 0.26 & 0.08 & 0.06  & 0.08   & 0.14 \\
C-CTGAN    & 0.14 & 0.27 & 0.08 & 0.06  & 0.08   & 0.14 \\ \cmidrule(r){1-1}
\add{TVAE}      & \add{0.12} & \add{0.27} & \add{0.06} & \add{0.06}  &  \add{0.06}  & \add{0.12} \\
\add{C-TVAE}   & \add{0.13} & \add{0.27} & \add{0.07} & \add{0.06}  &   \add{0.07} & \add{0.13} \\ 
\cmidrule(r){1-1}
\add{GOGGLE} &  \add{0.71} & \add{3.99} & \add{9.91} & \add{0.16}  &   \add{0.06} & \add{2.01} \\
\add{C-GOGGLE} &  \add{0.71} & \add{3.86} & \add{10.18} & \add{0.16}  &   \add{0.06} & \add{2.04}\\
\bottomrule
\end{tabular}
\vspace{-0.3cm}
\label{tab:runtime}
\end{wraptable}
{\sl Does background knowledge injection affect generation time?} Another important aspect of synthetic 
data generators is their sample generation time (see, e.g.,~\cite{kim2023stasy,xiao2022_trilemma}). 
In order to show that the addition of $\CL$ has basically no impact on speed, for each dataset and model we measure the generation time of 1000 samples and report the average results over 5 runs in Table~\ref{tab:runtime}.  The results show that the constrained versions are as fast or at most 0.03s slower than the unconstrained version for 15 and 14 cases (out of 30 cases), respectively, with only one case 0.27s slower than the unconstrained version. This means that the constrained layer introduces almost no overhead to the sampling process in many practical scenarios. 
\add{This result is particularly interesting given that our representation of the constraints may have an exponential size \citep{dechter1999}. Indeed, we did not experience any exponential blow-up, which happens in the worst case, when the constraints have many variables in common.} %

\section{Related Work}

Our work lies at the intersection of standard tabular data synthesis and neuro-symbolic AI for its ability to incorporate background knowledge into neural architectures.

\textbf{Tabular Data Synthesis.} Several approaches based on DGMs have been specifically designed to address particular challenges in generating tabular data such as mixed types of features, and imbalanced categorical data. Notable among these are GAN-based approaches like TableGAN \citep{park2018_tableGAN}, CTGAN \citep{xu2019_CTGAN}, OCT-GAN \citep{kim2021oct}, and IT-GAN \citep{lee2021invertible}. These methods leverage the power of GANs to model the underlying data distribution and generate synthetic samples that closely resemble real-world tabular data. Few approaches focus especially on particular domains like healthcare where privacy is important (see, e.g., \citep{Choi2017_scarcity,che2017boosting}). Following privacy concerns, two approaches, i.e., DPGAN~\citep{xie2018differentially} and PATE-GAN~\citep{Jordon2019privacy}, incorporate differential privacy techniques to ensure that the generated synthetic data does not reveal sensitive information about the individuals in the original dataset.
Alternative to GANs, \cite{xu2019_CTGAN} proposed TVAE as a variation of the standard Variational AutoEncoder, while TabDDPM~\citep{kotelnikov2023tabddpm} and STaSy~\citep{kim2023stasy} were proposed following the achievements of score-based models. 
Finally, \cite{liu2022goggle} proposed GOGGLE, a model that uses graph learning to infer relational structure from the data. 

\textbf{Neuro-symbolic AI Methods.} Neuro-symbolic AI raised great interest in the recent past for many reasons~(see, e.g.,~\citep{raedt2018,third_wave}, among which there is the ability to incorporate complex background knowledge in deep learning models.  Standard approaches in the field incorporate 
logical constraints in the training of a neural network by introducing additional terms in the loss function that penalize the network for violating them (see, e.g.,~\citep{diligenti2012SBR,diligenti2017sbrnn,xu2018semantic,fischer2019DL2,serafini2022ltn,stoian2023}). These approaches, while often easy to incorporate into neural models, do not give any guarantee that the constraints are actually satisfied. Alternative approaches, e.g., DeepProbLog \citep{manhaeve2018deepproblog}, rely on a solver to inject background knowledge as a set of definite clauses both at training and inference time.
\cameraready{A more recent work~\citep{krieken2023anesi} in this line proposed using neural networks for performing approximate inference in polynomial time to address the scalability problem of probabilistic neuro-symbolic learning frameworks.}
More closely to our approach, we find the recent methods that are able to incorporate the background knowledge into the topology of the network itself. However, these methods are either only able to deal with constraints expressed in propositional logic~\citep{ahmed2022spl} or even less expressive constraints~\citep{giunchiglia2021jair} or become quickly intractable for even moderately complex logical constraints~\citep{hoernle2022multiplexnet}.
Regarding the application of neuro-symbolic AI in generative tasks: \cite{liello2020}  incorporates propositional logic constraints on GANs for structured objects generation, while
\cite{misino2023tvael} shows how to integrate ProbLog~\citep{Raed2007_problog} with VAEs. Our work differs from the last two both in the application domain and in the type of background knowledge we incorporate.

\section{Discussion and Conclusions}

Our work introduces a novel method able to translate complex constraints, expressed as linear inequalities, into a seamlessly integrated layer within Deep Generative Models (DGMs), thereby yielding Constrained Deep Generative Models (C-DGMs). This infusion of domain-specific knowledge directly into the network's architecture facilitates the generation of more realistic tabular data samples, as C-DGMs are guaranteed to comply with the specified constraints.
Furthermore, our experiments reveal that including our layer during the training phase enhances the quality of the generated samples, thus underlining the importance of constraint integration not only during data generation but also throughout the model's learning process. \textbf{Limitations:} In this paper, we focus on constraints that can be expressed as linear inequalities. While this covers a wide range of scenarios, some relationships among features may demand more expressive constraint representations. In the future, we expect many developments in this work where even more complex constraints are considered. 

\newpage

\subsubsection*{Author Contributions}
\textbf{Mihaela C\u{a}t\u{a}lina Stoian}: conceptualization, methodology, writing, constraint layer and software implementation, experimental analysis, model training, debugging; \textbf{Salijona Dyrmishi}: conceptualization, experimental design and analysis, writing, software implementation, model training, debugging; \textbf{Maxime Cordy}: supervision, editing, writing reviewing; \textbf{Thomas Lukasiewicz}: supervision, editing, writing reviewing; \textbf{Eleonora Giunchiglia}: conceptualization, methodology, formalization, writing, supervision.

\subsubsection*{Acknowledgments}
\cameraready{Mihaela C\u{a}t\u{a}lina Stoian is supported by the EPSRC under the grant EP/T517811/1. Salijona Dyrmishi is supported by the Luxembourg National Research Funds (FNR) AFR Grant 14585105.
This work was also supported by the Alan Turing Institute under the EPSRC grant EP/N510129/1, by the AXA Research Fund, by the EPSRC grant EP/R013667/1, and by the EU TAILOR grant.
We would like to thank Andrew Ryzhikov, Salah Ghamizi, and Thibault Simonetto for the useful discussions. We also thank Thibault Simonetto for his contribution to the scaler and exploration of domain constraints.
We also acknowledge the use of the EPSRC-funded Tier 2 facility JADE (EP/P020275/1), GPU computing support by Scan Computers International Ltd. and the HPC facilities of the University of Luxembourg.}

\subsubsection*{Ethics Statement}
While our method enables the generation of synthetic data, we recognize the importance of responsible and ethical use. It is conceivable that individuals could misuse our approach to create high-quality fake data for deceptive purposes, such as unauthorized access or selling counterfeit information. However, synthetic data generation can also provide many societal benefits, such as privacy protection as shown in~\citep{park2018_tableGAN,yoon2020_privacy}.

\subsubsection*{Reproducibility Statement}

To ensure the reproducibility of this paper, we include all the necessary details in the Appendix. Appendix~\ref{app_proofs} includes detailed proofs for the technical statements presented in the paper. Appendix~\ref{app:exp_settings} provides all the details on the experimental analysis settings, in particular: (i) Section~\ref{app:datasets} provides the descriptions and links to the datasets, (ii)  Section~\ref{app:eval_protocol} details the evaluation protocol we followed, and (iii) Section~\ref{sec:appendix_hp} reports how the hyperparameters search was conducted, with Table~\ref{tab:best_hyperparam_configs} containing the best hyperparameter configuration for each dataset and model. The code is publicly available at \url{https://github.com/mihaela-stoian/ConstrainedDGM}.

\bibliography{iclr2024_conference}
\bibliographystyle{iclr2024_conference}

\clearpage
\appendix
\clearpage
\newpage
\section{Theorems}\label{app_proofs}

\subsection{Proof of Theorem \ref{th:satisfaction}}\label{app:proof_satisfaction}

\begin{proof}

The proof is by induction on the number $n$ of variables in $\Pi$. We recall that $\{x_1,\ldots,x_D\}$ is the set of variables and that we assumed, w.l.o.g., $\lambda(x_i) = i$.

For the base case $n=0$, $\Pi$ does not contain variables and, since $\Pi$ is satisfiable, for each constraint 
\add{$\sum_k w_kx_k + b \unrhd 0$ in $\Pi$, (i) each $w_k = 0$, (ii)  $b \unrhd 0$, and (iii)} 
 each sample satisfies $\Pi$.

Assume that $n > 1$ variables appear in $\Pi$. Let $x_k$ be the variable with the highest $\lambda(x_k)$ value occurring in $\Pi$.

Then, \add{$x_D, x_{D-1}, \ldots, x_{k+1}$ do not occur in $\Pi$}, $\Pi_D = \Pi_{D-1} = \ldots = \Pi_k =\Pi$, and 
\begin{equation}\label{eq:pik-1}
    \Pi_{k-1} = \Pi_{k}\setminus (\Pi_{k}^- \cup \Pi_k^+) \cup \{red_k(\phi^1, \phi^2) \mid \phi^1 \in \Pi^-_{k}, \phi^2 \in \Pi^+_{k}\}.
\end{equation}

Consider an arbitrary sample $\sample$. 

In $\Pi_{k-1}$, \add{by construction}, less than $n$ variables appear.
Thus, for the inductive hypothesis, assume $\CL(\sample)$ satisfies $\Pi_{k-1}$. 
Proving that $\CL(\sample)$ satisfies  $\Pi_k$ is equivalent to proving that $\CL(\sample)$ satisfies both $\Pi_{k}^+$ and $\Pi_{k}^-$, since we know from  eq. (\ref{eq:pik-1}) that $\Pi_k \subseteq \Pi_{k-1} \cup \Pi_{k}^- \cup \Pi_{k}^+$.

\add{The proof is in two steps.
We first prove by contradiction that either (i) $\lb_k < \ub_k$ or (ii) $\lb_k = \ub_k$ and there exist two constraints $\phi_1 \in \Pi_k^-$ and $\phi_2 \in \Pi_k^+$ such that: $\ub_k = \varepsilon_{k}^{\phi^1}(\CL(\sample))$ and $\lb_k = \varepsilon_{k}^{\phi^2}(\CL(\sample))$, and both $\phi^1$ and $\phi^2$ are not strict inequalities. 
Then, in the second step, we prove that $\CL(\sample)$ satisfies each constraint $\phi \in\Pi_k^+ \cup \Pi_k^-$.}

\textbf{First step.} Assume that either (i) $\lb_k > \ub_k$ or (ii) $\lb_k = \ub_k$ and at least one between $\phi^1$ and $\phi^2$ is a strict inequality. \add{If $\lb_k > \ub_k$ or $\lb_k = \ub_k$, then $\lb_k \neq -\infty$ and $\ub_k \neq +\infty$ and thus
both $\Pi_k^-$ and $\Pi_k^+$ are not empty. Let $\phi^1$ (resp. $\phi^2$) be the constraint in $\Pi_k^-$ (resp. $\Pi_k^+$) such that $\ub_k = \varepsilon_{k}^{\phi^1}(\CL(\sample))$ (resp. $\lb_k = \varepsilon_{k}^{\phi^2}(\CL(\sample))$). Such constraints $\phi_1$ and $\phi_2$ exist since $\Pi$ is finite. Then, by definition, $red_k(\phi^1,\phi^2)$ is equivalent to 
 $\varepsilon_{k}^{\phi^1} - \varepsilon_{k}^{\phi^2} \ge 0$ if both $\phi^1$ and $\phi^2$ are non-strict inequalities, and to $\varepsilon_{k}^{\phi^1} - \varepsilon_{k}^{\phi^2} > 0$ if at least one between $\phi^1$ and $\phi^2$ is a  strict inequality.

We know that $red_k(\phi^1,\phi^2) \in \Pi_{k-1}$ and that, by the inductive hypothesis, $\CL(\sample)$ satisfies $\Pi_{k-1}$.
Thus, $\CL(\sample)$ satisfies $red_k(\phi^1,\phi^2)$, which (taken together with the definition of $red_k(\phi^1,\phi^2)$ above) implies  that 
$\varepsilon_{k}^{\phi^1}(\CL(\sample)) - \varepsilon_{k}^{\phi^2}(\CL(\sample)) \ge 0$ if both $\phi^1$ and $\phi^2$ are non-strict inequalities, and that 
$\varepsilon_{k}^{\phi^1}(\CL(\sample)) - \varepsilon_{k}^{\phi^2}(\CL(\sample)) > 0$ if at least one between $\phi^1$ and $\phi^2$ is a  strict inequality.
However, by our assumption, $\ub_k = \varepsilon_{k}^{\phi^1}(\CL(\sample))$ and  $\lb_k = \varepsilon_{k}^{\phi^2}(\CL(\sample))$.
Thus, we have that $ub_k \ge lb_k$ if both $\phi^1$ and $\phi^2$ are non-strict inequalities, and that $ub_k > lb_k$ if at least one between $\phi^1$ and $\phi^2$ is a  strict inequality, deriving a contradiction.
}

\add{\textbf{Second step.} We now prove that $\CL(\sample)$ satisfies each constraint $\phi \in\Pi_k^+$. The statement holds since
\begin{enumerate}
    \item if $\phi$ is a non-strict inequality, by definition, we have that $\CL(\sample)_k \geq \lb_k \geq \varepsilon_{k}^{\phi}(\CL(\sample))$, and
    \item if $\phi$ is a strict inequality, we have two cases. 
    In the first one, $\lb_k = \varepsilon_{k}^{\phi}(\CL(\sample))$ and, since $\lb_k < \ub_k$, it is possible to choose an $\epsilon>0$ such that $\CL(\sample)_k = \lb_k + \epsilon < \ub_k$, and thus $\CL(\sample)_k >  \lb_k = \varepsilon_{k}^{\phi}(\CL(\sample))$. In the second case, $\lb_k > \varepsilon_{k}^{\phi}(\CL(\sample))$ and then $\CL(\sample)_k \geq \lb_k > \varepsilon_{k}^{\phi}(\CL(\sample))$. 
\end{enumerate}
Analogously, $\CL(\sample)$ satisfies each constraint $\phi \in\Pi_k^-$.}

\end{proof}

\subsection{Proof of Theorem \ref{th:refl}}\label{app:proof_refl}
Let $\sample$ be a  sample satisfying the constraint in $\Pi$. We recall that, w.l.o.g., we assume $\lambda(x_i) = i$.
We will prove by contradiction that if $\sample$ satisfies $\Pi$, then $\CL(\sample) = \sample$.

Assume $\CL(\sample) \neq \sample$. Let $i$ be the lowest index such that $\CL(\sample)_i \neq \sample_i$. Then,   
$\CL(\sample)_i = \textstyle\min^i(\max^i(\sample_i, \lb_i), \ub_i) \neq \sample_i,$ and thus either $\sample_i < \lb_i$ or $\sample_i > \ub_i$, both cases being impossible given 
\begin{enumerate}
    \item the definitions of $\lb_i$ and $\ub_i$, 
    \item the hypothesis that $\sample$ satisfies the constraints in $\Pi$, and
    \item the fact that all the constraints in $\Pi_i^- \cup \Pi_i^+$ are entailed by $\Pi$ and thus satisfied by $\sample$.
\end{enumerate}

\subsection{Proof of Theorem \ref{th:min_change}}\label{app:proof_min_change}

We prove the two statements of the theorem in separate lemmas, after a first introductory lemma.

\begin{lemma}\label{lemma:optimal_ge_pi}
    In the hypotheses of Theorem \ref{th:min_change},  $\CL^\ge(\sample)$ is optimal with respect to $\Pi^\ge$.
\end{lemma}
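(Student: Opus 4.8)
The plan is to verify the two defining conditions of optimality for $\CL^\ge(\sample)$ with respect to $\Pi^\ge$ separately. First I would note that $\Pi^\ge$ is satisfiable (every sample satisfying $\Pi$ also satisfies $\Pi^\ge$, and $\Pi$ is satisfiable by hypothesis) and that it contains only non-strict inequalities. Since the reduction $red_j$ of two non-strict inequalities is again non-strict, every set built from $\Pi^\ge$ during the construction of $\CL^\ge$ consists of non-strict inequalities, so no $\epsilon$ is ever introduced and the operators $\min^i, \max^i$ collapse to ordinary $\min, \max$. Hence $\CL^\ge(\sample)_i = \min(\max(\sample_i, \lb_i), \ub_i)$; that is, $\CL^\ge(\sample)_i$ is exactly the projection (clipping) of $\sample_i$ onto the closed interval $[\lb_i, \ub_i]$, which is nonempty by Corollary \ref{corollary} applied to $\Pi^\ge$. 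Condition 1 of optimality --- that $\CL^\ge(\sample)$ satisfies $\Pi^\ge$ --- then follows immediately from Theorem \ref{th:satisfaction} applied to $\Pi^\ge$.

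For condition 2, I would argue by contradiction: suppose there is a sample $\sample' \neq \CL^\ge(\sample)$ satisfying $\Pi^\ge$ with $|\sample'_i - \sample_i| \le |\CL^\ge(\sample)_i - \sample_i|$ for every $i$. Writing $c := \CL^\ge(\sample)$, I would prove by strong induction on $i$ that $\sample'_i = c_i$, which contradicts $\sample' \neq c$. In the inductive step I assume $\sample'_j = c_j$ for all $j < i$. The crucial observation is that the constraints in $\Pi_i^+ \cup \Pi_i^-$ are entailed by $\Pi^\ge$ (they are non-negative linear combinations of constraints of $\Pi^\ge$, exactly as already used in the proof of Theorem \ref{th:refl}), so $\sample'$ satisfies them; moreover each bound $\varepsilon_i^\phi$ depends only on the coordinates $x_1, \ldots, x_{i-1}$. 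Since those coordinates of $\sample'$ coincide with those of $c$ by the induction hypothesis, the lower and upper bounds constraining $\sample'_i$ are exactly $\lb_i$ and $\ub_i$, whence $\lb_i \le \sample'_i \le \ub_i$.

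It then remains to combine this with the Pareto inequality. Because $c_i$ is the projection of $\sample_i$ onto $[\lb_i, \ub_i]$, it is the unique minimiser of $t \mapsto |t - \sample_i|$ over that interval. As $\sample'_i$ lies in $[\lb_i, \ub_i]$, we get $|\sample'_i - \sample_i| \ge |c_i - \sample_i|$; together with the assumed reverse inequality this forces the two distances to be equal, and uniqueness of the projection then yields $\sample'_i = c_i$, closing the induction and hence the contradiction.

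The main obstacle is making rigorous the claim that $\sample'$ is subject to exactly the same bounds $\lb_i, \ub_i$ as $c$ at coordinate $i$: this hinges both on the entailment of $\Pi_i^+ \cup \Pi_i^-$ by $\Pi^\ge$ and on the fact that these reduced constraints involve only variables of index at most $i$, so that agreement on $x_1, \ldots, x_{i-1}$ already fixes the bounds. Once this is established, the uniqueness of the projection onto a closed nonempty interval does the rest, and the remainder of the argument is routine.
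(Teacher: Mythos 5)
Your proof is correct and takes essentially the same route as the paper's: a coordinate-wise induction which, using the agreement on coordinates $j<i$, the entailment of $\Pi^\ge_i$ by $\Pi^\ge$, and the fact that $\Pi^\ge_i$ involves only $x_1,\ldots,x_i$, reduces everything to a one-variable clipping argument over the interval $[\lb_i,\ub_i]$. If anything, yours is slightly tighter: the paper's induction statement only excludes samples that are \emph{strictly} closer at coordinate $i$, whereas your projection-uniqueness step also rules out ties ($|\sample'_i-\sample_i|=|\CL^\ge(\sample)_i-\sample_i|$ with $\sample'_i\neq \CL^\ge(\sample)_i$), which is what the $\le$-based definition of optimality actually requires.
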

\begin{proof}

Let $\sample$ be a  sample. We recall that, w.l.o.g., we assume $\lambda(x_i) = i$.

We show that for every $i = 1,\ldots,D$, there does not exist another sample $\sample'$ satisfying $\Pi^\ge$ such that for each $1 \le j < i$, $\sample'_j = \CL^\ge(\sample)_j$ and $|\sample'_i - \sample_i| < |\CL^\ge(\sample)_i - \sample_i|$.
The proof is by induction on $i$. 
    
For the base case $(i = 1)$, we know that in $\Pi^\ge_1$ the only variable that can appear is $x_1$, thus we have four cases: 
\add{
     \begin{enumerate}
         \item $\Pi^\ge_1$ is either empty or contains a constraint $c \ge 0$ which is always satisfied since $\Pi$ (and thus $\Pi^\ge$) is satisfiable. In this case, $\CL^\ge(\sample)_1 = \sample_1$ and the statement trivially holds;
         \item $\Pi^\ge_1$ is equivalent to a single constraint $\{x_1 + a \ge 0\}$ in which case if $\sample_1 \ge -a$ then $\CL^\ge(\sample)_1 = \sample_1$ otherwise $\CL^\ge(\sample)_1 = -a$ and the statement trivially holds;
         \item $\Pi^\ge_1$ is equivalent to a single constraint $\{- x_1 + b \ge 0\}$ in which case if $\sample_1 \le b$ then $\CL^\ge(\sample)_1 = \sample_1$ otherwise $\CL^\ge(\sample)_1 = b$ and the statement trivially holds;
         \item $\Pi^\ge_1$ is equivalent to a pair of constraints $\{x_1 + a \ge 0, -x_1 + b \ge 0\}$. Since $\Pi$ is satisfiable, $a + b \geq 0$. Then, 
         if $\sample_1 < -a$ then $\CL^\ge(\sample)_1 = -a$, 
         if $-a \leq \sample_1 \leq b$ then $\CL^\ge(\sample)_1 = \sample_1$, and  
         if $\sample_1 > b$ then $\CL^\ge(\sample)_1 = b$. For each of the three cases, the statement trivially holds. 
     \end{enumerate}
}

Assume $i=j+1> 1$. Assume by contradiction that there exists another sample $\sample'$ satisfying $\Pi^\ge$ such that for each $1 \le j < i$, $\sample'_j = \CL^\ge(\sample)_j$ and $|\sample'_i - \sample_i| < |\CL^\ge(\sample)_i - \sample_i|$. By construction of $\Pi^\ge_i$, we know that only $x_1,\ldots,x_i$ appear in $\Pi^\ge_i$. \add{If we substitute each variable $x_j$ with $j<i$  with $\CL^\ge(\sample)_j$ in $\Pi^\ge_i$, then (i) in the resulting set of constraints the only variable is $x_i$, (ii) we are back to the base case,  and (iii) the statement follows.} 

\end{proof}

\begin{lemma}
    In the hypotheses of Theorem \ref{th:min_change},  $\CL(\sample)$ is optimal if $\CL(\sample) = \CL^\ge(\sample)$.
\end{lemma}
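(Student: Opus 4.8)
The plan is to verify the two defining conditions of optimality with respect to $\Pi$ directly, leaning on the already-established Lemma~\ref{lemma:optimal_ge_pi}. The first condition, that $\CL(\sample)$ satisfies $\Pi$, is immediate and requires neither the hypothesis nor the relaxation: by Theorem~\ref{th:satisfaction} the constraint layer always returns a sample compliant with $\Pi$, so $\CL(\sample)$ satisfies $\Pi$ outright.

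For the second condition I would argue by contradiction. Suppose there were a sample $\sample' \neq \CL(\sample)$ satisfying $\Pi$ and such that $|\sample'_i - \sample_i| \le |\CL(\sample)_i - \sample_i|$ for every $i = 1,\ldots,D$. The key observation is that $\Pi^\ge$ is a relaxation of $\Pi$---each constraint of $\Pi$ entails its non-strict counterpart in $\Pi^\ge$---so any sample satisfying $\Pi$ also satisfies $\Pi^\ge$; in particular $\sample'$ satisfies $\Pi^\ge$. Now I invoke the hypothesis $\CL(\sample) = \CL^\ge(\sample)$: substituting this equality, $\sample'$ is a sample distinct from $\CL^\ge(\sample)$, satisfying $\Pi^\ge$, with $|\sample'_i - \sample_i| \le |\CL^\ge(\sample)_i - \sample_i|$ for all $i$. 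This contradicts Lemma~\ref{lemma:optimal_ge_pi}, which asserts that $\CL^\ge(\sample)$ is optimal with respect to $\Pi^\ge$. Hence no such $\sample'$ exists, and $\CL(\sample)$ is optimal with respect to $\Pi$.

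The argument is short, and the only point requiring care---the \textbf{main subtlety} rather than a genuine obstacle---is the transfer of the candidate $\sample'$ between the two problems. Because the feasible region of $\Pi$ is contained in that of $\Pi^\ge$, a dominating competitor for $\CL(\sample)$ under $\Pi$ is automatically a dominating competitor for $\CL^\ge(\sample)$ under $\Pi^\ge$ once the equality $\CL(\sample)=\CL^\ge(\sample)$ is substituted; the set containment runs in exactly the direction needed, since ruling out competitors over the larger feasible set $\Pi^\ge$ a fortiori rules them out over the smaller set $\Pi$. I would also state explicitly that $\sample' \neq \CL^\ge(\sample)$ follows from $\sample' \neq \CL(\sample)$ together with the hypothesised equality, as the optimality definition quantifies only over samples \emph{different} from the layer output.
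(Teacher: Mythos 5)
Your proof is correct and takes essentially the same route as the paper's: the paper's proof simply lists the four ingredients you use---that $\CL(\sample)$ satisfies $\Pi$ (Theorem~\ref{th:satisfaction}), the hypothesis $\CL(\sample)=\CL^\ge(\sample)$, the optimality of $\CL^\ge(\sample)$ with respect to $\Pi^\ge$ (Lemma~\ref{lemma:optimal_ge_pi}), and the containment of the feasible set of $\Pi$ in that of $\Pi^\ge$. You merely make explicit the contradiction argument that combines them, which is a faithful elaboration rather than a different approach.
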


\begin{proof}
    The proof is a direct consequence of the facts that (i) $\CL(\sample)$ satisfies the constraints in $\Pi$, (ii) $\CL(\sample) = \CL^\ge(\sample)$, (iii) $\CL^\ge(\sample)$ is optimal wrt $\Pi^\ge$ (from Lemma~\ref{lemma:optimal_ge_pi}), and (iv) the samples satisfying $\Pi$ are a subset of the samples satisfying $\Pi^\ge$.
\end{proof}

\begin{lemma}
    In the hypotheses of Theorem \ref{th:min_change},  $\CL(\sample)$ tends to $\CL^\ge(\sample)$ as the $\epsilon$ values used to compute $\CL(\sample)$ tend to 0.
\end{lemma}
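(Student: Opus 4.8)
The plan is to prove the statement by induction on the feature index $i$, showing that $\CL(\sample)_i \to \CL^\ge(\sample)_i$ as all the $\epsilon$ values used to compute $\CL(\sample)$ tend to $0$. The key structural observation I would isolate first is that $\Pi^\ge$ is obtained from $\Pi$ by relaxing strict inequalities to non-strict ones \emph{without altering any coefficient} $w_k$ or constant $b$; since the reduction operator $red_j$ acts only on these coefficients (strictness affecting merely the resulting $\unrhd$ symbol), the families of bounding expressions $\{\varepsilon_i^\phi\}$ produced by $\CL$ and by $\CL^\ge$ coincide. Consequently, regarded as functions of the already-computed values $\CL(\sample)_1,\ldots,\CL(\sample)_{i-1}$, the bounds $\lb_i,\ub_i$ and their relaxed counterparts $\lb_i^\ge,\ub_i^\ge$ are given by the same min/max of linear forms, and are therefore continuous in those inputs.

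Next I would introduce the auxiliary value $g_i = \min(\max(\sample_i,\lb_i),\ub_i)$, i.e. $\CL(\sample)_i$ stripped of the corrections introduced by $\min^i$ and $\max^i$. By the definition of the constraint layer, $|\CL(\sample)_i - g_i|$ is bounded by (at most twice) the $\epsilon$ used at step $i$, since each of $\min^i,\max^i$ perturbs its argument by at most $\epsilon$ and $\min,\max$ are $1$-Lipschitz; hence this correction vanishes as $\epsilon \to 0$. Moreover, evaluating the same expression $g_i$ at the relaxed inputs returns exactly $\CL^\ge(\sample)_i$, because the bound functions coincide and $\CL^\ge$ never adds an $\epsilon$.

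With these two ingredients the induction is routine. For the base case $i=1$, only $x_1$ appears in $\Pi_1$, so $\lb_1,\ub_1$ are constants equal to $\lb_1^\ge,\ub_1^\ge$, and $\CL(\sample)_1 = g_1 + O(\epsilon) \to g_1 = \CL^\ge(\sample)_1$. For the inductive step, assume $\CL(\sample)_j \to \CL^\ge(\sample)_j$ for all $j<i$. Continuity of $\lb_i,\ub_i$ in the previously computed values, combined with the inductive hypothesis, yields $\lb_i \to \lb_i^\ge$ and $\ub_i \to \ub_i^\ge$; continuity of $\min,\max$ then gives $g_i \to \CL^\ge(\sample)_i$, and finally $\CL(\sample)_i = g_i + O(\epsilon) \to \CL^\ge(\sample)_i$.

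The step I expect to be the main obstacle is making the perturbation of the bounds rigorous: as the earlier coordinates shift by $O(\epsilon)$, the particular constraint attaining the minimum in $\ub_i$ (or the maximum in $\lb_i$) may change, and whether a strict inequality is \emph{exactly} active, and hence whether an $\epsilon$ correction is applied, may flip. The resolution I would emphasize is that none of this affects the limit: $\lb_i$ and $\ub_i$ are continuous as min/max of finitely many continuous linear forms regardless of which one is selected, and the $\epsilon$ correction is uniformly dominated by $\epsilon$ itself, hence negligible. I would also invoke the first step in the proof of Theorem~\ref{th:satisfaction} to note that whenever a strict inequality forces a correction one has $\lb_i < \ub_i$ strictly, so a valid positive $\epsilon$ always exists and may legitimately be driven to $0$.
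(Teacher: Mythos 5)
Your proof is correct, and it reaches the conclusion by a genuinely different route from the paper's. The paper's proof is a one-shot global continuity argument: it first dispatches the case $k=0$ (no $\epsilon$ used, so $\CL(\sample)=\CL^\ge(\sample)$ outright), then replaces each $\epsilon_j$ by a fresh variable $v_j$, asserts that $\CL(\sample)_i$ is a composition of continuous functions of the $v_j$'s, and concludes that its limit as $[v_1,\ldots,v_k]\to[0,\ldots,0]$ equals its value at the origin, which is $\CL^\ge(\sample)_i$. Your proof instead proceeds by induction on the coordinate index, splitting $\CL(\sample)_i$ into the uncorrected value $g_i=\min(\max(\sample_i,\lb_i),\ub_i)$ plus a correction bounded by $2\epsilon$, and propagating convergence through the recursion using continuity of $\lb_i,\ub_i$ as min/max of finitely many linear forms. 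Both arguments rest on the same two facts, which you usefully make explicit and the paper leaves implicit: that $\Pi$ and $\Pi^\ge$ induce identical bounding expressions $\varepsilon_i^\phi$ (strictness never touches coefficients), and that min/max compositions are continuous. What your approach buys is rigor precisely where the paper is thinnest: since the decision of whether an $\epsilon$-correction fires (i.e., whether a strict bound is exactly active) can flip as the $\epsilon$'s shrink, $\CL(\sample)_i$ is not literally a single fixed composition of continuous functions of the $v_j$'s, so the paper's one-line continuity claim implicitly freezes the branch structure; your observation that the discontinuity introduced by branch switching is itself of magnitude at most $\epsilon$, hence vanishes in the limit, closes that gap. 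The price is a longer, coordinate-by-coordinate argument where the paper's is slicker but less airtight.
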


\begin{proof}
Let $\epsilon_1,\ldots,\epsilon_k$ ($k \geq 0$) be the $\epsilon$ values (assumed, w.l.o.g., to be distinct) in $\CL(\sample)$. \add{If $k=0$ then $\CL(\sample) = \CL^\ge(\sample)$, and the statement trivially holds.

Consider  $\CL(\sample)_i$, $i = 1,\ldots,D$, and substitute $\epsilon_j$ with a newly introduced variable $v_j$.} $\CL(\sample)_i$ is a composition of continuous functions in the newly introduced variables and, thus, it is continuous. 
For an arbitrary continuous function $f : \mathbb{R}^k \to \mathbb{R}$ we know that $\lim_{x \to x_0} f(x) = f(x_0)$.
Thus,
$$
\lim_{
[v_1,\ldots, v_k] \to [0,\ldots,0]} \CL(\sample)_i = \CL^\ge(\sample)_i,
$$
and hence
$$
\lim_{
[v_1,\ldots, v_k] \to [0,\ldots,0]} \CL(\sample)= \CL^\ge(\sample).
$$

\end{proof}

\section{Experimental analysis settings}\label{app:exp_settings}

\subsection{Models}
\label{app:models}
In our experimental analysis, we use \add{five} base models:

\begin{itemize}

    \item \textbf{WGAN}~\citep{Arjovsky2017_WGAN} is a GAN model trained with Wasserstein loss in a typical generator discriminator GAN-based architecture. In our implementation, WGAN uses a MinMax transformer for the continuous features and one-hot encoding for categorical ones. It has not been designed specifically for tabular data. 

    \item \textbf{TableGAN}~\citep{park2018_tableGAN} is among the first GAN-based approaches proposed for tabular data generation. In addition to the typical generator and discriminator architecture for GANs, the authors proposed adding a classifier trained to learn the relationship between the labels and the other features. The classifier ensures a higher number of semantically correct produced records. TableGAN uses a MinMax transformer for the features. 
    
    \item \textbf{CTGAN}~\citep{xu2019_CTGAN} uses a  conditional generator and training-by-sampling strategy in a generator-discriminator GAN architecture to model tabular data.  The conditional generator generates synthetic rows conditioned on one of the discrete columns. The training-by-sampling ensures that the data are sampled according to the log-frequency of each category. Both help to better model the imbalanced categorical columns. CTGAN transforms discrete features using one-hot encoding and a mode-based normalization for continuous features. A variational Gaussian mixture model~\citep{camino2018_vgm} is used to estimate the number of modes and fit a Gaussian mixture. For each continuous value, a mode is sampled based on probability densities, and its mean and standard deviation are used to normalize the value. 

    \item \add{\textbf{TVAE}~\citep{xu2019_CTGAN} was proposed as a variation of the standard Variational AutoEncoder to handle tabular data. It uses the same transformations of data as CTGAN and trains the encoder-decoder architecture using evidence lower-bound 
 (ELBO) loss.}

    \item \add{\textbf{GOGGLE}~\citep{liu2022goggle} is a graph-based approach to learning the relational structure of the data as well as functional relationships (dependencies between features). The relational structure of the data is learned by building a graph where nodes are variables and edges indicate dependencies between them. The functional dependencies are learned through a message passing neural network (MPNN). The generative model generates each variable considering its surrounding neighborhood.}
    
\end{itemize}

\subsection{Datasets}
\label{app:datasets}

We use 6 real-world datasets covering both classification and regression tasks. An overview of these datasets' statistics can be found in Table~\ref{tab:app_dataset}. 
For the selection, we focused on datasets with at least three feature relationship constraints that either were provided with the description of the datasets or we could derive with our domain expertise. The selected datasets are listed below: 

\begin{itemize}
    \item \phishing\footnote{Link to dataset: https://data.mendeley.com/datasets/c2gw7fy2j4/2}~\citep{hannousse2021towards} is used to perform webpage phishing detection with features describing statistical properties of the URL itself as well as the content of the page. 
    \item \wids\footnote{Link to dataset: https://www.kaggle.com/competitions/widsdatathon2021} is used to predict if a patient is diagnosed with a particular type of diabetes named Diabetes Mellitus, using data from the first 24 hours of intensive care. 
    \item \lcld{}\footnote{Link to dataset: https://figshare.com/s/84ae808ce6999fafd192} is used to predict whether the debt lent is unlikely to be collected. In particular, we use the feature-engineered dataset from~\cite{simonetto2022}. The dataset captures features related to the loan as well as client history.
    \item \heloc{}\footnote{Link to dataset: https://huggingface.co/datasets/mstz/heloc} is used to predict whether customers will repay their credit lines within 2 years. Similarly to \lcld, the dataset has features related to the credit line and the client's history. 
    \item \faults{}\footnote{Link to dataset: https://www.kaggle.com/datasets/uciml/faulty-steel-plates}\citep{buscema1998metanet} is used to predict 7 types of surface defects in stainless steel plates. The features approximately capture the geometric shape of the defect and its outline. 
    \item \news{}\footnote{Link to dataset: https://archive.ics.uci.edu/dataset/332/online+news+popularity} is used to predict the number of times a news article will be shared on social networks. The features capture properties of the text, as well as the publishing time. 
\end{itemize}

For \phishing{}, \wids{}, and \lcld{}, we used the train-val-test splits provided by \cite{simonetto2022}. 
For \heloc{} we used the train-test split of 80-20 and 20\% of the training set was later split for validation. 
Finally, for \faults{} and \news{} we split the data into 80-10-10\% sets, and for the former (which is a multiclass classification dataset) we preserved the class imbalance.

\begin{table}[t]
\centering
\caption{Datasets statistics.}
\begin{tabular}{lrrrrrrl}
\toprule
Dataset & \# Train & \# Val & \# Test &  \# Features & \# Cat. & \# Cont. & Task (\# classes)                   \\ 
\midrule
\phishing{}     & 7K     & 2K   & 2K  & 64 &  20           & 44          &  Binary classification                  \\
\wids{}    & 22K    & 6K   & 7K & 109 &   9            & 100        & Binary classification            \\
\lcld{}    & 494K   & 199K & 431K & 29 & 8            & 21         & Binary classification                   \\
\heloc{}   & 8K     & 2K   & 0.2K  & 24 & 8            & 16         & Binary classification               \\
\faults{}  & 2K     & 0.2K & 0.2K & 28  & 0            & 28          & Multi-class class. (7)       \\
\news{}    & 31K    & 7K   & 1K  &60    & 14           & 46         & Regression                           \\ 
\bottomrule
\end{tabular}
\label{tab:app_dataset}
\end{table}

\subsection{Constraints Datasheet}
\label{app:constraints}

Here we give an overview of the structure of our constraints. 
To this end, we define $F$ to be the number of features appearing in at least one constraint, and we remind that $D$ is the total number of features. Then, given a constraint $\phi$, we define $F^+_{\phi}$ (resp. $F^-_{\phi}$) to be the number of features appearing positively (resp. negatively) in $\phi$, and $F_{\phi} = F^+_{\phi} + F^-_{\phi}$ to be the number of features appearing in $\phi$. %

As we can see from Table~\ref{tab:app_cons}, the constraints characteristics greatly vary from one dataset to the other. Indeed, we can have from 4 to 31 constraints annotated for each dataset, with as little as 15\% of the variables appearing in at least a constraint in \news{} to as much as 56.88\% in \wids{}. Further, we can see that for almost all datasets, the average number of features appearing per constraint is 2.00, except for \phishing{}, which has a constraint where as many as 17 different features appear, and \lcld{} where we have 2 constraints where a single variable appears. %

\subsection{Evaluation protocol}
\label{app:eval_protocol}

For evaluating the utility of the DGM/C-DGM models presented in our paper, we followed closely the protocol from \cite{kim2023stasy} which we also reproduce here.
\begin{enumerate}
    \item First, we generate a synthetic dataset, split into training, validation and test partitions using the same proportions as the real dataset.
    \item Then, we perform a hyperparameter search using the  synthetic training data partition to train different classifiers/regressors.
    
For the binary classification datasets (i.e., \phishing{}, \wids{}, \lcld{}, and \heloc{}) we use: Decision Tree~\citep{decision_tree}, AdaBoost~\citep{adaboost},  Multi-layer Perceptron (MLP)~\citep{MLP}, Random Forest~\citep{random_forest}, XGBoost~\citep{xgboost}, and Logistic Regression~\citep{logistic_regression} classifiers.
For multi-class classification datasets, (i.e., \faults{}) we use Decision Tree, MLP, Random Forest, and XGBoost classifiers.
For the regression dataset, (i.e., \news{}) we use MLP, XGBoost, and Random Forest regressors and linear regression.
For all the classifiers and regressors above, we considered the same hyperparameter settings as those from Table 26 of~\citep{kim2023stasy} and picked the best hyperparameter configuration using the real validation set according to the F1-score.

    \item Finally, we tested the selected best models on the real test set and averaged the results across all the classifiers/regressors to get performance measurements for the DGM/C-DGM predictions according to three different metrics: F1-score, weighted F1-score, and Area Under the ROC Curve.
\end{enumerate}

\begin{table}[t]
    \centering
    \caption{Constraints statistics.}%
    \begin{tabular}{l r r r r r r r r}
    \toprule
        Dataset &  \# Constr. & $F$ / $D$   & Avg. $F_\phi$ & Avg. $F_{\phi}^+$  & Avg. $F_{\phi}^-$ \\ %
        \midrule
         \phishing{} & 8 &  24 / 64 \ &  4.25 & 1.00 & 3.25 \\ %
         \wids{} & 31 & 62 / 109 &2.00 & 1.00 & 1.00 \\ %
         \lcld{} & 4 & \ 5 / 29 \    &1.50 & 0.75 & 0.75 \\ %
         \heloc{} & 7 & 10 / 24 \  &2.00 & 1.00 & 1.00 \\ %
         \faults{} & 4 & \  7 / 28 \  &2.00 & 1.00 & 1.00 \\ %
         \news{} & 5 & \ 9 / 60 \  & 2.00 & 1.00 & 1.00 \\ %
      \bottomrule   
    \end{tabular}
    \label{tab:app_cons}
\end{table}

The above procedure was repeated 5 times for each DGM/C-DGM model, and the results were averaged separately for each of the metrics.

For evaluating the DGM/C-DGM models in terms of detection, we slightly adapted the procedure presented by \cite{kim2023stasy}.
We first created the training, validation, and test sets by concatenating real and synthetic data, including their targets as usual features, and adding a new target column that specifies whether the data is real or not.
By construction, these datasets are binary classification datasets and, thus, are suitable for the hyperparameter search procedure presented in \cite{kim2023stasy} using the 6 different binary classifiers mentioned above.
We proceeded to pick the best model using the newly-created validation set, and then we obtained the final detection performance on the newly-created test data (which combines the real and the synthetic data).

\subsection{Hyperparameter search}
\label{sec:appendix_hp}

Prior to experimenting with our C-DGM methods,
we conducted an extensive hyperparameter search to reveal the best configurations.
We always chose the best settings according to the utility performance measured either by the average over the F1-score, weighted F1-score, and Area Under the ROC Curve for the binary and multi-class classification datasets or by the Mean Absolute Error for the regression dataset.

\textbf{Initial phase.}
\add{For GOGGLE, we used the same optimiser and learning rate set as \cite{liu2022goggle}. Specifically, we used \adam{}~\citep{adam} with a set of 5 different learning rates: $\{ \num{1e-3}, \num{5e-3}, \num{1e-2} \}$. 
For \tvae{}, we used \adam{} with a set of 5 different learning rates: $\{ \num{5e-6}, \num{1e-5}, \num{1e-4}, \num{2e-4}, \num{1e-3} \}$.}
And for each of the other DGM models, we used three different optimizers, \adam{}, \rmsprop{}~\citep{rmsprop}, \sgd{}~\citep{sgd}, each with a different set of learning rates: 
\begin{itemize}

\item for WGAN $\{ \num{1e-4}, \num{1e-3}\}$, $\{\num{5e-5}, \num{1e-4}, \num{1e-3}\}$, and $\{\num{1e-4}, \num{1e-3}\}$, respectively.
\item for TableGAN, $\{\num{5e-5}, \num{1e-4}, \num{2e-4},  \num{1e-3}\}$, $\{\num{1e-4}, \num{2e-4}, \num{1e-3}\}$ and $\{\num{1e-4}, \num{1e-3}\}$, respectively.
\item for CTGAN, $\{\num{5e-5}, \num{1e-4}, \num{2e-4}\}$, $\{\num{1e-4}, \num{2e-4},  \num{1e-3}\}$ and $\{\num{1e-4}, \num{1e-3}\}$, respectively.
\end{itemize}

Then, for each of the above optimizer-learning rate pairs, we tested three different batch sizes, depending on the DGM model: $\{64, 128, 256\}$ for WGAN, $\{128, 256, 512\}$ for TableGAN, $\{70, 280, 500\}$ \add{for CTGAN and TVAE, and $\{64, 128\}$ for GOGGLE}.
The batch sizes for CTGAN are multiples of 10, to allow for using the CTGAN's recommended \pac{}~\citep{lin2018pacgan} value of 10, among other values.

\textbf{Further search.} 
The initial phase of hyperparameter search allowed us to narrow down the space of configurations and focus on further investigating the most promising one. For the second phase, we varied the following parameters for each DGM, keeping fixed the rest from the best model of the initial phase: 

\begin{itemize}
\item for WGAN we varied initially \pac{} values to 4, 8, and 16 and then discriminator iterations to 1,2, and 10.

\item for TableGAN we varied separately \textit{i)} generator layers dimensions to 128 from 100  initially and \textit{ii)} embedding dimensions by doubling the value.

\item for CTGAN we varied the value of \pac{} within $\{1,5,15\}$.
\item \add{for TVAE we varied the multiplier of the loss within $\{1,2,3,4\}$}.

\end{itemize}

The best hyperparameters settings are presented in Table~\ref{tab:best_hyperparam_configs}.
We used these configurations for the experiments presented in our paper. The same hyperparameters were then used for C-DGMs.
Furthermore, for C-DGMs we reported the variable ordering hyperparameter that needs to be given to our CL.
We give a full description of the orderings and of the impact they have on the performance in Appendix~\ref{app:orderings}.

\begin{table}[ht!]
\small
\caption{Best hyperparameter settings used for DGMs/C-DGMs in our experiments.}
\label{tab:best_hyperparam_configs}
\begin{center}
\begin{tabular}{@{}llllllll@{}}
\toprule
Model/Dataset   & Hyperparameter    & \phishing{}            & \wids{}          & \lcld{}        & \heloc{}           & \faults{}         & \news{}           \\ \midrule
\multirow{6}{*}{WGAN} & Batch size &  64&  128& 128& 64&64&128\\
& Optimiser & \adam{}& \rmsprop{}& \sgd{} & \adam{}& \adam{} & \rmsprop{}\\
& Learning rate & 0.0001& 0.001& 0.001&0.0001&0.0001&0.001\\
& Epochs &  300&  80&30&200& 300&50\\
& Discriminator iters & 10 & 10& 5 & 10 & 10 &10\\
& Ordering & Rnd & KDE & KDE & KDE & KDE & KDE\\
\cmidrule{1-1}

\multirow{5}{*}{TableGAN} & Batch size &  128& 128& 128&128&128&128\\
& Optimiser & \adam{} & \rmsprop{}&\adam{}&\adam{}&\adam{}&\rmsprop{}\\
& Learning rate & 0.0010 &0.0010 &0.0010 &0.0010&0.0010&0.0010\\
& Epochs &  300&50&25&200&200&50\\
& Ordering & Corr & Corr & KDE & Corr & KDE & KDE\\
\cmidrule{1-1}

\multirow{5}{*}{CTGAN}  & Batch size & 500&   500 &500&500&500&500\\
& Optimiser & \adam{}&\rmsprop{}&\adam{}&\adam{}&\adam{}&\adam{}\\
& Learning rate &0.0002& 0.0005&0.0002&0.0002& 0.0002&0.0002\\
& Epochs &   150 & 50 &20 &500& 300&100 \\
& Ordering & KDE & Corr & Rnd & Corr & Corr & Corr \\
\cmidrule{1-1}

\multirow{5}{*}{TVAE}  & Batch size & 70& 70&  500&500& 70 & 70\\
& Optimiser & \adam{}&\adam{}&\adam{}&\adam{}&\adam{}&\adam{}\\
& Learning rate &0.0002& 0.000005 & 0.00001 & 0.000005 & 0.00001 & 0.00001\\
& Epochs &   150 & 50 & 40 & 150& 500 & 50 \\
& Loss factor & 2 & 2 &4 & 2 & 1 & 3 \\
& Ordering & KDE & KDE & Corr & Corr & Corr & Corr\\

\cmidrule{1-1}

\multirow{5}{*}{GOGGLE}  & Batch size & 128& 128&  128&64& 128 & 64\\
& Optimiser & \adam{}&\adam{}&\adam{}&\adam{}&\adam{}&\adam{}\\
& Learning rate &0.005& 0.01 & 0.001 & 0.001 & 0.005 & 0.001\\
& Epochs &   1000 & 200 & 200 & 1000& 1000 & 250 \\
& Patience & 50 & 50 & 50 & 50 & 50 & 50 \\
& Ordering & Random & Corr & --- & Random & Corr & KDE\\
\bottomrule
\end{tabular}
\end{center}
\end{table}

\section{Results}

\subsection{BACKGROUND KNOWLEDGE ALIGNMENT}
\label{app:cons_sat}

Besides constraints violation rate
(CVR) presented in the main text, we measure two  
additional metrics: (i)
{\sl constraints violation coverage} (CVC), which, given a set of samples $\mathcal{S}$ and
$\Pi$, represents the percentage of constraints in $\Pi$ that have
been violated at least once by any of the samples in $\mathcal{S}$, and (ii) samplewise {\sl constraints violation
coverage} (sCVC), which represents the average over the samples in $\mathcal{S}$ of the percentage of the constraints violated by each sample. 

Table \ref{tab:cvc} shows that for 5 out of 6 datasets, all the samples generated by unconstrained DGMs
violate at least 50\% of the constraints, with CVC reaching up to 100\% for \wids, \faults and \news. This entails that the models actually struggle with the majority of the constraints specified, and that the problem cannot be solved by just fixing a very small subset of the available constraints. Meanwhile, all our C-DGMs guarantee that not a single constraint is violated by any of the samples.

Regarding sCVC, the results in Table \ref{tab:scvc} show that for all models and datasets, on average each sample can violate up to 29.8\% of the constraints. As a reminder, even a single constraint violated indicates an unfeasible example in a real setting. The unconstrained DGM models learn different representations that produce different rankings for sCVC. For example, WGAN violates on average the most constraints per samples in \lcld{}. But, for TableGAN, CTGAN, TVAE and GOGGLE, \lcld{} is among the datasets with the lowest sCVC. Again, all our C-DGMs ensure that sCVC is 0.0, meaning not a single constraint is violated for all the samples. 

\begin{table}[t]
\centering
\caption{Constraints violation coverage (CVC) for all datasets and models under study.}
\begin{tabular}{lllllll}
\toprule
Model/Dataset                                                         & URL          & WIDS         & LCLD         & HELOC        & Faults       & News           \\ \midrule
WGAN                                                                  & 11.1 $\spm$ 1.6 & 100.0 $\spm$ 0.0  & 75.0 $\spm$ 0.0 & 100.0 $\spm$ 0  & 75.0 $\spm$ 0.0   & 84.8 $\spm$ 8.7 \\
TableGAN                                                              & 24.5 $\spm$ 3.7 & 100.0 $\spm$ 0.0  & 50.0 $\spm$ 0.0 & 100.0 $\spm$ 0  & 75.0 $\spm$ 0.0   & 100.0 $\spm$ 0.0    \\
CTGAN       &     16.5 $\spm$    3.8    &    100.0  $\spm$ 0.0        &    50.0     $\spm$  0.0    &  99.4 $\spm$    1.3        &    75.0   $\spm$   0.0     &   100.0     $\spm$ 0.0        \\ 
\add{TVAE}       &   \add{12.5 $\spm$  0.0}   &    \add{100 $\spm$ 0.0}       &    \add{50.0   $\spm$  0.0}    &   \add{100.0 $\spm$ 0.0}   &    \add{75.0 $\spm$ 0.0}     &     \add{100.0 $\spm$ 0.0}       \\ 
\add{GOGGLE} & \add{17.5 $\spm$ 6.9} & \add{99.2 $\spm$ 1.7} & \add{50.0 $\spm$ 0.0} & \add{100.0 $\spm$ 0.0} & \add{75.0 $\spm$ 0.0} & \add{100.0 $\spm$ 0.0}\\
\midrule
All C-models & \textbf{0.0 $\spm$ 0.0} & \textbf{0.0 $\spm$ 0.0} & \textbf{0.0 $\spm$ 0.0} & \textbf{0.0 $\spm$ 0.0} & \textbf{0.0 $\spm$ 0.0} & \textbf{0.0 $\spm$ 0.0}   \\ \bottomrule
\end{tabular}
\label{tab:cvc}
\end{table}
\begin{table}[t]
\centering
\caption{Samplewise constraints violation coverage (sCVC) for all models and datasets under study.}
\begin{tabular}{lllllll}
\toprule
Model/Dataset                                                         & URL          & WIDS         & LCLD         & HELOC        & Faults       & News           \\ \midrule
WGAN                                                                  & 1.4 $\spm$ 0.2             & 21.2 $\spm$ 1.3             & 29.8 $\spm$ 0.2              & 10.5 $\spm$ 2.5              & 23.3 $\spm$ 4.3               & 12.1 $\spm$ 2.4             \\
TableGAN                                                              & 0.6 $\spm$ 0.2             & 14.4 $\spm$ 2.8             & 1.5 $\spm$ 0.2             & 9.0 $\spm$ 3.3                 & 22.9 $\spm$ 3.2               & 24.1 $\spm$ 3.3             \\
CTGAN              &     0.4 $\spm$ 0.3                   &           21.6 $\spm$   0.5           &               3.0   $\spm$  0.7      &      7.6  $\spm$     0.3              &   24.4    $\spm$          2.4           &     15.8            $\spm$   3.9      \\ 
\add{TVAE}             &    \add{0.4 $\spm$ 0.1}                &        \add{21.0    $\spm$  0.2}        &      \add{1.0          $\spm$ 0.1}    &    \add{9.4  $\spm$ 0.2}           &    \add{21.5  $\spm$   1.5}            &           \add{14.3 $\spm$   1.1}   \\
\add{GOGGLE} & \add{0.8 $\spm$ 0.9} & \add{10.6 $\spm$ 2.4} & \add{3.3 $\spm$ 0.7} & \add{17.2 $\spm$ 4.9} & \add{18.8  $\spm$ 5.3} & \add{10.7 $\spm$ 1.6}\\
\midrule
All C-models & \textbf{0.0 $\spm$ 0.0}             & \textbf{0.0 $\spm$ 0.0}               & \textbf{0.0 $\spm$ 0.0}               & \textbf{0.0 $\spm$ 0.0}                & \textbf{0.0 $\spm$ 0.0}                & \textbf{0.0 $\spm$ 0.0}               \\ \bottomrule
\end{tabular}
\label{tab:scvc}
\end{table}

\begin{figure}[ht!]
    \centering
    \begin{subfigure}{0.45\textwidth}
        \centering
        \includegraphics[width=0.9\textwidth]{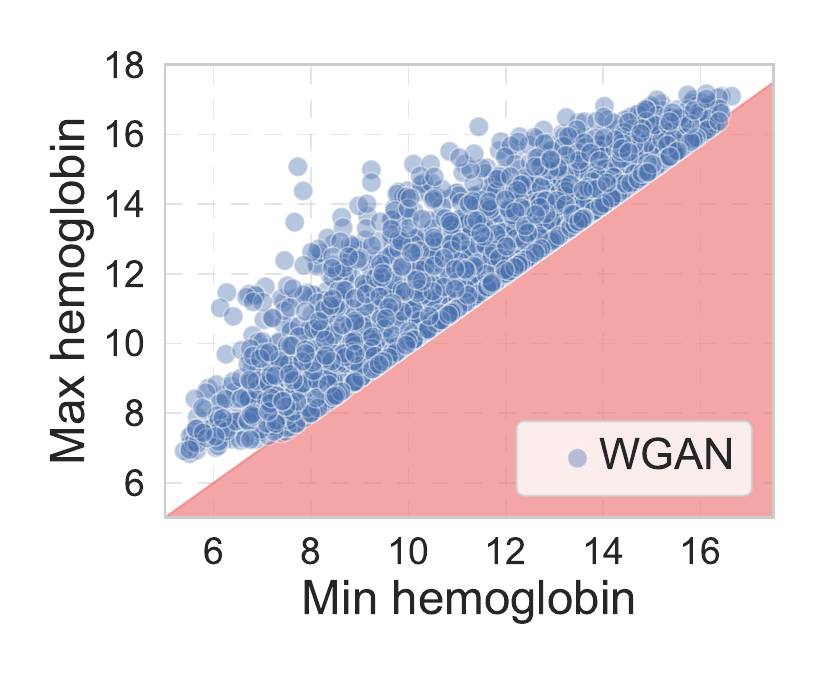}
    \end{subfigure}
    \hfill
    \begin{subfigure}{0.45\textwidth}
        \centering
        \includegraphics[width=0.9\textwidth]{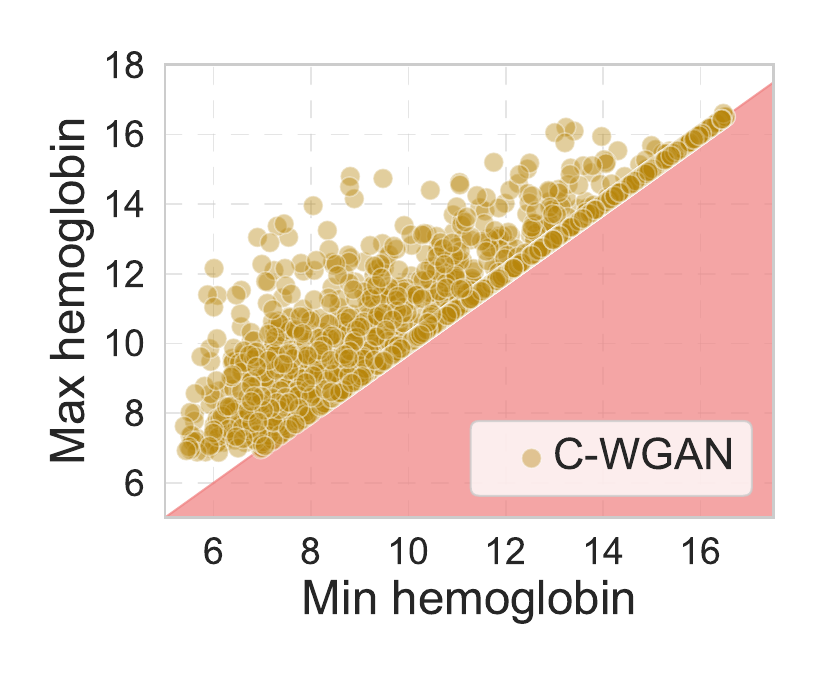}
    \end{subfigure}
    \begin{subfigure}{0.45\textwidth}
        \centering
        \includegraphics[width=0.9\textwidth]{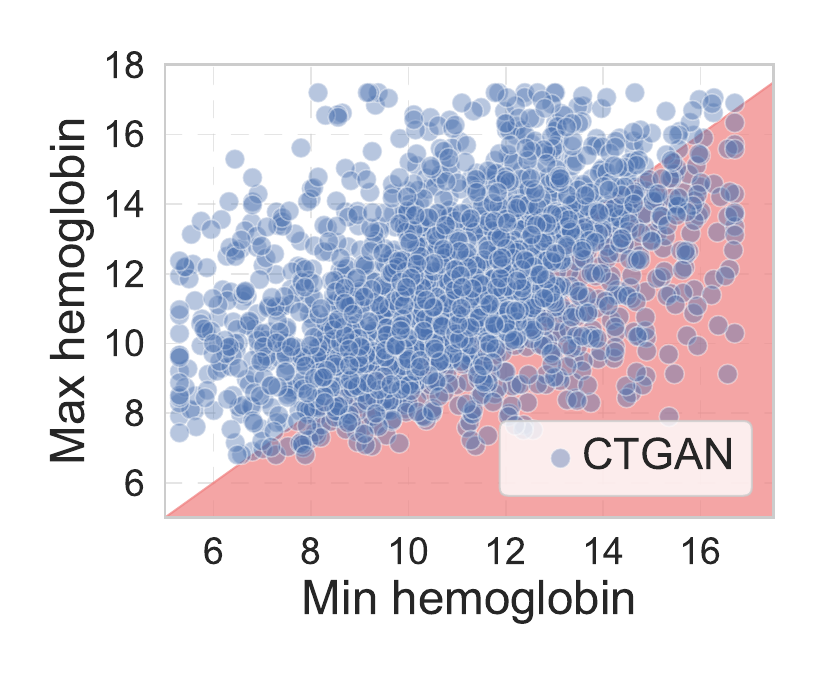}
    \end{subfigure} \hfill
    \begin{subfigure}{0.45\textwidth}
        \centering
        \includegraphics[width=0.9\textwidth]{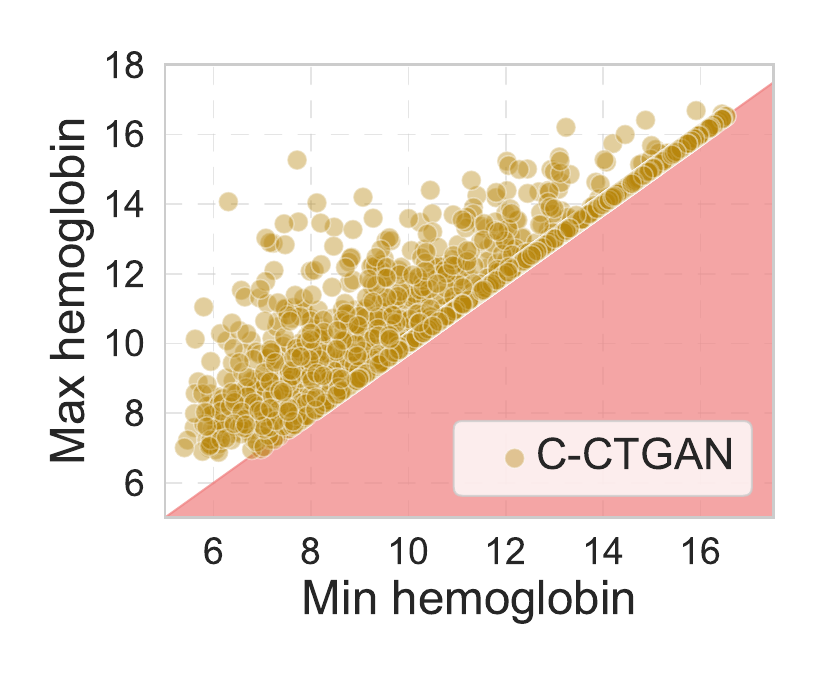}
    \end{subfigure}\hfill
    \begin{subfigure}{0.45\textwidth}
        \centering
    \includegraphics[width=0.9\textwidth]{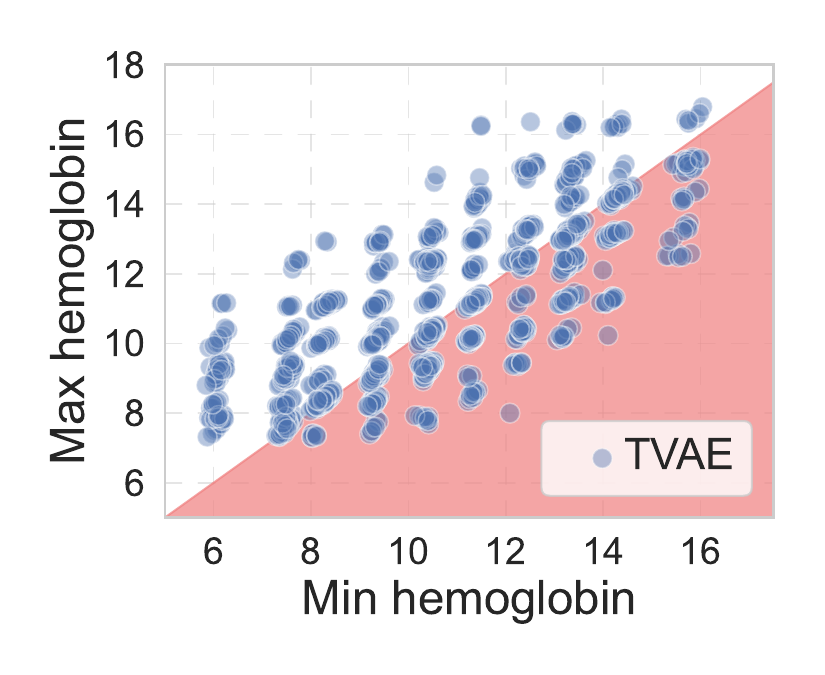}
    \end{subfigure}\hfill
    \begin{subfigure}{0.45\textwidth}
        \centering
    \includegraphics[width=0.9\textwidth]{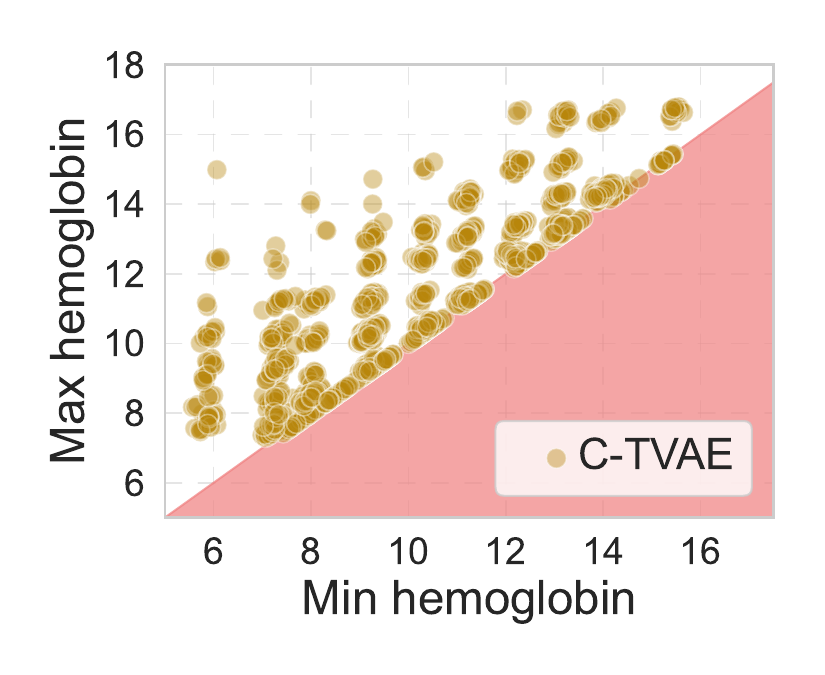}
    \end{subfigure}\hfill
    \begin{subfigure}{0.45\textwidth}
        \centering
    \includegraphics[width=0.9\textwidth]{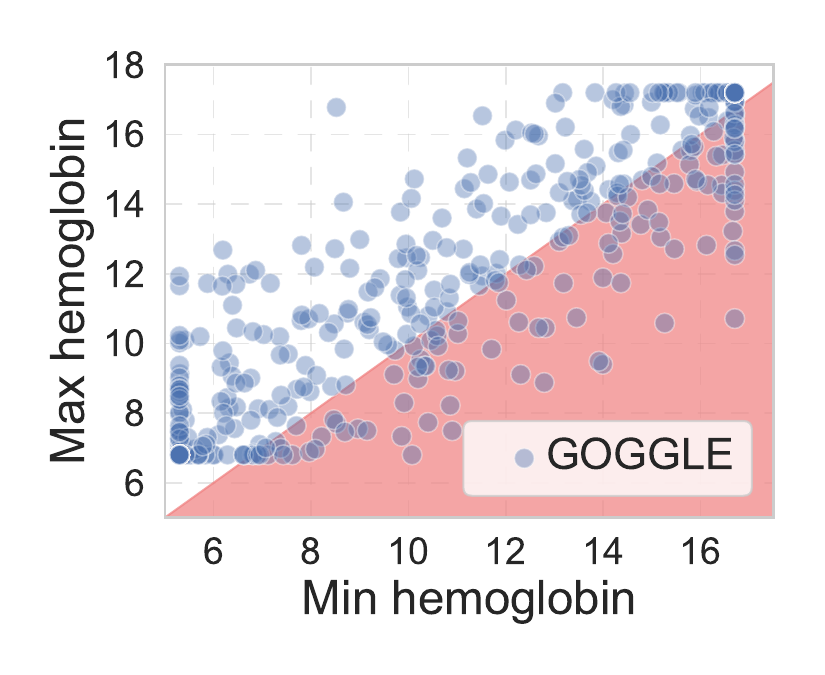}
    \end{subfigure}\hfill
    \begin{subfigure}{0.45\textwidth}
        \centering
    \includegraphics[width=0.9\textwidth]{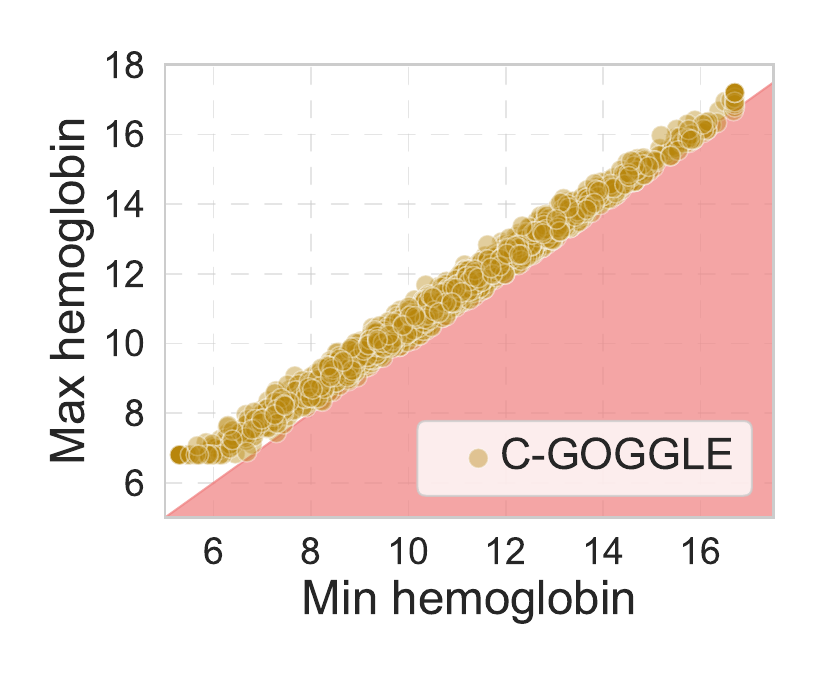}
    \end{subfigure}

    \caption{Samples generated by WGAN, CTGAN and \add{TVAE}, \add{GOGGLE} and their constrained versions for \wids. The real and TableGAN distributions are given in Section \ref{subsec:exp_violations} in the main text.}
    \label{fig:app_wids_ilus}
\end{figure}

We complement our quantitative analysis of background knowledge alignment for the generated synthetic samples with some visualizations. In particular, we consider three different constraints where only two variables appear, and then for each one of them, we create four two-dimensional scatter plots with the variables appearing in the constraint on the axes. The first scatter plot represents the real data, while the other three represent the samples generated by each of the DGMs and the C-DGMs. We now consider them one by one.

\begin{figure}[!htb]
    \centering

    \begin{subfigure}{0.45\textwidth}
        \centering
        \includegraphics[width=0.9\textwidth]{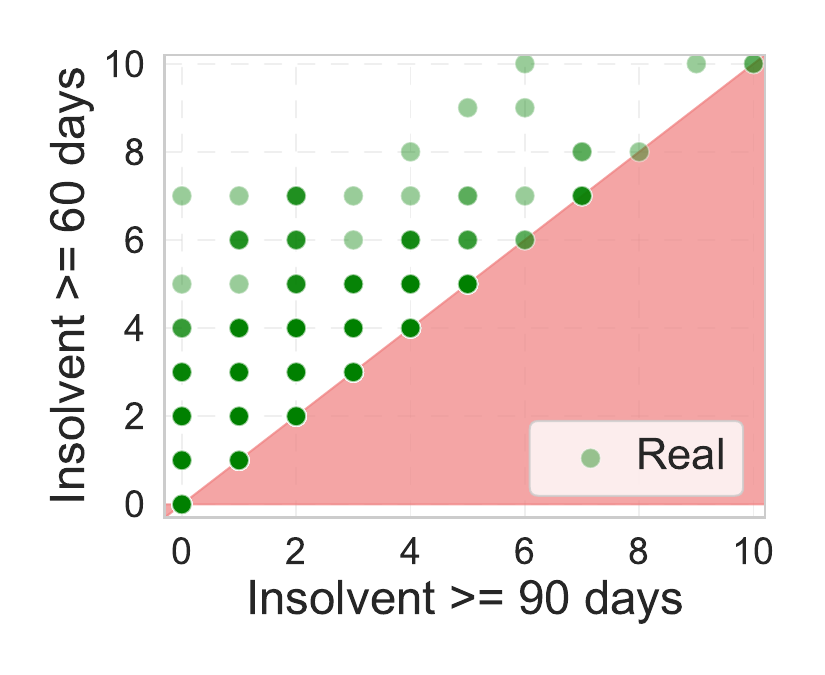}
    \end{subfigure}
    \hfill
    \begin{subfigure}{0.45\textwidth}
        \centering
        \includegraphics[width=0.9\textwidth]{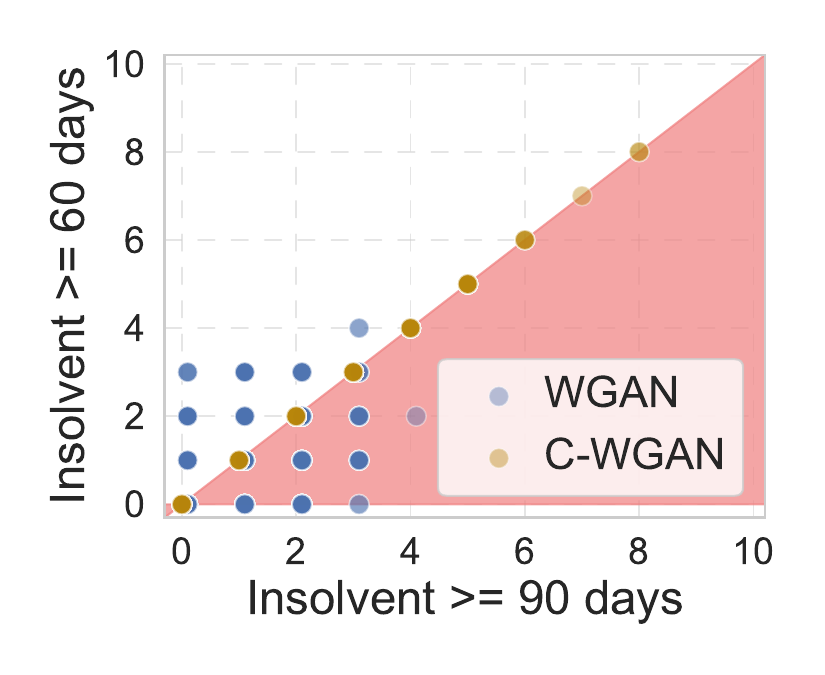}
    \end{subfigure}

    \begin{subfigure}{0.45\textwidth}
        \centering
        \includegraphics[width=0.9\textwidth]{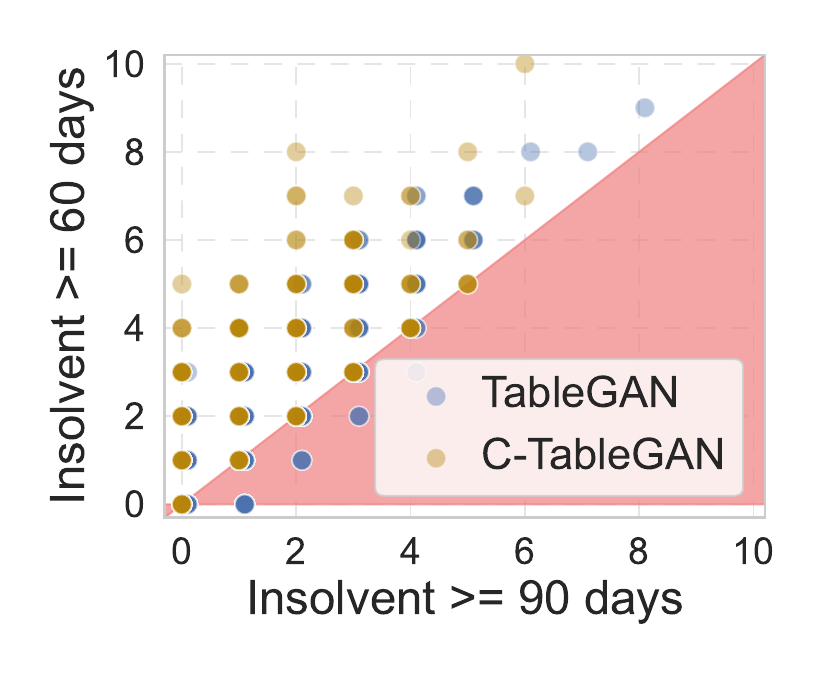}
    \end{subfigure}
    \hfill
    \begin{subfigure}{0.45\textwidth}
        \centering
        \includegraphics[width=0.9\textwidth]{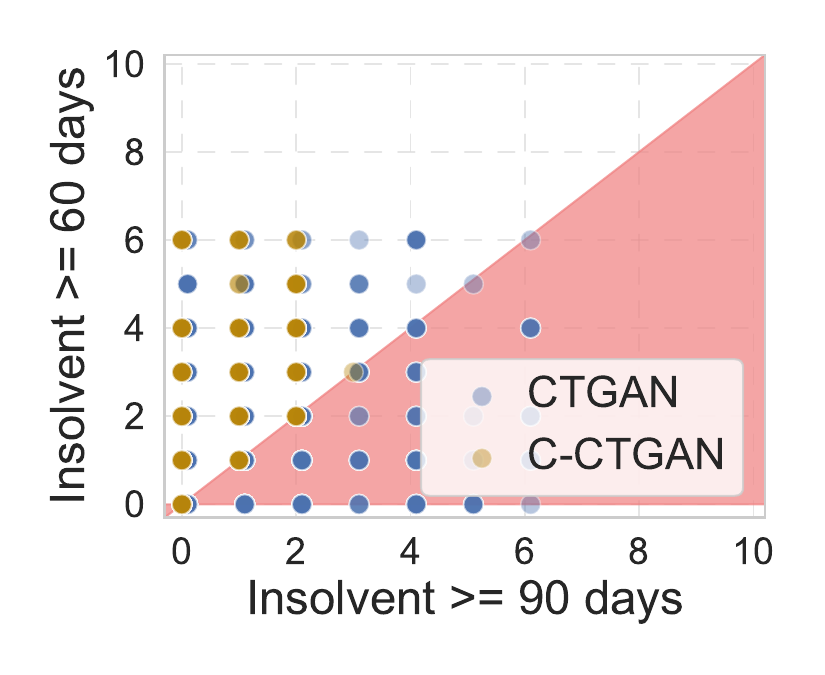}
    \end{subfigure}\hfill
    \begin{subfigure}{0.45\textwidth}
        \centering
        \includegraphics[width=0.9\textwidth]{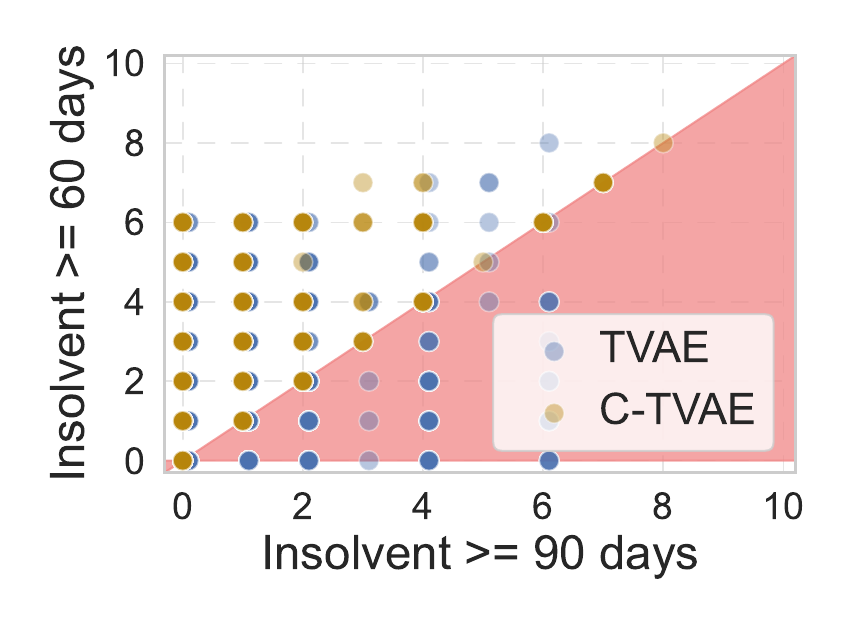}
    \end{subfigure}\hfill
      \begin{subfigure}{0.45\textwidth}
        \centering
        \includegraphics[width=0.9\textwidth]{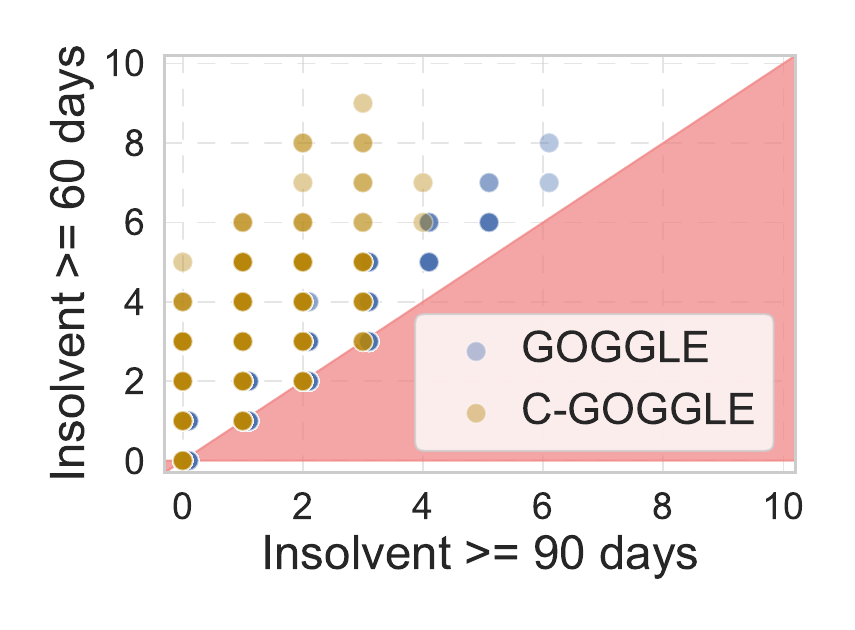}
    \end{subfigure}
    \caption{Real data and samples generated by WGAN, TableGAN, CTGAN, \add{TVAE}, \add{GOGGLE} and their constrained versions for \heloc. }
    \vspace{0cm}
    \label{fig:app_heloc_ilus}
\end{figure}

\begin{figure}[!htb]
    \centering
    \begin{subfigure}{0.45\textwidth}
        \centering
        \includegraphics[width=0.9\textwidth]{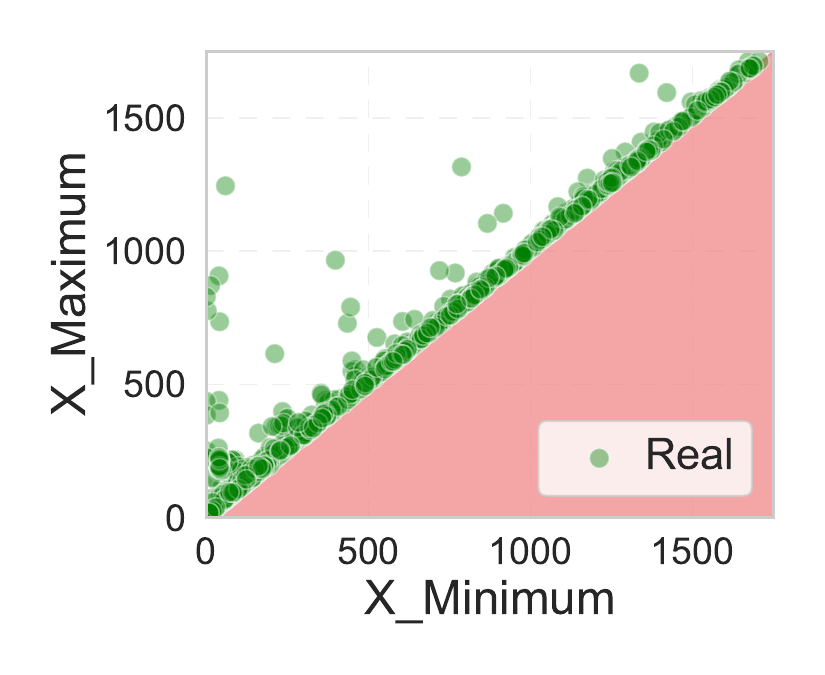}
    \end{subfigure}
    \hfill
    \begin{subfigure}{0.45\textwidth}
        \centering
        \includegraphics[width=0.9\textwidth]{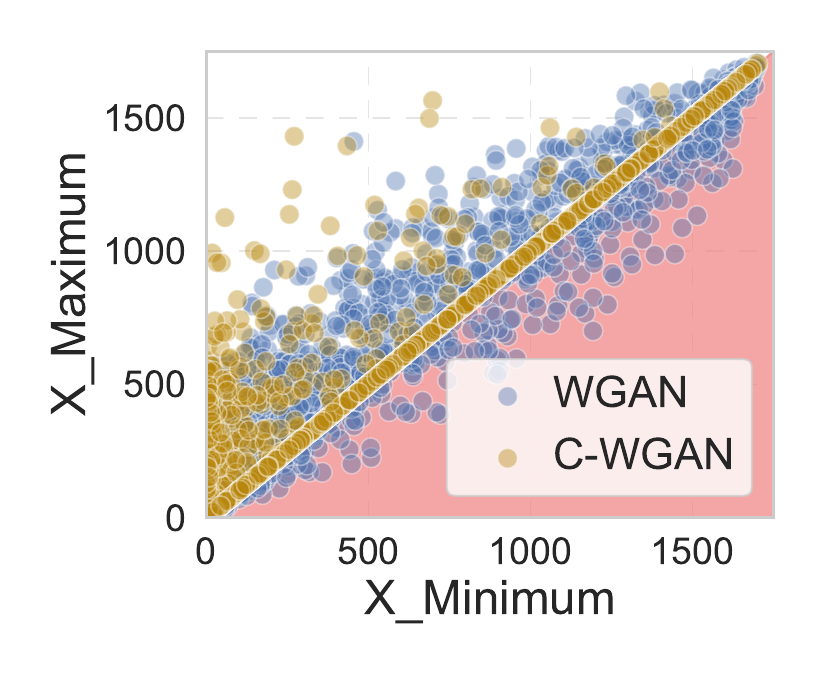}
    \end{subfigure}

    \begin{subfigure}{0.45\textwidth}
        \centering
        \includegraphics[width=0.9\textwidth]{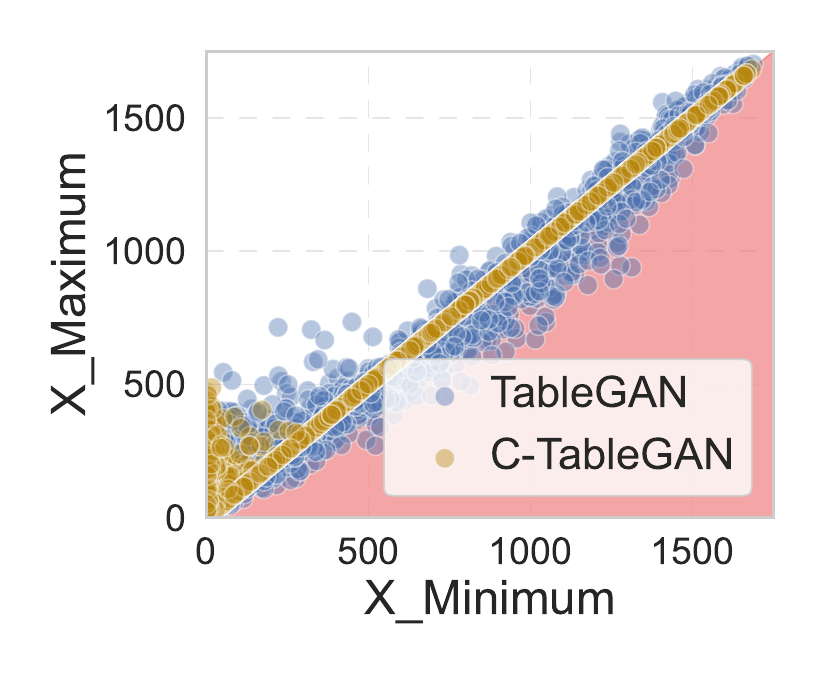}
    \end{subfigure}
    \hfill
    \begin{subfigure}{0.45\textwidth}
        \centering
        \includegraphics[width=0.9\textwidth]{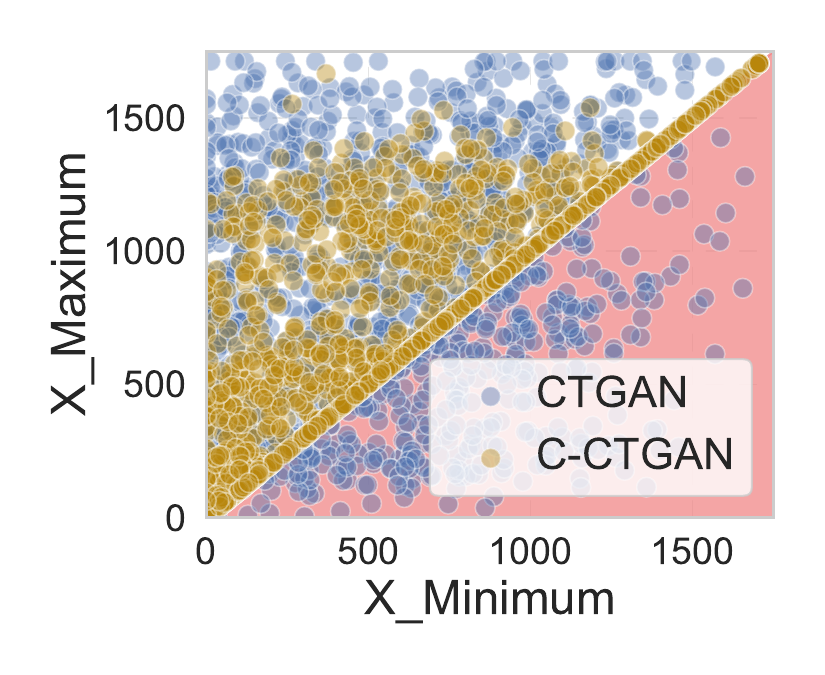}
    \end{subfigure}\hfill
        \begin{subfigure}{0.45\textwidth}
        \centering
        \includegraphics[width=0.9\textwidth]{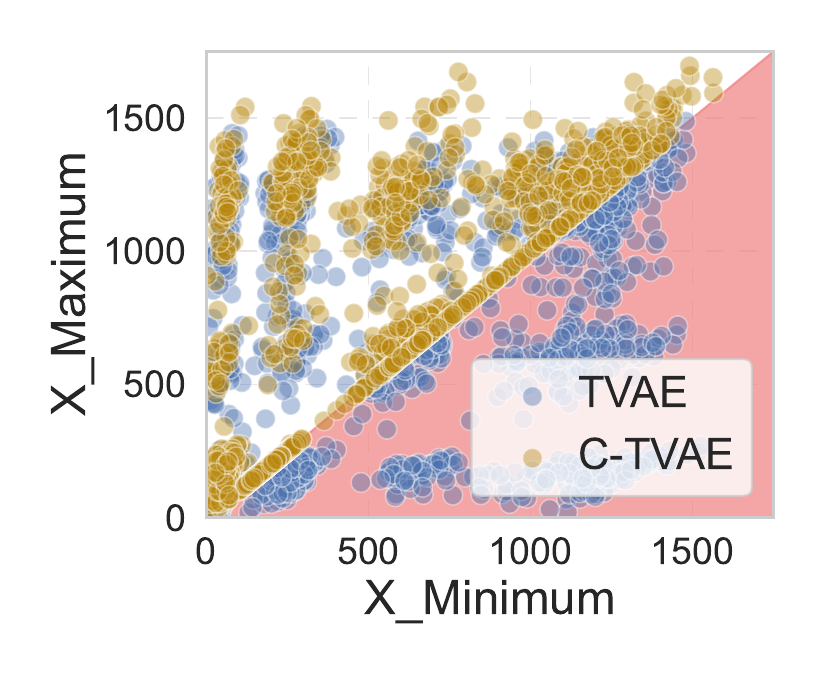}
    \end{subfigure}
    \hfill
        \begin{subfigure}{0.45\textwidth}
        \centering
        \includegraphics[width=0.9\textwidth]{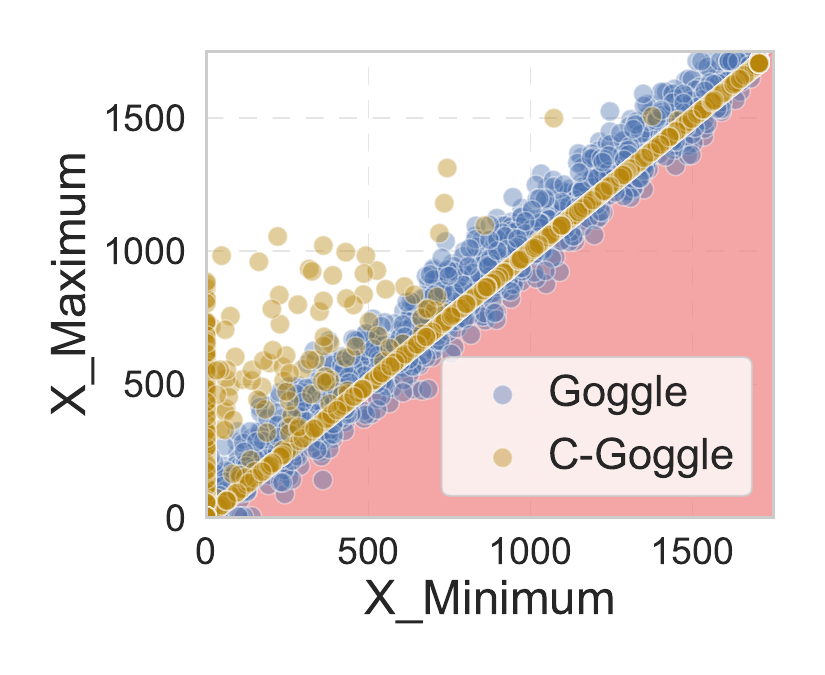}
    \end{subfigure}

    \caption{Real data and samples generated by WGAN, TableGAN, CTGAN, \add{TVAE}, \add{GOGGLE} and their constrained versions for \faults. }
    \label{fig:app_faults_ilus}
        \vspace{0.5cm}

\end{figure}

\begin{enumerate}[leftmargin=*]
\item In Figure \ref{fig:app_wids_ilus}, we consider again the constraint {\sl MaxHemoglobinLevel} $-$
{\sl MinHemoglobinLevel} $\geq 0$ appearing in \wids{} dataset from Section~\ref{subsec:exp_violations} in the main text. We visualize the outputs for the remaining models WGAN, CTGAN, \add{TVAE and GOGGLE} and their constrained counterparts C-WGAN, C-CTGAN, \add{C-TVAE and C-GOGGLE}. 
\item In Figure \ref{fig:app_heloc_ilus}, we illustrate the behavior of the real and synthetic samples with respect to a constraint from the \heloc{} dataset which requires that the number of trades that have been insolvent for 60 days must be greater than the number of trades that have been insolvent for 90 days, i.e., {\sl NumInsolventTradesGreaterThan60Days}  $\geq$ {\sl NumInsolventTradesGreaterThan90Days}. 
\item In Figure \ref{fig:app_faults_ilus}, we illustrate the behavior of the real and synthetic samples with respect to a constraint from the \faults{} dataset that requires {\sl X\_Maximum} $\geq$ {\sl X\_minimum}, where {\sl X\_minimum} and {\sl X\_maximum} refer to the value for the $X$ coordinate in images captured of steel plates. 
\end{enumerate}

The Figures visually demonstrate that the models that incorporate our constrained layer (C-WGAN, C-CTGAN, C-TableGAN, \add{C-TVAE and C-GOGGLE}) are guaranteed to produce outputs only within the feasible regions of the real data.

\add{

Finally, to further investigate the properties of the generated data by C-DGMs, we study the impact of constraint reparation on the boundary population for the three cases considered above. We defined a band around the boundary whose width $w$ we set to be proportional to the range of the values of the considered features in the real dataset. As in all the considered constraints only two features appear, we set $w = \sqrt{(r_1p)^2+(r_2p)^2}$, where $r_1$ (resp. $r_2$) represents the range of the first (resp. second) feature, while $p$ represents the proportion of the range of values each feature can assume. 
If $p=1$, then the width is equal to the diagonal of the rectangle defined by the two points with minimum and maximum coordinates. 

The results in Table \ref{tab:border metrics} show the percentage of generated samples that lay on the boundary when $p = \{1\%, 5\%, 10\%\}$ for the real dataset (last row), individual DGMs and C-DGMs, as well as their averages (third and second to last row, respectively). Notice that C-DGMs populate the boundary at a much closer rate to the real data than their unconstrained counterparts. Indeed the maximum difference between the percentage of real data points laying at the boundary and the average number of samples generated by C-DGMs is equal to 14.7\% (for dataset \wids{} and $p = 1\%$). On the contrary, the maximum difference between the number of real data points laying at the boundary and the average number of samples generated by DGMs is equal to 51.5\% (for dataset \faults{} and $p = 1\%$)
}

\add{
\subsection{Feature Distribution Analysis}
\label{sec:appendix_feature_distribution}

Following \cite{kotelnikov2023tabddpm} and \cite{zhao2021} we perform a comparative analysis of the distributions of data generated by the models under study and the real data. The results are presented in Table \ref{tab:WD} for continuous features, for which we measure the difference between the real data and  generated data distributions using the Wasserstein distance, and in Table \ref{tab:JSD} for categorical features, for which we measure the difference between the real data and  generated data distributions using the Jensen-Shannon divergence.  

As we can see from Table \ref{tab:WD}, for every model and dataset there is very little difference in the Wasserstein distance obtained with the standard DGMs, the P-DGMs, and the C-DGMs. The only big gap is registered for GOGGLE on the \wids{} dataset, where the Wasserstein distance obtained with C-GOGGLE is equal to 0.05 while the one obtained with GOGGLE and P-GOGGLE is equal to 0.22. Regarding the results for the categorical features in Table \ref{tab:JSD} we can see that the differences between DGMs and C-DGMs are very small, while, as expected, there is no difference between the results obtained with the DGMs and the P-DGMs. Indeed, our datasets are annotated with no constraints over the categorical features.

}

\begin{table}[ht!]
\centering
\small
\caption{Percentage of the generated samples that lie on the boundary.}
\setlength{\tabcolsep}{2pt}
\label{tab:border metrics}
\color{black}\begin{tabular}{@{}llllllllll@{}}
\toprule
           & \multicolumn{3}{c}{\wids}             & \multicolumn{3}{c}{\heloc}               & \multicolumn{3}{c}{\faults}           \\ \cmidrule(l){2-10} 
           & $p =1\%$       & $p=5\%$       & $p=10\%$      & $p=1\%$       & $p=5\%$        & $p=10\%$       & $p=1\%$       & $p=5\%$       & $p=10\%$      \\ \midrule

WGAN       & 28.5\msmall{\pm{0.4}} & 45.2\msmall{\pm{0.5}} & 67.0\msmall{\pm{0.5}} & 22.5\msmall{\pm{0.4}}  & 83.9\msmall{\pm{0.2}}  & 99.4\msmall{\pm{0.1}}  & 14.8\msmall{\pm{1.3}} & 55.6\msmall{\pm{1.7}} & 81.2\msmall{\pm{0.7}} \\
C-WGAN     & 62.1\msmall{\pm{0.3}} & 72.1\msmall{\pm{0.2}} & 83.1\msmall{\pm{0.2}} & 100.0\msmall{\pm{0.0}} & 100.0\msmall{\pm{0.0}} & 100.0\msmall{\pm{0.0}} & 76.8\msmall{\pm{1.0}} & 83.1\msmall{\pm{0.8}} & 89.9\msmall{\pm{0.5}} \\ \cmidrule(r){1-1}
TableGAN   & 19.7\msmall{\pm{0.1}} & 71.9\msmall{\pm{0.2}} & 88.7\msmall{\pm{0.1}} & 37.4\msmall{\pm{0.2}}  & 94.7\msmall{\pm{0.1}}  & 99.8\msmall{\pm{0.0}}  & 25.3\msmall{\pm{1.2}} & 76.0\msmall{\pm{0.9}} & 95.8\msmall{\pm{0.5}} \\
C-TableGAN & 72.2\msmall{\pm{0.2}} & 81.1\msmall{\pm{0.3}} & 88.7\msmall{\pm{0.3}} & 83.8\msmall{\pm{0.3}}  & 96.1\msmall{\pm{0.2}}  & 99.1\msmall{\pm{0.1}}  & 75.4\msmall{\pm{0.5}} & 86.2\msmall{\pm{0.5}} & 94.3\msmall{\pm{0.3}} \\ \cmidrule(r){1-1}
CTGAN      & 6.5\msmall{\pm{0.0}}  & 30.5\msmall{\pm{0.3}} & 53.2\msmall{\pm{0.2}} & 80.2\msmall{\pm{0.4}}  & 93.1\msmall{\pm{0.5}}  & 96.8\msmall{\pm{0.3}}  & 2.6\msmall{\pm{0.3}} & 16.4\msmall{\pm{0.4}} & 36.7\msmall{\pm{1.3}} \\
C-CTGAN    & 54.2\msmall{\pm{0.6}} & 62.4\msmall{\pm{0.6}} & 73.8\msmall{\pm{0.5}} & 83.2\msmall{\pm{0.4}}  & 96.0\msmall{\pm{0.1}}  & 98.3\msmall{\pm{0.1}}  & 49.2\msmall{\pm{0.8}} & 62.3\msmall{\pm{1.1}} & 72.9\msmall{\pm{1.4}} \\ \cmidrule(r){1-1}
TVAE       & 20.2\msmall{\pm{0.1}} & 42.0\msmall{\pm{0.2}} & 62.4\msmall{\pm{0.3}} & 69.4\msmall{\pm{0.5}}  & 90.1\msmall{\pm{0.4}}  & 96.5\msmall{\pm{0.2}}  & 6.5\msmall{\pm{0.7}}  & 37.6\msmall{\pm{1.6}} & 56.7\msmall{\pm{1.8}} \\
C-TVAE     & 31.0\msmall{\pm{0.4}} & 56.9\msmall{\pm{0.5}} & 69.7\msmall{\pm{0.3}} & 79.2\msmall{\pm{0.7}}  & 92.7\msmall{\pm{0.4}}  & 97.1\msmall{\pm{0.2}}  & 38.4\msmall{\pm{1.2}} & 57.5\msmall{\pm{0.4}} & 77.4\msmall{\pm{0.9}} \\ \cmidrule(r){1-1}
GOGGLE     & 0.9\msmall{\pm{0.1}}  & 29.1\msmall{\pm{0.2}} & 33.1\msmall{\pm{0.3}} & 46.3\msmall{\pm{0.4}}  & 99.6\msmall{\pm{0.1}}  & 100.0\msmall{\pm{0.0}} & 35.3\msmall{\pm{0.0}} & 82.2\msmall{\pm{0.0}} & 98.9\msmall{\pm{0.0}} \\
C-GOGGLE   & 7.7\msmall{\pm{0.1}}  & 77.2\msmall{\pm{0.2}} & 99.1\msmall{\pm{0.1}} & 83.9\msmall{\pm{0.3}}  & 92.4\msmall{\pm{0.3}}  & 97.0\msmall{\pm{0.1}}  & 81.1\msmall{\pm{0.0}} & 85.4\msmall{\pm{0.0}} & 89.7\msmall{\pm{0.0}} \\ \midrule
DGMs    & 15.2\msmall{\pm{0.2}} & 43.8\msmall{\pm{0.3}} & 60.9\msmall{\pm{0.3}} & 51.1\msmall{\pm{0.4}}  & 92.3\msmall{\pm{0.3}}  & 98.5\msmall{\pm{0.1}}  & 16.9\msmall{\pm{0.7}} & 53.5\msmall{\pm{0.9}} & 73.9\msmall{\pm{0.9}} \\
C-DGMs  & 45.4\msmall{\pm{0.3}} & 69.9\msmall{\pm{0.4}} & 82.9\msmall{\pm{0.3}} & 86.0\msmall{\pm{0.3}}  & 95.4\msmall{\pm{0.2}}  & 98.3\msmall{\pm{0.1}}  & 64.2\msmall{\pm{0.7}} & 74.9\msmall{\pm{0.6}} & 84.8\msmall{\pm{0.6}} \\ \midrule
Real       & 60.1\msmall{\pm{0.0}} & 73.3\msmall{\pm{0.0}} & 85.1\msmall{\pm{0.0}} & 85.7\msmall{\pm{0.0}}  & 96.5\msmall{\pm{0.0}}  & 98.9\msmall{\pm{0.0}}  & 68.4\msmall{\pm{0.0}} & 82.8\msmall{\pm{0.0}} & 98.9\msmall{\pm{0.0}} \\
\bottomrule
\end{tabular}
\end{table}

\add{
\begin{table}[ht!]
\centering
\small
\setlength{\tabcolsep}{4pt}
\caption{Wasserstein distance between numerical features.}
\label{tab:WD}
\color{black}\begin{tabular}{@{}lllllll@{}}
\toprule
           & \phishing             & \wids      & \lcld           & \heloc     & \faults             & \news          \\ \midrule
WGAN       & 0.01\msmall{\pm{0.00}} & 0.04\msmall{\pm{0.00}} & 0.02\msmall{\pm{0.00}} & 0.03\msmall{\pm{0.00}} & 0.04\msmall{\pm{0.00}} & 0.02\msmall{\pm{0.00}} \\
P-WGAN     & 0.02\msmall{\pm{0.00}} & 0.04\msmall{\pm{0.00}} & 0.02\msmall{\pm{0.00}} & 0.03\msmall{\pm{0.00}} & 0.07\msmall{\pm{0.00}} & 0.02\msmall{\pm{0.00}} \\
C-WGAN     & 0.03\msmall{\pm{0.00}} & 0.05\msmall{\pm{0.00}} & 0.02\msmall{\pm{0.00}} & 0.03\msmall{\pm{0.00}} & 0.07\msmall{\pm{0.00}} & 0.02\msmall{\pm{0.00}} \\ \cmidrule(r){1-1}
TableGAN   & 0.01\msmall{\pm{0.00}} & 0.02\msmall{\pm{0.00}} & 0.01\msmall{\pm{0.00}} & 0.01\msmall{\pm{0.00}} & 0.02\msmall{\pm{0.00}} & 0.02\msmall{\pm{0.00}} \\
P-TableGAN & 0.02\msmall{\pm{0.00}} & 0.02\msmall{\pm{0.00}} & 0.02\msmall{\pm{0.00}} & 0.01\msmall{\pm{0.00}} & 0.03\msmall{\pm{0.00}} & 0.02\msmall{\pm{0.00}} \\
C-TableGAN & 0.01\msmall{\pm{0.00}} & 0.02\msmall{\pm{0.00}} & 0.01\msmall{\pm{0.00}} & 0.02\msmall{\pm{0.00}} & 0.02\msmall{\pm{0.00}} & 0.02\msmall{\pm{0.00}} \\ \cmidrule(r){1-1}
CTGAN      & 0.01\msmall{\pm{0.00}} & 0.05\msmall{\pm{0.00}} & 0.03\msmall{\pm{0.00}} & 0.02\msmall{\pm{0.00}} & 0.06\msmall{\pm{0.01}} & 0.01\msmall{\pm{0.00}} \\
P-CTGAN    & 0.02\msmall{\pm{0.00}} & 0.05\msmall{\pm{0.00}} & 0.03\msmall{\pm{0.00}} & 0.02\msmall{\pm{0.00}} & 0.08\msmall{\pm{0.00}} & 0.01\msmall{\pm{0.00}} \\
C-CTGAN    & 0.01\msmall{\pm{0.00}} & 0.04\msmall{\pm{0.00}} & 0.03\msmall{\pm{0.00}} & 0.02\msmall{\pm{0.00}} & 0.06\msmall{\pm{0.01}} & 0.02\msmall{\pm{0.00}} \\ \cmidrule(r){1-1}
TVAE       & 0.01\msmall{\pm{0.00}} & 0.01\msmall{\pm{0.00}} & 0.02\msmall{\pm{0.00}} & 0.01\msmall{\pm{0.00}} & 0.02\msmall{\pm{0.00}} & 0.01\msmall{\pm{0.00}} \\
P-TVAE     & 0.02\msmall{\pm{0.00}} & 0.02\msmall{\pm{0.00}} & 0.02\msmall{\pm{0.00}} & 0.02\msmall{\pm{0.00}} & 0.04\msmall{\pm{0.00}} & 0.01\msmall{\pm{0.00}} \\
C-TVAE     & 0.01\msmall{\pm{0.00}} & 0.02\msmall{\pm{0.00}} & 0.01\msmall{\pm{0.00}} & 0.01\msmall{\pm{0.00}} & 0.03\msmall{\pm{0.00}} & 0.01\msmall{\pm{0.00}} \\ \cmidrule(r){1-1}
GOGGLE     & 0.03\msmall{\pm{0.01}} & 0.22\msmall{\pm{0.10}} & 0.04\msmall{\pm{0.00}} & 0.05\msmall{\pm{0.01}} & 0.12\msmall{\pm{0.01}} & 0.08\msmall{\pm{0.01}} \\
P-GOGGLE   & 0.03\msmall{\pm{0.01}} & 0.22\msmall{\pm{0.10}} & 0.04\msmall{\pm{0.00}} & 0.05\msmall{\pm{0.01}} & 0.12\msmall{\pm{0.02}} & 0.08\msmall{\pm{0.01}} \\
C-GOGGLE   & 0.04\msmall{\pm{0.01}} & 0.05\msmall{\pm{0.01}} & 0.08\msmall{\pm{0.02}} & 0.07\msmall{\pm{0.02}} & 0.14\msmall{\pm{0.02}} & 0.13\msmall{\pm{0.03}} \\ 
\bottomrule
\end{tabular}
\end{table}
}

\begin{table}[ht!]
\centering
\small
\caption{Jensen-Shannon divergence between categorical features.}
\label{tab:JSD}
\color{black}\begin{tabular}{@{}lllllll@{}}
\toprule
           & \phishing{}        & \wids        & \lcld & \heloc      & \faults     & \news       \\ \midrule
WGAN       & 0.76\msmall{\pm{0.00}} & 0.79\msmall{\pm{0.00}} & 0.60\msmall{\pm{0.00}}  & 0.58\msmall{\pm{0.01}} & 0.33\msmall{\pm{0.01}} & 0.77\msmall{\pm{0.00}} \\
P-WGAN     & 0.76\msmall{\pm{0.00}} & 0.79\msmall{\pm{0.00}} & 0.60\msmall{\pm{0.00}}   & 0.58\msmall{\pm{0.01}} & 0.33\msmall{\pm{0.01}} & 0.77\msmall{\pm{0.00}} \\
C-WGAN     & 0.77\msmall{\pm{0.00}} & 0.80\msmall{\pm{0.01}} & 0.60\msmall{\pm{0.00}}    & 0.58\msmall{\pm{0.00}} & 0.33\msmall{\pm{0.01}} & 0.77\msmall{\pm{0.00}} \\ \cmidrule(r){1-1}
TableGAN   & 0.78\msmall{\pm{0.00}} & 0.81\msmall{\pm{0.00}} & 0.69\msmall{\pm{0.01}}    & 0.62\msmall{\pm{0.02}} & 0.35\msmall{\pm{0.01}} & 0.77\msmall{\pm{0.00}} \\
P-TableGAN & 0.78\msmall{\pm{0.00}} & 0.81\msmall{\pm{0.00}} & 0.69\msmall{\pm{0.01}}    & 0.62\msmall{\pm{0.02}} & 0.35\msmall{\pm{0.01}} & 0.77\msmall{\pm{0.00}} \\
C-TableGAN & 0.78\msmall{\pm{0.00}} & 0.80\msmall{\pm{0.00}} & 0.60\msmall{\pm{0.01}}    & 0.59\msmall{\pm{0.01}} & 0.37\msmall{\pm{0.02}} & 0.77\msmall{\pm{0.00}} \\ \cmidrule(r){1-1}
CTGAN      & 0.76\msmall{\pm{0.00}} & 0.75\msmall{\pm{0.00}} & 0.49\msmall{\pm{0.01}}    & 0.56\msmall{\pm{0.00}} & 0.33\msmall{\pm{0.00}} & 0.77\msmall{\pm{0.00}} \\
P-CTGAN    & 0.76\msmall{\pm{0.00}} & 0.75\msmall{\pm{0.00}} & 0.49\msmall{\pm{0.01}}   & 0.56\msmall{\pm{0.00}} & 0.33\msmall{\pm{0.00}} & 0.77\msmall{\pm{0.00}} \\
C-CTGAN    & 0.76\msmall{\pm{0.00}} & 0.75\msmall{\pm{0.00}} & 0.49\msmall{\pm{0.00}}   & 0.56\msmall{\pm{0.01}} & 0.33\msmall{\pm{0.00}} & 0.77\msmall{\pm{0.00}} \\ \cmidrule(r){1-1}
TVAE       & 0.77\msmall{\pm{0.00}}          & 0.79\msmall{\pm{0.00}} & 0.50\msmall{\pm{0.00}}     & 0.57\msmall{\pm{0.00}} & 0.33\msmall{\pm{0.00}} & 0.77\msmall{\pm{0.00}} \\
P-TVAE     & 0.77\msmall{\pm{0.00}}          & 0.79\msmall{\pm{0.00}} & 0.50\msmall{\pm{0.00}}    & 0.57\msmall{\pm{0.00}} & 0.33\msmall{\pm{0.00}} & 0.77\msmall{\pm{0.00}} \\
C-TVAE     & 0.77\msmall{\pm{0.00}}          & 0.80\msmall{\pm{0.00}} & 0.50\msmall{\pm{0.00}}     & 0.57\msmall{\pm{0.01}}           & 0.33\msmall{\pm{0.00}} & 0.77\msmall{\pm{0.00}} \\ \cmidrule(r){1-1}
GOGGLE       & 0.71\msmall{\pm{0.02}}  & 0.76\msmall{\pm{0.03}} & 0.49\msmall{\pm{0.01}}  & 0.58\msmall{\pm{0.01}} & 0.37\msmall{\pm{0.03}}  & 0.76\msmall{\pm{0.00}} \\
P-GOGGLE    & 0.71\msmall{\pm{0.02}}  & 0.76\msmall{\pm{0.03}} & 0.49\msmall{\pm{0.01}}  & 0.58\msmall{\pm{0.01}} & 0.37\msmall{\pm{0.03}}  & 0.76\msmall{\pm{0.00}} \\
C-GOGGLE     &  0.71\msmall{\pm{0.01}}           & 0.82\msmall{\pm{0.01}} & 0.49\msmall{\pm{0.01}}  & 0.57\msmall{\pm{0.01}} & 0.39\msmall{\pm{0.01}}  & 0.76\msmall{\pm{0.01}}    \\

\bottomrule
\end{tabular}
\end{table}

 \subsection{Variable Ordering Study}\label{app:orderings}

In this subsection, we first discuss how we picked the variable orderings tested in our experimental analysis, and then we conduct an in-detail experimental analysis of how the different orderings perform with each model. 

It is easy to see that given a sample $\sample$, the value of $\CL(\sample)$ may depend on the selected variable ordering, i.e., that different variable orderings may lead to different $\CL(\sample)$. Ideally, we would like the orderings to help in generating the highest quality samples possible. Thus, our intuition has been that features whose distributions are easier to learn should be assigned a lower ranking, so that their value can be fruitfully used to compute the value associated with features with higher ranking.
With such intuition, we designed two different heuristics to choose such orderings.  To obtain the two orderings described below, for each C-DGM, we first need to train the respective standard unconstrained DGM, and then generate a set of samples $\mathcal{S}$.

The first ordering we propose is a {\sl correlation-based (Corr.) ordering}. 
Given the initial real data dataset $\D$ and $\mathcal{S}$, for each variable $x_i$ we compute the score $\sigma_i = |\sum_{j\not = i} \text{corr}(C_i^{\D}, C_j^{\D}) - \sum_{j\not = i} \text{corr}(C_i^{\mathcal{S}},C_j^{\mathcal{S}})|$, where $C_j^{\D}$ (resp. $C_j^{\mathcal{S}}$) is the vector of the values of the $j$th column in $\D$ (resp. $\mathcal{S})$. The lower the value of $\sigma_i$ is, the lower the position $x_i$ gets in the variable ordering, and the sooner its value is computed (as its relations with the other features should be easier to compute for the DGM).

\miha{
The second ordering we propose is a {\sl kernel density estimation (KDE)-based ordering}, which can be obtained by following the protocol outlined below:
\begin{enumerate}
    \item Using scikit-learn's~\footnote{\url{https://scikit-learn.org/stable/index.html}}~\citep{sklearn} implementation of KDE, we estimated a probability density function of the multidimensional real data.
    \item We then computed the log-likelihood of each sample  belonging to $\D$ and $\mathcal{S}$  under the estimated model, using again scikit-learn. It is worth noting that scikit-learn returns probability values normalised over the sample spaces under evaluation. Using these, we treated the problem in a discrete setting and approximated the marginal probability mass function of each variable. More specifically, for each feature $x_i$, we determined its unique values from $\D \cup \mathcal{S}$ and for each unique value $u_{ij}$ we summed the KDE scores of the samples for which $x_i = u_{ij}$, separately for $\D$ and $\mathcal{S}$. In case no sample was observed with $x_i = u_{ij}$, we set the value of the marginal probability mass function of $x_i$ to $0$. 
    \item Hence, we obtain two marginal probability mass functions for each feature $x_i$ (one for the real data $D$ and one for the generated samples $\mathcal{S}$) and compute the Kullback-Leibler divergence (KL) between them to get a single value $d_i$.
    \item Finally, we rank the features $x_i$ based on their KL scores $d_i$ in ascending order.
\end{enumerate}

One disadvantage of this method is that the marginals are computed in a discrete setting. 
As future work, one interesting direction would be to use a higher-fidelity approximation of the marginal distributions, based on which the KL divergence scores, and hence the variable ranks in the ordering, are computed.
}

\begin{table}[t]
\small
\caption{Variable orderings comparison for C-DGMs. The best results are in bold.}
    \label{tab:orderings_avg}
\begin{center}
\begin{tabular}{@{}lcrrrrrrrr@{}}
\toprule
     &  & & \multicolumn{3}{c}{\textbf{Utility ($\uparrow$)}}  & & \multicolumn{3}{c}{\textbf{Detection ($\downarrow$)}} \\         \cmidrule{4-6}\cmidrule{8-10}

                           &     &   & \multicolumn{1}{l}{F1} & \multicolumn{1}{l}{{\sl w}F1} & \multicolumn{1}{l}{AUC}           &  & \multicolumn{1}{l}{F1} & \multicolumn{1}{l}{{\sl w}F1} & \multicolumn{1}{l}{AUC}    \\ \midrule
\multirow{3}{*}{C-WGAN}  & Rnd &  & 0.453 & 0.477 & 0.735 &  & 0.936 & 0.933 & 0.950\\ 
 & Corr &  & 0.475 & 0.497 & 0.746 &  & 0.919 & \textbf{0.915} & 0.936\\ 
 & KDE &  & \textbf{0.485} & \textbf{0.504} & \textbf{0.748} &  & \textbf{0.917} & \textbf{0.915} & \textbf{0.935}\\ 

 \cmidrule{1-1} 
\multirow{3}{*}{C-TableGAN}  & Rnd &  & 0.346 & 0.409 & 0.710 &  & 0.908 & 0.904 & 0.926\\ 
 & Corr &  & \textbf{0.361} & \textbf{0.422} & \textbf{0.717} &  & 0.897 & \textbf{0.893} & \textbf{0.916}\\ 
 & KDE &  & 0.349 & 0.413 & 0.706 &  & \textbf{0.895} & \textbf{0.893} & \textbf{0.916}\\ 

 \cmidrule{1-1} 
\multirow{3}{*}{C-CTGAN}  & Rnd &  & \cameraready{0.509} & \cameraready{0.530} & \cameraready{0.770} &  & \cameraready{0.894} & \cameraready{0.896} & \cameraready{0.924}\\ 
 & Corr &  & \cameraready{0.513} & \cameraready{0.536} & \cameraready{\textbf{0.773}} &  & \cameraready{\textbf{0.887}} & \cameraready{0.891} & \cameraready{0.920}\\ 
 & KDE &  & \cameraready{\textbf{0.516}} & \cameraready{\textbf{0.537}} & \cameraready{\textbf{0.773}} &  & \cameraready{0.888} & \cameraready{\textbf{0.889}} & \cameraready{\textbf{0.919}}\\ 

  \cmidrule{1-1} 
\multirow{3}{*}{C-TVAE}  & Rnd &  & \add{0.487} & \add{0.522} & \add{0.766} &  & \add{0.873} & \add{0.870} & \add{0.901}\\ 
 & Corr &  & \add{\textbf{0.504}} & \add{0.534} & \add{0.771} &  & \add{\textbf{0.868}} & \add{\textbf{0.868}} & \add{\textbf{0.898}}\\ 
 & KDE &  & \add{\textbf{0.504}} & \add{\textbf{0.536}} & \add{\textbf{0.774}} &  & \add{0.875} & \add{0.875} & \add{0.905}\\ 

 \cmidrule{1-1} 
\multirow{3}{*}{C-GOGGLE}  & Rnd &  & \add{0.384} & \add{0.406} & \add{0.653} &  & \add{\textbf{0.922}} & \add{\textbf{0.916}} & \add{\textbf{0.937}}\\ 
 & Corr &  & \add{\textbf{0.409}} & \add{\textbf{0.424}} & \add{\textbf{0.661}} &  & \add{0.929} & \add{0.918} & \add{0.940}\\ 
 & KDE &  & \add{0.393} & \add{0.416} & \add{\textbf{0.661}} &  & \add{0.928} & \add{0.926} & \add{0.941}\\ 

  \bottomrule
\end{tabular}
\end{center}
\end{table}

\begin{table}[t]
\small
\caption{Variable orderings comparison for \gdgm{}. Best results are in bold.}
    \label{tab:postprocessing_orderings_avg}
\begin{center}
\begin{tabular}{@{}lcrrrrrrrr@{}}
\toprule
     &  & & \multicolumn{3}{c}{\textbf{Utility ($\uparrow$)}}  & & \multicolumn{3}{c}{\textbf{Detection ($\downarrow$)}} \\         \cmidrule{4-6}\cmidrule{8-10}

                           &     &   & \multicolumn{1}{l}{F1} & \multicolumn{1}{l}{{\sl w}F1} & \multicolumn{1}{l}{AUC}           &  & \multicolumn{1}{l}{F1} & \multicolumn{1}{l}{{\sl w}F1} & \multicolumn{1}{l}{AUC}    \\ \midrule
        
\multirow{3}{*}{P-WGAN}  & Rnd &  & 0.456 & 0.484 & 0.731 &  & \textbf{0.930} & \textbf{0.928} & \textbf{0.945}\\ 
 & Corr &  & 0.462 & \textbf{0.489} & \textbf{0.732} &  & 0.931 & 0.930 & 0.947\\ 
 & KDE &  & \textbf{0.463} & \textbf{0.489} & 0.731 &  & 0.933 & 0.931 & 0.948\\ 

 \cmidrule{1-1} 
\multirow{3}{*}{P-TableGAN}  & Rnd &  & \textbf{0.328} & 0.398 & 0.705 &  & 0.902 & 0.901 & 0.925\\ 
 & Corr &  & \textbf{0.328} & \textbf{0.399} & 0.703 &  & \textbf{0.900} & \textbf{0.900} & \textbf{0.922}\\ 
 & KDE &  & 0.326 & 0.398 & \textbf{0.706} &  & 0.901 & 0.901 & 0.924\\ 

 \cmidrule{1-1} 
\multirow{3}{*}{P-CTGAN}  & Rnd &  & \cameraready{0.507} & \cameraready{0.524} & \cameraready{0.769} &  & \cameraready{\textbf{0.898}} & \cameraready{\textbf{0.901}} & \cameraready{\textbf{0.926}}\\ 
 & Corr &  & \cameraready{\textbf{0.508}} & \cameraready{\textbf{0.527}} & \cameraready{\textbf{0.770}} &  & \cameraready{0.899} & \cameraready{0.902} & \cameraready{0.928}\\ 
 & KDE &  & \cameraready{0.507} & \cameraready{0.524} & \cameraready{0.769} &  & \cameraready{0.899} & \cameraready{0.903} & \cameraready{0.927}\\ 

 \cmidrule{1-1} 
\multirow{3}{*}{P-TVAE}  & Rnd &  & \add{0.490} & \add{0.521} & \add{0.764} &  & \add{0.879} & \add{0.879} & \add{0.905}\\ 
 & Corr &  & \add{0.493} & \add{0.523} & \add{0.763} &  & \add{\textbf{0.876}} & \add{0.879} & \add{0.904}\\ 
 & KDE &  & \add{\textbf{0.494}} & \add{\textbf{0.524}} & \add{\textbf{0.767}} &  & \add{\textbf{0.876}} & \add{\textbf{0.875}} & \add{\textbf{0.901}}\\ 

 \cmidrule{1-1} 
\multirow{3}{*}{P-GOGGLE}  & Rnd &  & \add{0.347} & \add{0.373} & \add{\textbf{0.626}} &  & \add{\textbf{0.925}} & \add{\textbf{0.925}} & \add{\textbf{0.943}}\\ 
 & Corr &  & \add{\textbf{0.348}} & \add{\textbf{0.375}} & \add{0.625} &  & \add{0.929} & \add{0.926} & \add{0.945}\\ 
 & KDE &  & \add{0.347} & \add{0.374} & \add{\textbf{0.626}} &  & \add{0.929} & \add{0.927} & \add{0.944}\\

  \bottomrule
\end{tabular}
\end{center}
\end{table}

To assess the impact of using different orderings for each C-DGM model, we compared the two orderings  proposed above with a random ordering (Rnd).
Table \ref{tab:orderings_avg} summarises the average utility performance over the binary and multiclass classification datasets (i.e., all except the \news{}) and the average detection performance over all datasets.

For detection, we can see that the KDE-based ordering performs best for \add{C-WGAN, C-TableGAN and C-CTGAN} --- according to all three metrics (i.e., F1-score, weighted F1-score, and Area Under the ROC Curve) \cameraready{for the first two, and according to two out of three metrics (i.e. weighted F1-score and Area Under the ROC Curve) for C-CTGAN}. \add{On the other hand, the correlation-based ordering yields better results for C-TVAE across all three metrics.}
\cameraready{Only C-GOGGLE gets best detection results across all three metrics using the random ordering}.
For utility, \cameraready{we see again that} different C-DGM models work best with different orderings.
For instance, the KDE-based ordering performs best w.r.t. all three utility metrics for C-WGAN\cameraready{, C-TVAE and C-CTGAN}.
However, for C-TableGAN \cameraready{and C-GOGGLE}, the correlation-based ordering outperforms the other orderings.

Overall, the utility and detection results we obtained using KDE-based or correlation-based orderings as opposed to random ordering highlight the importance of choosing an ordering that takes into account the data distribution and/or the feature relations captured by the C-DGM models in their predictions.
Additionally, we can see that a trend emerges for \add{C-WGAN, C-TableGAN, and C-TVAE}: each of these models has a clear preference for the ordering that gives the highest overall improvements w.r.t. utility and detection.
It is also worth noticing that it is not always the case that an ordering improves both utility and detection, as we have seen for C-TableGAN \add{and C-TVAE}.

We also show the effect of using different orderings on P-DGM models, in Table~\ref{tab:postprocessing_orderings_avg}.
As opposed to the trends we noticed for our C-DGMs, here we find that it is harder to establish clear patterns in the preference of the P-DGMs towards any of the orderings.
This is mainly due to the very small differences between the results, as it can be seen from the Table.

\miha{
\paragraph{Exploring other variable orderings.}

In our work, we explored two variable orderings and conducted extensive experiments with them. 
However, there are various other ways to define these orderings. 
One advantage of customizing the order is that it can be tailored to the user's needs. For example, if users require orderings that consider how closely the generated data distribution matches the real data distribution, there are numerous ways to achieve this. 
In our detailed analysis, we examined a KDE-based ordering that compares the joint distributions of the features. 
However, a simpler alternative method would be to use the Wasserstein distance, a metric discussed in Section~\ref{sec:appendix_feature_distribution}, which differs from the KDE approach in that it compares individual distributions of the features rather than joint ones.
In yet another scenario, users could define orderings without relying on the outputs of the unconstrained model by considering relations between the features in the real data, which is a different approach to all the previously-mentioned orderings. 
For instance, one way to do this is to order the variables based on the cause-effect relations between them, which we refer to as a causal-based ordering. 
More precisely, in such an ordering, there is no instance of a cause-effect pair of features $(x_i,x_j)$ in which $x_j$ appears before $x_i$ in the ordering.

To assess the performance of our layer when constructed using different orderings, we conducted experiments with Wasserstein- and causal-based orderings on the \wids{} dataset.
For the Wasserstein-based ordering, for each feature we calculated the Wasserstein distance between the real data distribution and the generated data distribution obtained from an unconstrained DGM. 
We then arranged the features in ascending order based on the calculated distances, such that the features with generated data distributions furthest from the real data distributions would be corrected last by our constraint layer.

We note that the Wasserstein distance can only be defined on a metric space, making it impractical for use with categorical features without additional modifications 
(i.e. it is possible to define distance metrics for each of the categorical features and then apply Wasserstein distance on these features). 
However, here we assume that the constraint layer can only be applied on continuous features and, thus, the position of the categorical features in the orderings will not impact the final results, as our constraints exclude such features.
As an alternative, we investigated using the Jensen-Shannon divergence to derive another ordering that compares individual feature distributions. 
And while this method offers has the advantage that it can be applied on both continuous and categorical features, we found that it was unstable in its results, aligning with the observations made by \cite{zhao2021}.

To compute the causal-based ordering, we obtained a directed acyclic graph (DAG), capturing the cause-effect relations between the features, and topologically sorted the features.
Since none of the datasets we experimented with provided any information on causal relations between features, we utilized DAG-GNN~\citep{yu2019dag} to obtain such relations in the training partition of each dataset.
DAG-GNN is a gradient-based causal discovery method employing graph neural networks to learn a DAG structure which encodes cause-effect relations.
Importantly for our application, we require that the graph has a topological ordering, 
implying the graph should be a DAG. 
However, determining a DAG structure poses a challenge in the causality domain. In fact, the DAG-GNN method does not guarantee that its output structure is a DAG, as the DAG constraint is embedded in a loss term minimised during training.
In our experiments, we observed that DAG-GNN produced a few self-loops (edges connecting a node to itself) for most datasets and one cycle for half of the datasets. To address these issues, we eliminated the self-loops and broke the cycles by randomly selecting an edge from each cycle and removing it from the graph. 
This approach enabled us to obtain a DAG structure and, consequently, a topological ordering that we eventually used as the causal-based ordering in our experiments.

For our experiments we considered the medium-sized datasets and selected \wids{}.
Our choice was mainly guided by the levels of difficulty involved when manually checking the causality relations found by the DAG-GNN model.
Compared to the other datasets, \wids{}' columns had a clearer and more explanatory description, which allowed us to determine whether the cause-effect relations between the columns were sensible.
In Tables \ref{tab:wd_causal_wids_cdgm} and \ref{tab:wd_causal_wids_pdgm} we compared the 5 different orderings on the \wids{} dataset for all C-DGMs and P-DGMs, respectively.
As we can see in Table \ref{tab:wd_causal_wids_cdgm}, both the causal- and Wasserstein distance-based orderings achieved better results than the random ordering in 12 (and 11) out of 15 cases for detection (and utility).
On the other hand, we notice that when used during postprocessing, these two new orderings can only bring small performance improvements.
However, similarly to the other orderings we explored, it is harder to distinguish trends where the new orderings might be more helpful than the random ordering.

\paragraph{Future work on variable orderings.}
As we can see, using custom-made orderings can improve the performance of C-DGMs which use a random ordering of the variables w.r.t. both utility and detection. 
However, it is not straightforward to determine which ordering works best in which scenario.
To leverage the constraint layer's capabilities to the maximum, for future work we plan to investigate when different feature orderings might work best and uncover patterns in the models' and datasets' preferences towards certain orderings.
}

\begin{table}[t]
\small
\caption{Comparing DGMs with their C-DGM versions using 5 different orderings, when trained on the \wids{} dataset. Best results are in bold.}
    \label{tab:wd_causal_wids_cdgm}
\begin{center}
\begin{tabular}{@{}lcrrrrrrrr@{}}
\toprule
     &  & & \multicolumn{3}{c}{\textbf{Utility ($\uparrow$)}}  & & \multicolumn{3}{c}{\textbf{Detection ($\downarrow$)}} \\         \cmidrule{4-6}\cmidrule{8-10}

                           &     &   & \multicolumn{1}{l}{F1} & \multicolumn{1}{l}{{\sl w}F1} & \multicolumn{1}{l}{AUC}           &  & \multicolumn{1}{l}{F1} & \multicolumn{1}{l}{{\sl w}F1} & \multicolumn{1}{l}{AUC}    \\ \midrule

\multirow{5}{*}{C-WGAN}  & Rnd &  & \add{0.303} & \add{0.360} & \add{0.797} &  & \add{0.995} & \add{0.996} & \add{0.999}\\ 
 & Corr &  & \add{0.284} & \add{0.343} & \add{0.796} &  & \add{0.975} & \add{0.976} & \add{0.989}\\ 
 & KDE &  & \add{\textbf{0.316}} & \add{\textbf{0.372}} & \add{\textbf{0.815}} &  & \add{0.975} & \add{0.975} & \add{0.989}\\ 
 & WD &  & \add{0.273} & \add{0.331} & \add{0.770} &  & \add{\textbf{0.932}} & \add{\textbf{0.925}} & \add{\textbf{0.948}}\\ 
 & Caus &  & \add{0.275} & \add{0.334} & \add{0.775} &  & \add{0.944} & \add{0.929} & \add{0.956}\\ 

 \cmidrule{1-1} 
\multirow{5}{*}{C-TableGAN}  & Rnd &  & \add{0.213} & \add{0.279} & \add{\textbf{0.777}} &  & \add{0.984} & \add{0.984} & \add{0.996}\\ 
 & Corr &  & \add{\textbf{0.246}} & \add{\textbf{0.309}} & \add{0.775} &  & \add{\textbf{0.956}} & \add{\textbf{0.957}} & \add{\textbf{0.974}}\\ 
 & KDE &  & \add{0.208} & \add{0.274} & \add{0.767} &  & \add{0.962} & \add{0.963} & \add{0.979}\\ 
 & WD &  & \add{0.242} & \add{0.305} & \add{0.770} &  & \add{0.961} & \add{0.962} & \add{0.977}\\ 
 & Caus &  & \add{0.225} & \add{0.289} & \add{0.770} &  & \add{0.962} & \add{0.963} & \add{0.979}\\ 

 \cmidrule{1-1} 
\multirow{5}{*}{C-CTGAN}  & Rnd &  & \add{0.364} & \add{0.408} & \add{0.836} &  & \add{0.990} & \add{0.990} & \add{0.997}\\ 
 & Corr &  & \add{0.365} & \add{0.409} & \add{0.826} &  & \add{0.988} & \add{0.988} & \add{0.995}\\ 
 & KDE &  & \add{0.360} & \add{0.403} & \add{0.832} &  & \add{0.986} & \add{0.986} & \add{0.994}\\ 
 & WD &  & \add{\textbf{0.368}} & \add{\textbf{0.411}} & \add{\textbf{0.842}} &  & \add{\textbf{0.953}} & \add{\textbf{0.955}} & \add{\textbf{0.970}}\\ 
 & Caus &  & \add{0.365} & \add{0.409} & \add{0.838} &  & \add{0.954} & \add{0.956} & \add{\textbf{0.970}}\\ 

 \cmidrule{1-1} 
\multirow{5}{*}{C-TVAE}  & Rnd &  & \add{0.248} & \add{0.311} & \add{0.773} &  & \add{0.965} & \add{0.965} & \add{0.982}\\ 
 & Corr &  & \add{0.305} & \add{0.363} & \add{0.804} &  & \add{0.959} & \add{0.960} & \add{0.977}\\ 
 & KDE &  & \add{\textbf{0.321}} & \add{\textbf{0.378}} & \add{\textbf{0.816}} &  & \add{0.961} & \add{0.962} & \add{0.979}\\ 
 & WD &  & \add{0.297} & \add{0.356} & \add{0.794} &  & \add{\textbf{0.958}} & \add{\textbf{0.959}} & \add{\textbf{0.976}}\\ 
 & Caus &  & \add{0.316} & \add{0.373} & \add{0.808} &  & \add{\textbf{0.958}} & \add{\textbf{0.959}} & \add{\textbf{0.976}}\\ 

 \cmidrule{1-1} 
\multirow{5}{*}{C-GOGGLE}  & Rnd &  & \add{0.139} & \add{0.210} & \add{0.643} &  & \add{\textbf{0.965}} & \add{\textbf{0.965}} & \add{\textbf{0.979}}\\ 
 & Corr &  & \add{0.185} & \add{0.253} & \add{0.675} &  & \add{0.972} & \add{0.971} & \add{0.984}\\ 
 & KDE &  & \add{0.171} & \add{0.239} & \add{0.678} &  & \add{0.975} & \add{0.975} & \add{0.984}\\ 
 & WD &  & \add{\textbf{0.207}} & \add{\textbf{0.273}} & \add{\textbf{0.705}} &  & \add{0.980} & \add{0.980} & \add{0.990}\\ 
 & Caus &  & \add{0.175} & \add{0.244} & \add{0.681} &  & \add{0.966} & \add{\textbf{0.965}} & \add{\textbf{0.979}}\\

  \bottomrule
\end{tabular}
\end{center}
\end{table}

\begin{table}[t]
\small
\caption{Comparing DGMs with their P-DGM versions using 5 different orderings, when trained on the \wids{} dataset. Best results are in bold.}
    \label{tab:wd_causal_wids_pdgm}
\begin{center}
\begin{tabular}{@{}lcrrrrrrrr@{}}
\toprule
     &  & & \multicolumn{3}{c}{\textbf{Utility ($\uparrow$)}}  & & \multicolumn{3}{c}{\textbf{Detection ($\downarrow$)}} \\         \cmidrule{4-6}\cmidrule{8-10}

                           &     &   & \multicolumn{1}{l}{F1} & \multicolumn{1}{l}{{\sl w}F1} & \multicolumn{1}{l}{AUC}           &  & \multicolumn{1}{l}{F1} & \multicolumn{1}{l}{{\sl w}F1} & \multicolumn{1}{l}{AUC}    \\ \midrule

\multirow{5}{*}{P-WGAN}  & Rnd &  & \add{0.319} & \add{0.372} & \add{0.777} &  & \add{0.979} & \add{0.980} & \add{0.992}\\ 
 & Corr &  & \add{0.332} & \add{0.384} & \add{0.781} &  & \add{\textbf{0.976}} & \add{\textbf{0.977}} & \add{\textbf{0.991}}\\ 
 & KDE &  & \add{\textbf{0.334}} & \add{\textbf{0.386}} & \add{\textbf{0.783}} &  & \add{0.978} & \add{0.978} & \add{\textbf{0.991}}\\ 
 & WD &  & \add{0.332} & \add{0.384} & \add{0.776} &  & \add{0.979} & \add{0.979} & \add{0.992}\\ 
 & Caus &  & \add{0.332} & \add{0.383} & \add{0.774} &  & \add{0.980} & \add{0.981} & \add{0.992}\\ 

 \cmidrule{1-1} 
\multirow{5}{*}{P-TableGAN}  & Rnd &  & \add{0.173} & \add{0.242} & \add{\textbf{0.749}} &  & \add{\textbf{0.962}} & \add{\textbf{0.963}} & \add{0.980}\\ 
 & Corr &  & \add{0.174} & \add{0.242} & \add{0.731} &  & \add{0.963} & \add{0.964} & \add{0.981}\\ 
 & KDE &  & \add{0.171} & \add{0.240} & \add{0.735} &  & \add{\textbf{0.962}} & \add{\textbf{0.963}} & \add{0.980}\\ 
 & WD &  & \add{\textbf{0.181}} & \add{\textbf{0.250}} & \add{0.738} &  & \add{\textbf{0.962}} & \add{\textbf{0.963}} & \add{\textbf{0.979}}\\ 
 & Caus &  & \add{0.171} & \add{0.240} & \add{0.741} &  & \add{0.963} & \add{0.964} & \add{0.980}\\ 

 \cmidrule{1-1} 
\multirow{5}{*}{P-CTGAN}  & Rnd &  & \add{0.364} & \add{0.407} & \add{\textbf{0.837}} &  & \add{0.991} & \add{0.991} & \add{\textbf{0.997}}\\ 
 & Corr &  & \add{0.365} & \add{0.407} & \add{0.835} &  & \add{\textbf{0.989}} & \add{\textbf{0.989}} & \add{\textbf{0.997}}\\ 
 & KDE &  & \add{0.357} & \add{0.400} & \add{0.835} &  & \add{0.991} & \add{0.991} & \add{\textbf{0.997}}\\ 
 & WD &  & \add{\textbf{0.375}} & \add{\textbf{0.419}} & \add{0.836} &  & \add{0.990} & \add{0.990} & \add{\textbf{0.997}}\\ 
 & Caus &  & \add{0.363} & \add{0.406} & \add{0.835} &  & \add{0.990} & \add{0.990} & \add{\textbf{0.997}}\\ 

 \cmidrule{1-1} 
\multirow{5}{*}{P-TVAE}  & Rnd &  & \add{0.266} & \add{0.327} & \add{0.785} &  & \add{\textbf{0.962}} & \add{0.963} & \add{\textbf{0.978}}\\ 
 & Corr &  & \add{0.282} & \add{0.342} & \add{0.787} &  & \add{\textbf{0.962}} & \add{\textbf{0.962}} & \add{0.979}\\ 
 & KDE &  & \add{\textbf{0.285}} & \add{\textbf{0.345}} & \add{\textbf{0.797}} &  & \add{0.963} & \add{0.964} & \add{0.979}\\ 
 & WD &  & \add{0.280} & \add{0.340} & \add{0.778} &  & \add{\textbf{0.962}} & \add{\textbf{0.962}} & \add{0.979}\\ 
 & Caus &  & \add{0.269} & \add{0.329} & \add{0.793} &  & \add{\textbf{0.962}} & \add{\textbf{0.962}} & \add{0.979}\\ 

 \cmidrule{1-1} 
\multirow{5}{*}{P-GOGGLE}  & Rnd &  & \add{\textbf{0.192}} & \add{\textbf{0.202}} & \add{0.665} &  & \add{0.988} & \add{0.988} & \add{\textbf{0.993}}\\ 
 & Corr &  & \add{0.191} & \add{0.201} & \add{0.667} &  & \add{0.988} & \add{0.988} & \add{\textbf{0.993}}\\ 
 & KDE &  & \add{0.189} & \add{0.197} & \add{0.667} &  & \add{\textbf{0.987}} & \add{\textbf{0.987}} & \add{\textbf{0.993}}\\ 
 & WD &  & \add{0.191} & \add{0.200} & \add{0.663} &  & \add{0.988} & \add{0.988} & \add{\textbf{0.993}}\\ 
 & Caus &  & \add{0.191} & \add{0.200} & \add{\textbf{0.669}} &  & \add{0.988} & \add{0.988} & \add{0.994}\\ 

  \bottomrule
\end{tabular}
\end{center}
\end{table}

\subsection{Full results on DGMs vs. C-DGMs}
\label{appendix:full_cdgm}

To assess the impact of constraining DGMs with our method, we conducted an extensive experimental analysis where we compared the performance of our C-DGM models with the baseline DGM models in terms of utility and detection, as specified in Section~ \ref{sec:exp_settings}.
In Tables \ref{tab:utility-binary_f1}-\ref{tab:detection-aucroc} we present full utility and detection results.
More specifically, in each table we report results for each dataset, each DGM and each C-DGM, using 3 different orderings (i.e., random, correlation-based and KDE-based).
Additionally, for each result in the table, we report the standard deviation from the mean.
As a reminder, we obtained the results by running each experiment with 5 different seeds.
As shown, in most cases at least one of the orderings used for the C-DGMs outperforms the DGMs, particularly for detection.

Comparing WGAN with C-WGAN, we note that the only dataset for which we do not get an improvement in utility with either of the three different metrics is \lcld{}.
However, the gap between the WGAN and the best C-WGAN is small for all the 3 different metrics: 0.7\%, 0.1\%, 0.2\% for F1-score, weighted F1-score and Area Under the ROC Curve, respectively.
In most cases, with C-WGAN we see significant utility improvements, e.g. 7.7\%, 5\% and 3.8\% for \heloc{} in F1-score, weighted F1-score and Area Under the ROC Curve, respectively.

For TableGAN, we see that C-TableGAN outperforms the unconstrained models for all binary and multiclass classification datasets in terms of utility, according to at least 2 out of 3 metrics.
For the cases where the performance is not improved under some metric, we notice that the difference in performance between TableGAN and C-TableGAN is small, e.g., 0.5\% for the \faults{} dataset and 0.2\% for the \heloc{} dataset in Area Under the ROC Curve.
It is worth noticing that out of all the C-DGMs, C-TableGAN brings the highest improvements overall. 
For example, C-TableGAN outperforms the unconstrained TableGAN by at least 4.5\% according to the F1-score in 4 datasets, with the highest improvement, of 7.5\%, recorded for \wids{}.

As opposed to C-TableGAN, C-CTGAN does not show the same trend, with most models \cameraready{giving either moderate improvements or performing} close to the unconstrained CTGAN.
\cameraready{However, it is still the case that most of the datasets (four out of six) show improvements over all three metrics (with the remaining two datasets showing improvements on two out of three metrics).
Notably, with C-CTGAN we did observe a large improvement of 64.9 points in the mean absolute error for utility when training on the \news{} dataset, which is the highest improvement we observed with any of the C-DGM models.
}

\add{Similarly, to the C-WGAN case, there is only one dataset for which C-TVAE does not improve the utility performance in any of the three metrics, namely the \heloc{} dataset. Nevertheless, the difference between the overall best C-TVAE model (i.e. using the KDE ordering) and TVAE is small for all the 3 different metrics: 0.4\%, 0.2\%, 0.1\% for F1-score, weighted F1-score and Area Under the ROC Curve, respectively.
In most cases, with C-TVAE we see major utility improvements, e.g. 4.0\%, 3.6\% and 1.5\% for \wids{} in F1-score, weighted F1-score and Area Under the ROC Curve, respectively, and 2.6\%, 2.9\% and 1.4\% for \faults{} in F1-score, weighted F1-score and Area Under the ROC Curve, respectively.}

\cameraready{For C-GOGGLE, we see that five out of six datasets outperform the standard GOGGLE results on all three metrics, giving major improvements (with the largest improvement being of 17.1\% in the F1-score for utility when training on the \phishing{} dataset). 
Only the \faults{} dataset does not show any improvements on any of the orderings when using the  C-GOGGLE version.
}

For detection, with C-WGAN we get an improvement (or do not change the performance, as it happens in a few cases) over all datasets in all metrics, the only exception being the Area Under the ROC Curve for the \phishing{} dataset.
With C-TableGAN, C-CTGAN, \add{and C-TVAE, we see an improvement in 4 out of 6 datasets, where we outperform the unconstrained models (i) according to all metrics for 3 datasets and (ii) according to at least two out of three metrics for 1 dataset, respectively}.
\cameraready{With C-GOGGLE we see again an improvement in 4 out of 6 datasets, where we outperform the unconstrained models (i) according to all metrics for 2 datasets and (ii) according to at least one out of three metrics for 2 datasets, respectively.}

\subsubsection{DGMs vs. C-DGMs: Utility Results}
\label{appendix:cdgm_utility}

In Tables \ref{tab:utility-binary_f1}-\ref{tab:utility-aucroc} we present the utility results on all datasets using 3 metrics in the following order: F1-score, weighted F1-score, and Area Under the ROC Curve.
Separately, in Table \ref{tab:news_utility}, we report the performance of real and synthetic data for the \news{} dataset (which is a regression dataset), using 4 different metrics: Explained Variance (XV) and Mean Absolute Error (MAE).

\subsubsection{DGMs vs. C-DGMs: Detection Results}
\label{appendix:cdgm_detection}

In Tables \ref{tab:detection-binary_f1}-\ref{tab:detection-aucroc} we present the detection results on all datasets using 3 metrics in the following order: F1-score, weighted F1-score and Area Under the ROC Curve.

\subsection{Full results on DGMs vs. \gdgm{}s}

Our constrained layer uses the constraints to correct the predictions both at training and inference time, however, the constraints can simply be used at inference time to correct the predictions only once.
This latter approach, which we call \gdgm{}, is a way of putting guardrails on the output space of the DGMs and could be the preferred option for users who do not want to make any modifications to their models or retrain them. 
Both methods guarantee that the constraints are satisfied by the predictions, however, their impact on the models' performance might differ.
To see this, we compared DGMs with \gdgm{}s whose predictions have been corrected according to three different label orderings: random, correlation-based, and KDE-based. 
In each of the following tables, we report the mean and standard deviation from the mean.

\subsubsection{DGMs vs. \gdgm{}s: Utility Results}

In Tables \ref{tab:postprocessing-utility-binary_f1}, \ref{tab:postprocessing-utility-weighted_f1} and \ref{tab:postprocessing-utility-aucroc} we compare the performance of the DGM and \gdgm{} models in terms of utility and report the results w.r.t. 3 different metrics: F1-score, weighted F1-score, and Area Under the ROC Curve.
Separately, in Table \ref{tab:postprocessing_news_utility}, we report the performance of the synthetic data for the \news{} dataset, using two metrics: XV and MAE.

As we can see, putting guardrails on the output space of the models can help increase the performance, however, the overall gains are not as high as those used by our C-DGM models which correct the outputs at training time.

\subsubsection{DGMs vs. \gdgm{}s: Detection results}

In Tables \ref{tab:postprocessing-detection-binary_f1}, \ref{tab:postprocessing-detection-weighted_f1} and \ref{tab:postprocessing-detection-aucroc} we compare the performance of the DGM and \gdgm{} models in terms of detection.
Similarly to the utility case, post-processing the outputs of the unconstrained models can help slightly increase the overall performance.

\begin{table}[ht!]
\centering
\small
\vspace{0cm}
\caption{Binary (macro) F1-score utility results with their corresponding stddevs for binary (multiclass) classification datasets.}
\label{tab:utility-binary_f1}
\begin{center}
\centerline{

\end{wraptable}
To ensure that the comparisons between our C-DGM models and the baseline unconstrained DGMs are meaningful, we conducted a hyperparameter search as detailed earlier in Section \ref{sec:appendix_hp}, allowing us to get close to (and sometimes even surpass) the real data utility performance.
We report the latter in Table \ref{tab:real-utility} using the same three metrics we used for measuring the synthetic data utility performance (i.e., F1-score, weighted F1-score, and Area Under the ROC Curve) following the same protocol as the one described in Appendix \ref{app:eval_protocol}. 
Comparing the results here with those for the synthetic data in Tables \ref{tab:utility-binary_f1}-\ref{tab:utility-aucroc} we can see that, overall, C-WGAN and C-CTGAN models got utility scores similar to those obtained on the real data.
For C-TableGAN \add{and C-TVAE} we notice several cases where our method helped in bringing the performance of the synthetic data closer to the real data.
In particular, for \lcld{} we notice that the real data got an F1-score 4.8\% higher than the unconstrained TableGAN, but our C-TableGAN model was able to match the real data performance (scoring slightly better than it, i.e., by 0.3\%).
It is also worth noticing that the gap between real and synthetic performance was reduced the most for the \lcld{} and \wids{} datasets, both of which are highly unbalanced. 
For instance, all C-WGAN, WGAN, C-CTGAN, and CTGAN models yield a higher F1-score than the real \lcld{} data.
We notice similar trends for the other metrics, weighted F1 and Area Under the ROC Curve.
On the other hand, none of the DGM or C-DGM models get close to the real \add{\faults{}} data.
One possible reason is the datasets' small size and multiclass nature, which makes it harder for the DGM models to capture patterns that lead to the correct different targets.

\end{document}